\documentclass[twoside]{article}

\newcommand{\norm}[1]{\left\lVert#1\right\rVert}

\newcommand{\err}{\text{err}}

\newcommand{\Ps}{P_{\scriptscriptstyle\text{src}}}
\newcommand{\Pt}{P_{\scriptscriptstyle\text{trg}}}

\newcommand{\Psrc}{P_{\scriptscriptstyle\text{src}}}

\newcommand{\Pte}{P_{\scriptscriptstyle\text{test}}}
\newcommand{\Dte}{{D}_{\scriptscriptstyle\text{test}}}

\newcommand{\D}{D}
\newcommand{\Pun}{P_{\scriptscriptstyle\text{ulb}}}
\newcommand{\Dun}{{D}_{\scriptscriptstyle\text{ulb}}}

\newcommand{\Pwarm}{P_{\scriptscriptstyle\text{warm}}}
\newcommand{\Dwarm}{{D}_{\scriptscriptstyle\text{warm}}}

\newcommand{\Pmed}{P_{\scriptscriptstyle\text{med}}}
\newcommand{\Pss}{P_{\scriptscriptstyle\text{ss}}}

\newcommand{\thetasm}{\theta_{\scriptscriptstyle{s \shortrightarrow m}}}
\newcommand{\thetast}{\theta_{\scriptscriptstyle{s \shortrightarrow t}}}
\newcommand{\thetamt}{\theta_{\scriptscriptstyle{m \shortrightarrow t}}}
\newcommand{\thetaut}{\theta_{\scriptscriptstyle{u \shortrightarrow t}}}
\newcommand{\thetawt}{\theta_{\scriptscriptstyle{w \shortrightarrow t}}}
\newcommand{\rsm}{r_{\scriptscriptstyle{s \shortrightarrow m}}}
\newcommand{\rst}{r_{\scriptscriptstyle{s \shortrightarrow t}}}
\newcommand{\rmt}{r_{\scriptscriptstyle{m \shortrightarrow t}}}
\newcommand{\hatrmt}{\hat{r}_{\scriptscriptstyle{m \shortrightarrow t}}}

\newcommand{\argmin}{\text{argmin}}

\newcommand{\abs}[1]{\left|#1\right|}
\newcommand{\brck}[1]{\left(#1\right)}

\newcommand{\brcksq}[1]{\left[#1\right]}

\newcommand{\expc}[1]{\mathbb{E}_{#1}}
\newcommand\myeq{\mkern1.5mu{=}\mkern1.5mu}

\newcommand{\batchedalls}{Batched MALLS}
\newcommand{\alls}{MALLS}
\newcommand{\coloneqqq}{\mkern2.0mu{\coloneqq}\mkern2.0mu}
\usepackage{microtype}
\usepackage{todonotes}
\usepackage{amsmath}
\usepackage{dsfont}
\usepackage{amssymb}
\usepackage{bbm}
\usepackage{amsthm}
\usepackage{wrapfig}
\usepackage{algorithm}
\usepackage{algorithmic}
\usepackage{eqparbox}
\usepackage{hyperref}

\usepackage[ruled,vlined,algo2e]{algorithm2e}
\usepackage{enumitem,kantlipsum}
\newtheorem*{theorem*}{Theorem}
\newtheorem{theorem}{Theorem}
\newtheorem{lemma}{Lemma}

% If your paper is accepted, change the options for the package
% aistats2021 as follows:
%
\usepackage[accepted]{aistats2021}
%
% This option will print headings for the title of your paper and
% headings for the authors names, plus a copyright note at the end of
% the first column of the first page.

% If you set papersize explicitly, activate the following three lines:
%\special{papersize = 8.5in, 11in}
%\setlength{\pdfpageheight}{11in}
%\setlength{\pdfpagewidth}{8.5in}

% If you use natbib package, activate the following three lines:
% \usepackage[round]{natbib}
% \renewcommand{\bibname}{References}
% \renewcommand{\bibsection}{\subsubsection*{\bibname}}

% % TODO Remove
% \rfoot{Page \thepage \hspace{1pt}}
% If you use BibTeX in apalike style, activate the following line:
% \bibliographystyle{apalike}
\usepackage{breqn}
\usepackage{stmaryrd}
\usepackage{mathtools}
\begin{document}

% If your paper is accepted and the title of your paper is very long,
% the style will print as headings an error message. Use the following
% command to supply a shorter title of your paper so that it can be
% used as headings.
%
%\runningtitle{I use this title instead because the last one was very long}

% If your paper is accepted and the number of authors is large, the
% style will print as headings an error message. Use the following
% command to supply a shorter version of the authors names so that
% they can be used as headings (for example, use only the surnames)
%
%\runningauthor{Surname 1, Surname 2, Surname 3, ...., Surname n}

\twocolumn[

\aistatstitle{Active Learning under Label Shift}

\aistatsauthor{ Eric Zhao \And Anqi Liu \And Anima Anandkumar \And Yisong Yue }

\aistatsaddress{ California Institute of Technology } ]

\begin{abstract}
% Post-Group meeting version
We address the problem of active learning under \textit{label shift}: when the class proportions of source and target domains differ.
% Importance weighting and class-balanced sampling are two popular approaches to correcting label shifts.
We introduce a ``medial distribution'' to incorporate a tradeoff between importance weighting and class-balanced sampling and propose their combined usage in active learning.
Our method is known as Mediated Active Learning under Label Shift (\alls{}). It balances the bias from class-balanced sampling and the variance from importance weighting.
We prove sample complexity and generalization guarantees for \alls{} which show active learning reduces asymptotic sample complexity even under arbitrary label shift.
We empirically demonstrate \alls{} scales to high-dimensional datasets and can reduce the sample complexity of active learning by 60\% in deep active learning tasks.

\end{abstract}

\section{Introduction}
\textbf{Label Shift} 
In many real-world applications, the target (testing) distribution of a model can differ from the source (training) distribution.
Label shift arises when class proportions differ between the source and target, but the feature distributions of each class do not.
For example, the problems of bird identification in San Francisco (SF) versus New York (NY) exhibit label shift.
While the likelihood of observing a snowy owl may differ, snowy owls should look similar in New York and San Francisco.
The well-known class-imbalance problem is a specific form of label shift where the target label distribution is uniform but the source is not.

\textbf{Active Learning under Label Shift}
Label shift poses a problem for active learning in the real world.
For example, we can train a bird classifier for New York by labeling bird images off Google.
However, due to the label shift between New York and Google Images, naive active learning algorithms will fail to collect data on birds relevant in New York.
The correction of minority underrepresentation in computer vision datasets \cite{yang_towards_2020} similarly poses an active learning under label shift problem.
Proper label shift correction must be incorporated into active learning techniques to avoid inefficient and biased data collection. 

\begin{figure}[t]
\centering
  \includegraphics[height=4cm, trim={1.2cm, 3.5cm, 2cm, 5.2cm},clip]{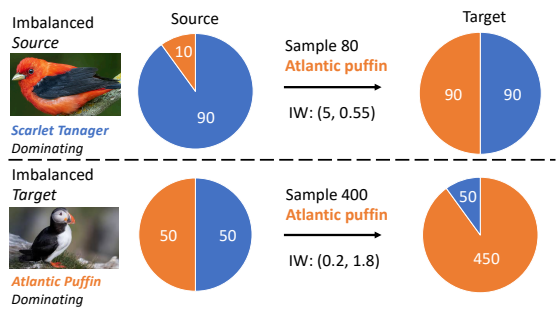}
  \caption{
  Extreme label shift examples of binary classification on $100$ datapoints.
  The arrows in the example illustrate two ways of correcting \textit{imbalanced source} and \textit{imbalanced target}. It requires larger importance weights to correct \textit{imbalanced source} than \textit{imbalanced target}, but correcting
  \textit{imbalanced target} requires more additional samples than \textit{imbalanced source}. A uniform \textit{medial distribution} can decompose any label shift into an \textit{imbalanced source} and \textit{imbalanced target}.}
\label{fig:imbalancesettings}
\end{figure}

\textbf{Importance Weighting \& Subsampling}
There are two ways to correct label shift, as shown in the two extreme cases depicted in Figure \ref{fig:imbalancesettings}. The arrows demonstrate the required additional samples and importance weights for the correction of \textit{imbalanced source} and \textit{imbalanced target}. Importance weighting can correct for label shift with rigorous theoretical guarantees \cite{lipton_detecting_2018,azizzadenesheli_regularized_2019}.
However, under large label shift, the estimation and use of importance weights result in high variance. % (see the imbalanced source case in fig.\ref{fig:imbalancesettings}).
Class-balanced sampling (subsampling) can also correct for label shift and, although lacking strong theoretical guarantees, is practical and effective \cite{aggarwal_active_2020}.
However, in active learning settings, subsampling is imprecise as only label predictions---not true labels---can be used to assign subsampling probabilities to unlabeled datapoints.

\begin{figure}[tbh]
\centering
\begin{tabular}{cccc}
   \fbox{\includegraphics[height=1.3cm]{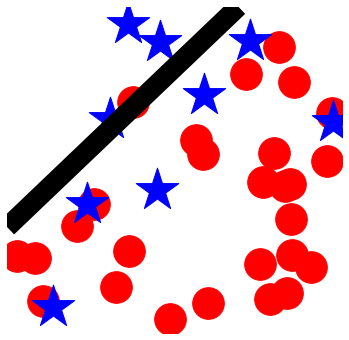}} &
   \fbox{\includegraphics[height=1.3cm]{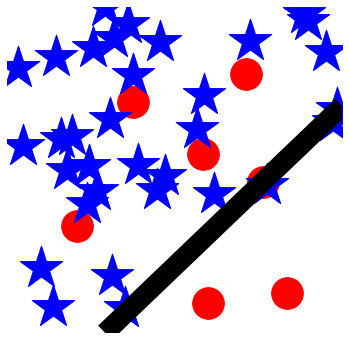}}&
   \fbox{\includegraphics[height=1.3cm]{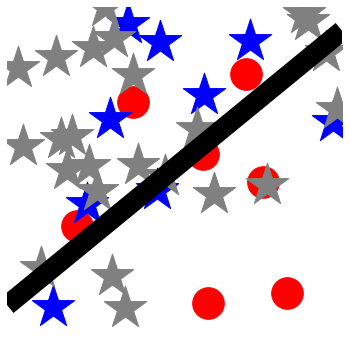}}&
   \fbox{\includegraphics[height=1.3cm]{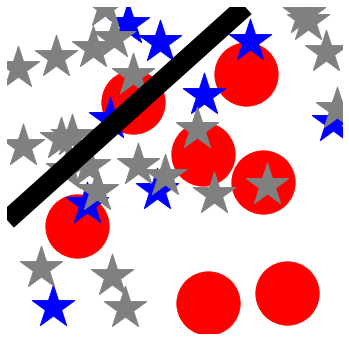}}\\
   \tiny{Target Data} & \tiny{Source Data} &\tiny{Subsampled} & \tiny{Weighted}
\end{tabular}
\caption{Noisy linear classification of stars and circles with a star-heavy source and circle-heavy target. Black lines depict the empirical risk minimizer (ERM). Ignored data are light grey. Our proposed algorithms first subsample a \textit{medial distribution}---in this case, equal parts circle and star---then importance weighting produces a  circle-dominant ERM. See sec. \ref{sec:alls}-\ref{sec:exps} for details.}
\label{fig:jointcorrection}
\end{figure}

In this paper, we answer the question: how can we use both importance weighting and subsampling for active learning---and how much should we use each? We answer this question by introducing a \textit{medial distribution} (Figure \ref{fig:jointcorrection}).
Rather than active learning on datapoints from the source distribution, datapoints are instead sampled from a \textit{medial distribution} by subsampling.
Importance weighting corrects the remaining label shift between the \textit{medial} and target distributions. 

\textbf{Our contributions}:
\begin{enumerate}[leftmargin=*,noitemsep,topsep=0pt]
\item Introduction of a \textit{medial distribution} to describe a bias-variance trade-off in label shift correction.
\item Mediated Active Learning under Label Shift (MALLS), a principled algorithm with theoretical guarantees even under label shift.
\item A batched variant of \alls{} for practitioners which integrates best practices and uncertainty sampling.
\end{enumerate}

Aggressive use of subsampling can reduce the need for, and thus variance of, importance weighting.
However, subsampling also introduces bias from its use of proxy labels.
We derive a bias-variance tradeoff that formalizes this trade-off and guides algorithm design.
In particular, we show subsampling can mitigate the effect of label shift on importance weighting variance and label complexity---but at the cost of introducing bias. % and higher error of the resulting algorithm.
We further propose a choice of uniform medial distribution, as we illustrate in Figure  \ref{fig:imbalancesettings}.

To the best of our knowledge, \alls{} is the first active learning framework for general \textit{label shift} settings.
We also derive label complexity and generalization PAC bounds for \alls{}, the first such guarantees for this setting.
We present experiments of \alls{} which corroborate our theoretical insights into the trade-off between importance weighting and subsampling.
In particular, batched \alls{} improves the sample efficiency of popular active learning algorithms by up to 60\% in the CIFAR10, CIFAR100 \cite{krizhevsky_learning_2009}, and NABirds datasets \cite{van_horn_building_2015}. We share the source code for the implementation of our method in this repository: \url{https://github.com/ericzhao28/alls}.

\section{Related Works}
\label{section:related}

\textbf{Active Learning}
Active learning has been investigated extensively from both theoretical and practical perspectives.
Disagreement-based active learning and its variants  enjoy rigorous learning guarantees and focus on the stream-based active learning setting \cite{hanneke_bound_2007, hanneke_activized_2011,balcan_agnostic_2009,hanneke_theory_2014,beygelzimer_agnostic_2010, krishnamurthy_active_2019}.
On the other hand, uncertainty sampling techniques are popular practical algorithms which have been successfuly applied to natural language processing \cite{shen_deep_2018}, computer vision \cite{yang_multi-class_2015}, and even robotics \cite{choudhury_bayesian_2020}.
We can incorporate our medial distribution design principle to arrive at both a streaming disagreement-based \alls{} approach, as well as a \batchedalls{} for uncertainty sampling.
%We build on disagreement-based active learning for \alls{}, which has rigorous guarantees. We also provide the \batchedalls{} framework for uncertainty sampling.

\textbf{Distribution Shift}
General domain adaptation theory \cite{ben-david_analysis_2007,ben-david_theory_2010, cortes_learning_2010,cortes_domain_2014} looks at joint distribution shift.
Covariate shift is the most popular refinement of joint distribution shift \cite{shimodaira_improving_2000,gretton_covariate_2009,sugiyama_covariate_2007}.
However, density estimation for joint distribution shift or covariate shift is challenging due to the high-dimension nature of input features in many applications \cite{sugiyama_density_2012,tsuboi_direct_2009,yamada_relative_2011}.
The label shift setting is comparatively less popular, but has received increased attention in recent years \cite{lipton_detecting_2018,azizzadenesheli_regularized_2019,garg_unified_2020}.
Density estimation under label shift is comparatively simpler than under covariate shift: label spaces are simpler and often finite \cite{lipton_detecting_2018}.
% In contrast, label shift assumptions make the weight estimation more tractable using black-box predictors \cite{lipton_detecting_2018,garg_unified_2020}.
% and label shift  are two special cases of distribution shift when more specific assumptions are made regarding which distribution is variant and which is invariant in the joint data distribution.
% However, joint density ratio estimation \cite{sugiyama_density_2012,tsuboi_direct_2009,yamada_relative_2011} is challenging due to the high-dimension nature of features in many applications.
% Importance weighting methods under covariate shift and label shift are asymptotically unbiased.
% In our work, we utilize RLLS \cite{azizzadenesheli_regularized_2019} for label shift correction and take advantage of its theoretical properties to help prove our guarantees in active learning. 

\textbf{Active Learning under Distribution Shift}
Active learning \cite{rai_domain_2010, matasci_svm-based_2012,deng_active_2018,su_active_2019} has been studied under joint distribution shift and covariate shift.
% We also investigate the trade-off between importance weighting and balanced sampling in different sample size regions.
Existing literature, which sometimes term the problem ``active domain adaptation'', rely on heuristics for correcting joint distribution shift \cite{chan_domain_2007,rai_domain_2010} or build on the assumption of \textit{covariate shift} \cite{saha_active_2011,yan_active_2018,chattopadhyay_joint_2013}.
While active learning with a covariate-shifted warm start guarantees label complexity bounds, it requires importance weights known a priori \cite{yan_active_2018}.
Label shift is a particularly difficult setting as, unlike covariate shift, label shift cannot be estimated from unlabeled data.

With few exceptions \cite{huang_transfer_2016}, existing literature assume active learners can query datapoints from the test domain (our \textit{canonical label shift} setting).
To the best of our knowledge, \alls{} provides the first guarantees for where test data is limited or labels cannot be queried in the test domain. 
% Current techniques for active learning under distribution shift, sometimes termed ``active domain adaptation'', either rely on heuristics for correcting general forms of distribution shift \cite{chan_domain_2007,rai_domain_2010} or build on the assumption of \textit{covariate shift} \cite{saha_active_2011,yan_active_2018,chattopadhyay_joint_2013}.

The closest existing work to active learning under label shift is active learning for imbalanced data \cite{aggarwal_active_2020, lin_active_2018}, which can be formalized as an instance of label shift with a uniform test distribution. While existing work have proposed useful heuristics like diverse sampling and class-balanced sampling, theoretical results are scarce.
% Class-balanced sampling has also been investigated in self-training for unsupervised domain adaptation \cite{zou_domain_2018}. 
% We focus on the general label shift and leverage subsampling to construct a \textit{medial distribution}, which help achieve empirical and theoretical trade-off between importance weighting and subsampling.

\section{Preliminaries}
\textbf{Active Learning under Distribution Shift}
% Introduce the AL sets
In an active learning problem, a learner $L$ actively collects a labeled dataset $S$ with the goal of maximizing the performance of the hypothesis $h \in H$ learned from $S$.
$m$ labeled datapoints sampled from some distribution $\Pwarm$ initially populate $S$ and constitute the  ``warm start'' dataset $\Dwarm$.
$L$ samples unlabeled datapoints $\Dun$ from some distribution $\Pun$, and may select up to $n$  for labeling and appending to $S$.
The learned hypothesis $h$  is evaluated on a test distribution $\Pte$.
% AL Settings
Traditional active learning assumes,
\begin{align}
    \Pun = \Pwarm = \Pte.
\end{align}
In contrast, \textit{active domain adaptation} does not assume the warm start is sampled from the test distribution:
\begin{align}
    \Pun = \Pte \text{ and } \Pwarm \neq \Pte.
\end{align}
% An example of this setting is the augmentation of a bird classification dataset collected in California using actively labeled data from the target region, say New York.
This setting, which we term \textit{canonical label shift}, is well-studied but assumes active learning occurs in the test domain.
We address the more challenging \textit{general label shift} setting (Figure \ref{fig:shiftsettings}) which drops this assumption.
In the worst case, all distributions could be different:
\begin{align}
\Pwarm \neq \Pun, \Pwarm \neq \Pte , \Pun \neq \Pte .
\end{align}
For instance, the problem of creating a bird classifier for New York by actively labeling data off Google is \textit{general label shift}.
This setting has received comparatively little attention despite its practical relevance \cite{huang_transfer_2016}: there may be a scarcity of unlabeled target data or practical issues with labeling target data, such as patient privacy or ownership rights.
% Note that \textit{canonical label shifts} are also \textit{general label shifts}.

\textbf{Label Shift}
\label{sec:rlls}
% The distribution shift problem concerns training and evaluating models on different distributions, termed the source ($\Ps$) and target ($\Pt$) respectively.
The distribution shift problem concerns training and evaluating models on different distributions, termed the source ($\Ps$) and target ($\Pt$) respectively.
We refer to a source and target in the abstract.
For instance, in the \textit{canonical label shift} setting, the source is the warm start $\Ps = \Pwarm$, and the target is the test $\Pt = \Pte$.
% Due to the difficulty of general distribution shift problems, assumptions on the nature of the distribution shift are helpful.
Unlike covariate shift, which assumes the underlying distribution shift arises solely from a change in the input distribution while conditional label probabilities are unaffected\footnote{We abuse notation and define $P(\cdot)$ as $P(x) \coloneqqq P(X \myeq x)$ or $P(y) \coloneqqq P(Y \myeq y)$ depending on the context.},
% \begin{align}
%     \Pt(X) \neq \Ps(X) , \Pt(Y|X) = \Ps(Y|X).
% \end{align}
label shift assumes distribution shift arises solely from a change in label marginals:
\begin{align}
    \Pt(Y) \neq \Ps(Y),  \Pt(X|Y) = \Ps(X|Y).
\end{align}

\begin{figure}[t]
\centering
   \includegraphics[height=3cm, trim={1.8cm, 4cm, 1.5cm, 8.7cm},clip]{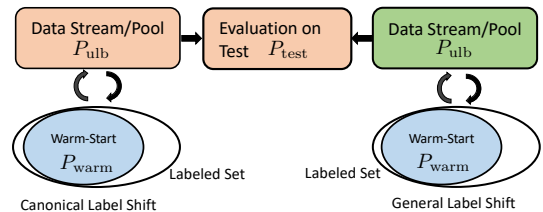}
   \caption{Diagram of active learning under label shift settings. The \textit{canonical label shift} actively samples from the test distribution. The \textit{general label shift setting} actively samples from elsewhere. Identical colors indicate identical distributions.}
\label{fig:shiftsettings}
\end{figure} 

% These shift assumptions are illustrated in Figure \ref{fig:shifttypes}.
% \begin{figure}[htb]
% \centering
% \begin{tabular}{ccc}
%   \fbox{\includegraphics[height=1.8cm, trim={3cm, 4cm, 5cm, 5cm},clip]{}}&
%   \fbox{\includegraphics[height=1.8cm, trim={1.8cm, 4cm, 3cm, 5cm},clip]{}}&
%   \fbox{\includegraphics[height=1.8cm, trim={3cm, 4cm, 5cm, 5cm},clip]{}}\\
%   \tiny{(a)} & \tiny{(b)} & \tiny{(c)}
% \end{tabular}
% \caption{Noisy binary classification data colored red or blue depending on the class. The true risk minimizer is a diagonal line from the bottom left to bottom right corner, while the black lines depict the empirical risk minimizer (ERM). (a) No distribution shift: ERM is close to optimal; (b) Covariate shift with increased density in the top-right corner: the resulting variance yields a bad ERM; (c) Label shift with increased density of the blue class: the ERM is biased and pushed to the left of the optimal separator.}
% \label{fig:shifttypes}
% \end{figure} 

\textbf{Importance Weighting (IW)}
Importance weighting is a straight-forward solution to label shift.
Weighting datapoints by likelihood ratio % $\frac{\Pt( y)}{\Ps(y)}$
produces asymptotically unbiased importance weighted estimators.
\begin{align}
    \frac{1}{n} \sum_{i=1}^n \frac{\Pt(y_i)}{\Ps(y_i)} f(x_i, y_i) \notag
    & \rightarrow \expc{x, y \sim \Ps} \brcksq{\frac{\Pt(y)}{\Ps(y)} f(x, y)} \\
    & = \expc{x, y \sim \Pt} \brcksq{f(x, y)}.
\end{align}
Following existing label shift literature, we restrict our learning problems to those with a finite $k$-class label space.
We can estimate these importance weights with only labeled data from the source distribution, unlabeled data from the target distribution, and a blackbox hypothesis $h_0$ \cite{lipton_detecting_2018}.
Let $C_h$ denote the confusion matrix for hypothesis $h$ on $\Ps$ where
$\expc{}[C_h[i, j]] \coloneqqq \Ps(h(X) \myeq y^{(i)}, Y \myeq y^{(j)})$
and $q_h$ denote a $k$-vector with $q_h[i] \coloneqqq \Pt(h(X) \myeq y^{(j)})$.
Assuming for all labels $\forall y: \Pt(y) >0 \implies \Ps(y)>0$, \cite{lipton_detecting_2018} shows importance weights $r$ are,
% \begin{align}
%     \Pt(h(X) = y_i) = \sum_{j=1}^k \Ps(h(X)=y_i, Y = y_j)\frac{\Pt(y_j) }{\Ps(y_j) }
% \end{align}
\begin{align}
    r \coloneqq \frac{\Pt (y)}{\Ps (y)} = C^{-1}_{h_0} q_{h_0}.
    \label{eq:impweight}
\end{align}
For instance, Regularized Learning under Label Shift (RLLS) \cite{azizzadenesheli_regularized_2019} finds $r$ through convex optimization of:
\begin{align}
  C^{-1}_{h_0} q_{h_0}
   \approx \argmin_{r} \norm{C_{h_0} r - q_{h_0}}_2 + \lambda \norm{r - 1}_2,
   \label{eq:bbse}
\end{align}
where $\lambda$ is some regularization constant.

\paragraph{Class-balanced Sampling (Subsampling)}
% The class imbalance problem arises when the label distributions of a dataset are highly imbalanced.
% Prior literature on active learning under class imbalance prescribe a variety of class-based sampling techniques which adjust the sampling likelihood of datapoints associated with rare classes.
A popular heuristic for addressing class imbalance in active learning is adjusting the probability of labeling a datapoint by its predicted label \cite{yang_ensemble-based_2010,park_improved_2011}. % , such that datapoints from rare classes are more likely to be labeled \cite{yang_ensemble-based_2010,park_improved_2011}.
% For instance, we can label datapoints with label $y$ with probability $\frac{1}{\sum_{i=1}^n \mathds{1}[y_i = y]}$ where $\{y_1, \dots, y_n\}$ denotes the label of the unlabeled dataset $\Dun$.
% In practice, since labels are hidden, a classifier $\phi$ is necessary to guess labels.
Traditionally, class-balanced sampling aims to ensure equal representation of each label and can be framed as a form of label shift with a uniform target label distribution.
% assumes that the target label distribution is uniform and adjusts sampling probability so that each label composes an equal proportion of the resulting distribution.
% We observe that the class imbalance problem is simply a form of label shift with a uniform target distribution $\Pt = U[Y]$ known a-priori.
We generalize class-balanced sampling to general label shift problems with potentially non-uniform targets, a practice we term \textit{subsampling}.
% We note that \textit{subsampling} is ``free'' in a sample-efficiency sense because unlabeled samples from $\Pun$ are available without cost and the target distribution of \textit{subsampling}  must be known a-priori.
We now describe two methods of subsampling. In these examples, we subsample a user-defined distribution $\Pmed$ from a source $\Ps$ using predictor $\phi$ for predicting proxy labels.
\begin{enumerate}
    \item \textbf{Subsampling with a filter $\Pss$}, where $\Pmed(y) \propto \Pss(y \myeq \phi(x)) \Ps(y \myeq \phi(x))$. Repeat until a sample is yielded: sample datapoint $x$ from $\Ps$ and, with probability $\Pss(Y \myeq \phi(x))$, yield $x$.
    \item \textbf{Subsampling with the target $\Pmed$}. Collect $N$ datapoints from $\Ps$ into a buffer $S'$, where $N$ is large. For each label $y \in Y$, randomly add $N \Pmed(y)$ datapoints from $\{x \in S' \mid \phi(x) = y\}$ into a buffer $S$. To sample from $\Pmed$, draw from $S$. 
\end{enumerate}
While in finite settings only the former yields IID samples, the two are identical in the limit by the law of large numbers.
Since subsampling strictly concerns proxy labels as thus does not require labeled samples, we assume subsampling occurs at the limit and use the two interchangeably. % depending on the context. % \footnote{Algorithm \ref{alg:subsample} is used when subsampling according to a filter distribution $\Pss$, while Algorithm \ref{alg:subsamplebatch} is used when subsampling with a known target $\Pt$.}

In the expectation, % over the randomness of subsampling,
subsampling is equivalent to importance weighting with proxy labels predicted by $\phi$:
% Formally, the effect of subsampling on estimators is equivalent to scaling terms in empirical loss estimates with a random bit $Q_i$.
\begin{align}
    \expc Q\left[\frac{1}{n} \sum_{i=1}^n Q_i f (x_i, y_i)\right] \notag
    & =
    \frac{1}{n} \sum_{i=1}^n \frac{\Pt(y_i \myeq \phi(x_i))}{\Ps(y_i \myeq \phi(x_i))} f(x_i, y_i),
    % \notag
    % \\
    % & \rightarrow
    % \expc{x, y \sim \Pt}\brcksq{ f(x, y) },
\end{align}
where $Q_i \in \{0, 1\}$ is an indicator variable for whether the $i$th datapoint is subsampled and has conditional expectation $\expc{}[Q_i \mid y_i] = P_{\text{ss}}(y_i)$.
% Here, filter distribution $\Pss$ is defined as $\Pss := \frac{P_{\text{tar}}}{P_{\text{src}}}$ for some choice of target and source distributions $P_{\text{tar}}, P_{\text{src}}$.
% Note that a choice of filter distribution $P_{ss}[y] = \frac{k \sum_{i=1}^n \mathds{1}[y_i = y]}{n}$ and target distribution $P_{tar}[y] := \mathbf{\frac{1}{k}}$ in Algorithms \ref{alg:subsample}, \ref{alg:subsamplebatch} respectively exactly coincide with their class-imbalance counterparts.

\section{Medial Distribution}
In this section, we propose the concept of a \textit{medial distribution}.
% We propose to use both subsampling and importance weighting for label shift correction in active learning.
% Figure \ref{fig:jointcorrection} illustrates this combination.
We conceptually frame subsampling as the importance sampling of an alternative distribution from the source distribution.
We term this alternative distribution the \textit{medial distribution} $\Pmed$.
As we will show, $\Pmed$ mediates a trade-off between subsampling and importance weighting (IW).

\paragraph{IW-Subsampling Trade-off}
In this section, we adopt domain adaptation notation and denote source, medial and target distributions as $\Ps, \Pmed, \Pt$.
Let $\rst \coloneqqq \Pt(y) / \Ps(y)$ denote the importance weights which shift the source to the target.
Similarly, let  $\rsm \coloneqqq \Pmed(y) / \Ps(y)$ and $\rmt \coloneqqq \Pt(y) / \Pmed(y)$ denote the importance weights \textit{to} and \textit{from} the medial distribution.
Note that $\Pmed(y), \Ps(y), \Pt(y)$ denote the likelihood of a ground-truth label $y$.
Estimated weights are accented with a hat: $\hat{r}$.
We follow \cite{lipton_detecting_2018} and formalize label shift magnitude as $\norm{\theta}$: some norm of $\theta \coloneqqq r - \mathbf{1}$, usually the L2 norm $\norm{\cdot}_2$.
A large $\norm{\theta}$ corresponds to a larger label shift and harder learning problem.
$\norm{\thetasm{}}$ is the amount of label shift corrected by subsampling and $\norm{\thetamt{}}$ is the amount corrected by importance weighting.
We can analyze this \textit{medial distribution} trade-off by introducing the following bound on the accuracy of empirical loss estimates under label shift where subsampling and importance weighting are used.
This theorem is a modification of a common error bound for offline supervised learning under label shift.
%To formalize the \textit{medial distribution} trade-off, we introduce the following bound.
% ... we introduce the following bound which analyzes the accuracy of empirical loss estimates under label shift.
% subsampling and importance weighting.
% 

\begin{theorem}
Let $\Delta$ denote the subsampling and importance weighting (trained on $n$ datapoints) estimation error of the empirical loss of $N$ datapoints:
\begin{align}
    \Delta \coloneqqq \frac{1}{N} \sum_{i=1}^N \brck{r_i \Pss(y_i) - \hat{r}_i \Pss(h(x_i)) } \ell(h(x_i), y_i),
\end{align}
where $\ell: Y \times Y \rightarrow [0, 1]$ is a loss function.
With probability $1 - 2 \delta$, for all $n \geq 1$:
\begin{align}
| \Delta |
& \leq \mathcal{O} \left(
    \frac{2 }{\sigma_{\min}} \left(
        \norm{\thetamt}_2
        \sqrt{
            \frac{\log \brck{\frac{n k}{\delta}}}{n}
        }
        \right. \right.
        +  \notag \nonumber \\ & \left. \left.
        \sqrt{
            \frac{\log \brck{\frac{ n}{\delta}}}{n}
        }
        + 
        \norm{\thetasm}_{\infty} % \text{err}(h^*) +
        \text{err}(h_0, \rmt)
    \right)
\right),
% & \left.
% \brck{\frac{2 }{\sigma_{\min}} + 1 } \norm{r_{s \rightarrow m}}_2
% \norm{r_{m \rightarrow t}}_{\infty} \brck{ \text{err}(h^*) + \sqrt{\frac{ (\text{err}(h^*) + 1) \log \brck{\frac{n'k}{\delta}}}{ n'}} } 
% \right)
\end{align}%
where $\sigma_{\min}$ denotes the smallest singular value of the confusion matrix and $\err(h_0, r)$ denotes the importance weighted $0/1$-error of a blackbox predictor $h_0$ on $\Psrc$.
% With a high probability\footnote{Where $\epsilon < 2^{-(2e-1)/\norm{r_{s \rightarrow m}}_2}$ and $\norm{r_{s \rightarrow m}}_2 > 0$.} at least $1 - \delta$, the number of effective samples is bounded above by
% \begin{equation}
%     n_{\text{eff}} \leq \frac{n}{\norm{r_{s \rightarrow m}}_{\infty}} + \log_2 \brck{ \frac{1}{\epsilon} }
% \end{equation}
\label{thm:1}
\end{theorem}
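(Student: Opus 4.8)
The plan is to split $\Delta$ into a term measuring the \emph{variance} of the medial-to-target importance-weight estimator learned from $n$ samples, and a term measuring the \emph{bias} incurred because the subsampling filter must act on proxy labels rather than true labels. Writing $r_i=\rmt(y_i)$, $\hat r_i=\hatrmt(y_i)$, and reading the predictor in $\Pss(h(x_i))$ as the blackbox proxy $h_0$ (so the error term below is $\err(h_0,\rmt)$), I would add and subtract $r_i\,\Pss(h(x_i))\,\ell(h(x_i),y_i)$:
\begin{align}
\Delta &= \underbrace{\frac{1}{N}\sum_{i=1}^N (r_i-\hat r_i)\,\Pss(h(x_i))\,\ell(h(x_i),y_i)}_{(A)} \notag \\
&\quad + \underbrace{\frac{1}{N}\sum_{i=1}^N r_i\bigl(\Pss(y_i)-\Pss(h(x_i))\bigr)\ell(h(x_i),y_i)}_{(B)} . \notag
\end{align}
Term $(A)$ will yield the two $\sqrt{\log(\cdot)/n}$ summands, term $(B)$ the $\norm{\thetasm}_\infty\err(h_0,\rmt)$ summand; the $2/\sigma_{\min}$ prefactor enters through $(A)$, and since the confusion matrix has $\sigma_{\min}\le1$, carrying it onto $(B)$ only loosens the bound.

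For $(A)$: as $\ell,\Pss\in[0,1]$ and $\rmt,\hatrmt$ depend only on the $k$-valued label,
\[
|(A)|\le \frac{1}{N}\sum_{i=1}^N\abs{\rmt(y_i)-\hatrmt(y_i)} = \sum_{j=1}^k \hat p_j\,\abs{\rmt(y^{(j)})-\hatrmt(y^{(j)})} \le \norm{\hatrmt-\rmt}_2
\]
by Cauchy--Schwarz with $\norm{\hat p}_2\le\norm{\hat p}_1=1$, $\hat p$ being the empirical label distribution of the $N$ points. I would then invoke the BBSE/RLLS finite-sample guarantee \cite{lipton_detecting_2018,azizzadenesheli_regularized_2019}, which combines the identity $\hatrmt-\rmt=\hat C^{-1}\hat q-C^{-1}q$, the perturbation inequality $\norm{\hatrmt-\rmt}_2\lesssim\sigma_{\min}^{-1}(\norm{\hat q-q}_2+\norm{\hat C-C}_{\mathrm{op}}\norm{\rmt}_2)$, the $\sqrt{\log(1/\delta)/n}$-rate concentration of $\hat q$ and $\hat C$, and $\norm{\rmt}_2\le\sqrt k+\norm{\thetamt}_2$. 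Splitting $\norm{\hat C-C}_{\mathrm{op}}\norm{\rmt}_2$ into a $\norm{\thetamt}_2$-scaled piece and a $\sqrt k$-scaled piece (absorbed into the constant summand) produces the first two terms, and a union bound over all $n\ge1$ (failure budget $\propto\delta/n^2$) makes them hold simultaneously for every $n\ge1$ while turning $\log(1/\delta)$ into $\log(nk/\delta)$ and $\log(n/\delta)$; the two concentration events account for the failure probability $2\delta$.

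For $(B)$: the summand vanishes unless $h(x_i)\neq y_i$. Because the medial distribution is built via a filter, $\Pss(y)\propto\rsm(y)=1+\thetasm(y)$ with normalizer $c=1/\max_y\rsm(y)\in(0,1]$ (here $\max_y\rsm(y)\ge1$ since $\Pmed,\Ps$ are both distributions), so $\abs{\Pss(y_i)-\Pss(h(x_i))}=c\,\abs{\thetasm(y_i)-\thetasm(h(x_i))}\le 2\norm{\thetasm}_\infty$. Hence
\begin{align}
|(B)|\le 2\norm{\thetasm}_\infty\cdot\frac{1}{N}\sum_{i=1}^N\rmt(y_i)\,\mathbbm{1}\{h(x_i)\neq y_i\}\ \longrightarrow\ 2\norm{\thetasm}_\infty\,\err(h_0,\rmt), \notag
\end{align}
exactly as $N\to\infty$ (our regime, by the subsampling-at-the-limit convention) or up to an $\mathcal O(\sqrt{\log(1/\delta)/N})$ Hoeffding correction for finite $N$. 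Summing the three pieces gives the bound.

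The routine parts are the two triangle-inequality splits and the $[0,1]$-boundedness bookkeeping; the real obstacles are (i) re-instantiating the RLLS/BBSE weight-estimation bound for the medial-to-target shift and making it hold uniformly over all $n\ge1$ with exactly the stated logarithmic factors, and (ii) pinning down the normalization of $\Pss$ so that the proxy-label perturbation is genuinely $\mathcal O(\norm{\thetasm}_\infty)$ rather than $\mathcal O(1)$ --- the step that makes subsampling's bias scale only with the residual shift $\norm{\thetasm}$ it corrects. Minor extra subtleties (a subsampling proxy not literally equal to $h_0$, or a finite random $N$) introduce hypothesis-disagreement and $\mathcal O(1/\sqrt N)$ terms that are swept into $\mathcal O(\cdot)$.
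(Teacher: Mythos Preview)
Your proposal is correct and agrees with the paper's argument in structure: both split the error into an importance-weight estimation piece controlled by the RLLS/BBSE finite-sample bound and a subsampling-bias piece scaling with $\norm{\thetasm}_\infty\cdot\err(h_0,\rmt)$. The paper's route differs in packaging. Rather than your direct sample-level decomposition $(A)+(B)$, the paper frames the subsampling bias as \emph{label-shift drift}---the population quantity $\bigl|1-\mathbb{E}_{\Pte}[\Pmed(x\mid y)/\Pte(x\mid y)]\bigr|$ measuring how far subsampling with proxy labels pushes $\Pmed$ from satisfying the label-shift assumption---bounds that drift by a short lemma whose core step (bounding $|\Pss(h_0(x))-\Pss(y)|$ on the misclassification event) is essentially your term-$(B)$ argument, and then invokes Theorem~2 of \cite{azizzadenesheli_regularized_2019} as a black box, since that theorem already combines the IW-estimation terms with a drift term. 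Your derivation is more self-contained and works directly at the empirical level; the paper's is more modular and its drift framing makes explicit the conceptual point that the subsampling bias is precisely a violation of the $\Pt(X\mid Y)=\Ps(X\mid Y)$ assumption.
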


The error bound in theorem \ref{thm:1} shows that the use of subsampling versus importance weighting results in different error bounds with different trade-offs.
In particular, the trade-off lies between the first summand, $\norm{\thetamt}_2
        \sqrt{
            \frac{\log \brck{\frac{n k}{\delta}}}{n}
        }$, and the third summand, $\norm{\thetasm}_{\infty} \text{err}(h_0, \rmt) $.
The former term corresponds to the error introduced by the use of importance weights---in particular, the variance that arises from importance weight estimation.
This variance is sensitive to the magnitude of the ground-truth importance weights, $\norm{\rmt}_2$.
Recall that in our medial distribution framework, importance weights correct the label shift between $\Pmed{}$ and $\Pt{}$.
The latter term corresponds to the subsampling estimation error---in particular, the bias introduced by the use of proxy labels for data weighting.
This bias is sensitive to the magnitude of subsampling, $\norm{\thetasm}_{\infty}$, and the accuracy of the blackbox hypothesis $\text{err}(h_0, \rmt)$.
% However, 
% Even if the blackbox hypothesis is improved throughout the learning process, the noise rate may be nonzero: $\err(h^*, \rmt) \neq 0$.
Hence, subsampling mitigates sensitivity to label shift magnitude by splitting the norm of total label shift, $\norm{\thetast}$, into the sum of two factors which scale with $\norm{\thetasm}$ and $\norm{\thetamt}$.
% Key message here: summary of take-away: we can moderate which term we want to have in our error bound. Degree of freedom, trade-off. The way of choosing it should follow simple principles. Bias-variance.

% Add bullet-point list.
% Explain what it means
% In this bound, the first two summands correspond to the importance weight estimation error.
% In particular, this error is sensitive to the quality of the blackbox predictor, $\sigma_{\min}^{-1}$, and label shift magnitude $\norm{\rmt}_2$.
% In fact, the dependence on $\norm{\rmt}_2$ is quadratic: upper bounding the first summand yields $\norm{\rmt}^2_2 \leq \mathcal{O} \brck{ \frac{n}{\log (n)}}$.
% Subsampling mitigates sensitivity to label shift magnitude by splitting the norm of total label shift, $\norm{\thetast}$, into the sum of two factors which scale with $\norm{\thetasm}$ and $\norm{\thetamt}$.
% The third summand corresponds to the subsampling estimation error, and while it grows linearly w.r.t. label shift magnitude, it introduces bias into error bound.
% % The third summand corresponds to the subsampling error.
% while the second term corresponds to subsampling error.
% While importance weighting benefits from asymptotic consistency, the label shift estimation error is very sensitive to the label shift magnitude, $\norm{\rmt}_2$.
% Subsampling mitigates this sensitivity by splitting the dependence on the norm of total label shift magnitude, $\norm{\thetast}$, into the sum of two factors which scale with $\norm{\thetasm}$ and $\norm{\thetamt}$.
% However, the error of the blackbox hypothesis introduces a bias.

The key to addressing this bias-variance trade-off is choosing a medial distribution which balances the quality of the blackbox hypothesis $\text{err}(h_0, \rmt)$ and the label shift magnitude $ \norm{\rst}$.
In effect, subsampling turns a difficult label shift problem, requiring large importance weights $\rst$, into an easier label shift problem, with smaller importance weights $\rmt$. 
% We make the additional observation that while, like \cite{azizzadenesheli_regularized_2019, lipton_detecting_2018}, we assume an exogenous blackbox predictor $h_0$, Theorem 1 suggests $h_0$ can be obtained from our endogenous label shift learning.

\paragraph{Uniform Medial Distribution}
\label{sec:medial}
We motivate a particular choice of medial distribution, a uniform label distribution, with an example.
Figure \ref{fig:imbalancesettings} depicts two fundamental label shift regimes which we term \textit{imbalanced source} and \textit{imbalanced target}.
\textit{Imbalanced target} requires smaller importance weights to correct than \textit{imbalanced source} and is hence more efficient for IW to correct.
\textit{Imbalanced source} requires fewer additional examples than \textit{imbalanced target} and is more efficient for subsampling to correct.
% Importance weighting require smaller IWs to correct \textit{imbalanced targets} (smaller importance weights are necessary).
% while subsampling more efficiently corrects \textit{imbalanced sources}  --- with fewer additional examples.
This holds more broadly.
Consider a binary classification problem with $n$ datapoints and two possible label distributions: balanced distribution $D_1$ with $n/2$ datapoints in each class, and imbalanced distribution $D_2$ with $n-1$ datapoints in the majority class.
Under \textit{imbalanced source}, where $\Psrc \coloneqqq D_2$ and $\Pt \coloneqqq D_1$, $n-2$ additional samples from the under-represented class are necessary for negating label shift.
Under \textit{imbalanced target}, where $\Psrc \coloneqqq D_1$ and $\Pt \coloneqqq D_2$,
$(n - 2)\frac{n}{2} \in \mathcal{O}(n^2)$ additional samples are necessary.

This suggests subsampling under \textit{imbalanced source} and importance weighting under \textit{imbalanced target}.
A uniform medial distribution decomposes every label shift problem into the two settings: subsample \textit{to} a uniform label distribution (imbalanced source) then importance weight away \textit{from} uniform (imbalanced target).
As we will show, this affords a convenient upper bounds on the sample complexity of active learning with a uniform medial distribution.
We will also show, experimentally, that uniform distributions serve as a reliable choice for medial distributions and perform similarly to ``square root” medial distributions that are optimal in simple cases, e.g., singleton $X$.

\section{Streaming MALLS}
\label{sec:alls}

 \begin{figure}[bt]
    \centering
    \begin{tabular}{c}
       \includegraphics[height=3.5cm, trim={3.5cm, 8cm, 5cm, 5cm},clip]{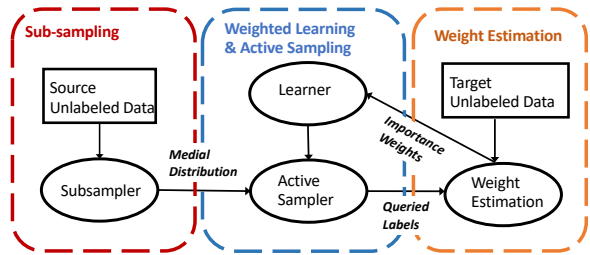}
       \end{tabular}
       \vspace{-0.10in}
     \caption{The \alls{} Algorithm.
     \alls{} consists of 3 routines: (1) class-balanced sampling from the unlabeled set, (2) actively querying for labels, and (3) importance weight estimation for correcting label shift.
    }
    \label{fig:allsflow}
\end{figure}

In this section, we present a streaming active learning algorithm: Mediated Active Learning under Label Shift (\alls{}). 
We analyze the generalization error and the label complexity of steaming \alls{} and validate the theory with experiments. We present a practical batched \alls{} approach in Sec. \ref{sec:exps}. 
We also open-source an implementation of \alls{}.

\begin{algorithm}[H]
\begin{algorithmic}
\begin{small}
\null
    \STATE  {\bfseries Input:} Warm start set ($\Dwarm$), unlabeled set ($\Dun$), test set ($\Dte$), active learning budget $n$, label shift budget $\lambda$, blackbox predictor $h_0$, medial distribution $\Pmed$, hypothesis class $\mathcal{H}$ \\
    \STATE  {\bfseries Initialize} the dataset $S \leftarrow$ warm start set;\\
    \STATE {\bfseries Subsample} the unlabeled set using $h_0$ to induce $\Pmed$.
    % \STATE {\bfseries Subsampling $\Pmed$:}
    % \STATE \quad \small{Train $\phi$ on $\log n$ datapoints from unlabeled set;} \\
    % \STATE \quad \small{ Subsample unlabeled set using $\phi$ to induce $\Pmed$}.
    \STATE {\bfseries Estimate importance weights:}
    \STATE \quad Obtain $r$ with RLLS \cite{azizzadenesheli_regularized_2019} 
    \STATE \quad using $h_0$, unlabeled test data, and $\lambda n$ labeled
    \STATE \quad datapoints from the unlabeled set;\\
    % \STATE \quad Find importance weights $r \leftarrow \text{RLLS}(\D_{r}, h_0, \Pmed)$; \\
    \STATE  {\bfseries While} $|S| < n$ {
        \STATE \quad Calculate IWAL-CAL \cite{beygelzimer_agnostic_2010} sam
        \STATE \quad pling probability $P_t$ for $x_t$ using $S$ weighted by $r$;
        \STATE \quad Label and append $(x_t, y_t)$ to $S$ with probability $P_t$;\\
       %  \STATE \quad Build new labeled dataset $S_t \coloneqqq \brck \bigcup_{y \ \in Y} \textbf{D}_y } \bigcup S_{t-1}$\\
    }
    \STATE  {\bfseries Output:} $h_T \coloneqqq \text{argmin}_{h \in H} \err(h, r)$ where $\err$ is estimated on $S$.
\caption{Mediated Active Learning under Label Shift}
\label{alg:alls}
\end{small}
\end{algorithmic}
\end{algorithm}
\textbf{Proposed Algorithm}
% Preamble
We build on a popular importance-weighted agnostic active learning algorithm IWAL-CAL \cite{beygelzimer_agnostic_2010}.
We refer to IWAL-CAL as a subprocedure and defer its details to the Appendix.
IWAL-CAL takes as input a datapoint $x_t$ and returns a sampling probability $P_t$.
\alls{} modifies the computation of $P_t$ by applying importance weights to correct for label shift in empirical loss estimates.
Specifically, IWAL-CAL depends on estimating hypothesis loss on the actively labeled dataset $S$:
\begin{equation}
    \err(h) = \frac{1}{|S|} \sum_{t=1}^{|S|} \ell(h(x_t), y_t),
\end{equation}
where $x_i, y_i$ are drawn from $S$.
\alls{} instead computes empirical loss estimates as:
\begin{equation}
    \err(h, r) = \frac{1}{|S|} \sum_{t=1}^{|S|}  r(y_t) \ell(h(x_t), y_t),
\end{equation}
where $r(y_t)$ denotes an importance weight for datapoints of label $y_t$.
\alls{} computes these importance weights $r$ by calling a blackbox label shift estimator (e.g. BBSE \cite{lipton_detecting_2018}).
Our derivations use Regularized Learning under Label Shift (RLLS) \cite{azizzadenesheli_regularized_2019}.
Since label shift estimation algorithms require an independent holdout set for estimating importance weights, \alls{} estimates importance weights on a holdout set of $\lambda n$ labeled datapoints sampled from $\Pmed$ through subsampling.
\alls{} also adds subsampling as a preprocessing step to IWAL-CAL, re-using the blackbox hypothesis used in label shift estimation as a predictor.
Thus, instead of directly sampling points from $\Dun$, IWAL-CAL instead interacts with datapoints subsampled from $\Dun$ and distributed according to $\Pmed$.
We detail the high-level flow of \alls{} in Figure \ref{fig:allsflow} and provide pseudocode in Algorithm \ref{alg:alls}. 

\begin{figure}[bthp]
        \centering
         \setlength{\tabcolsep}{-0.2pt}
        \begin{tabular}{cc}
           \includegraphics[height=4cm]{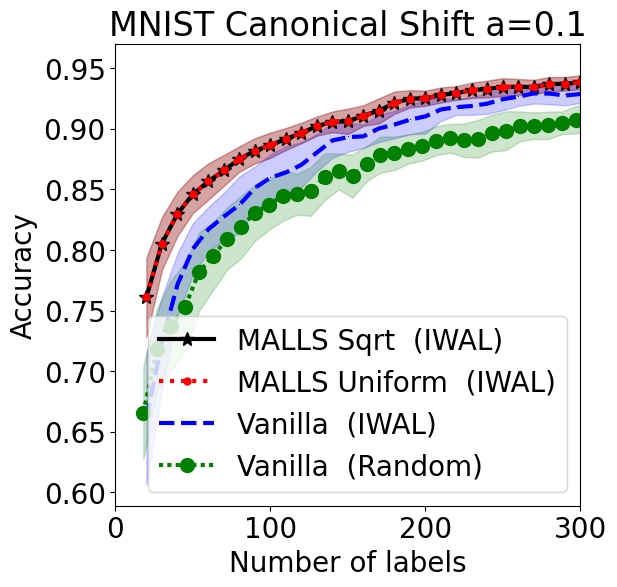}&
           \includegraphics[height=4cm]{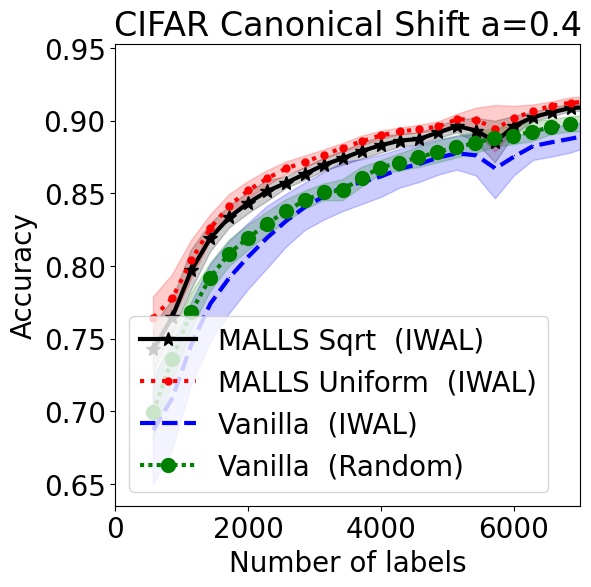}
       \end{tabular}
         \caption{ Average performance and 95\% confidence intervals on 10 runs of experiments on MNIST, 5 runs on CIFAR in a \textit{canonical label shift} setting  (defined in Preliminaries). Accuracy on (a) MNIST, (b) CIFAR. \alls{} leads to sample efficiency gains in both settings. A ``uniform'' medial performs on par with a ``square root'' medial.}
         \label{fig:alls}
\end{figure}
\begin{figure*}[t]
    \centering
     \setlength{\tabcolsep}{-0.2pt}
    \begin{tabular}{cccc}
       \includegraphics[height=4cm]{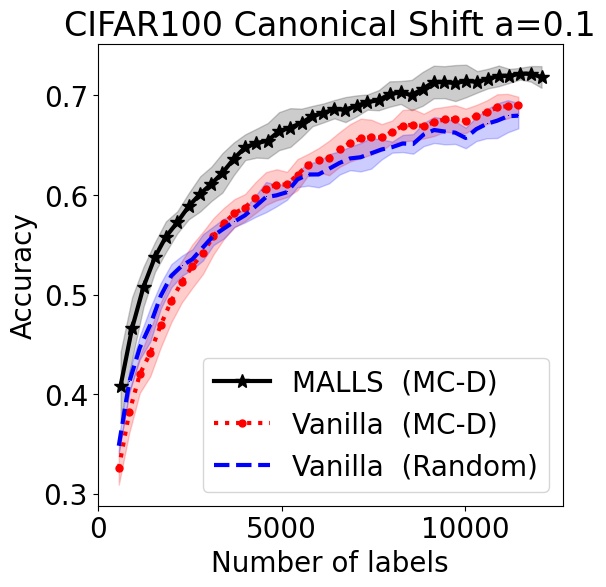}
       &
       \includegraphics[height=4cm]{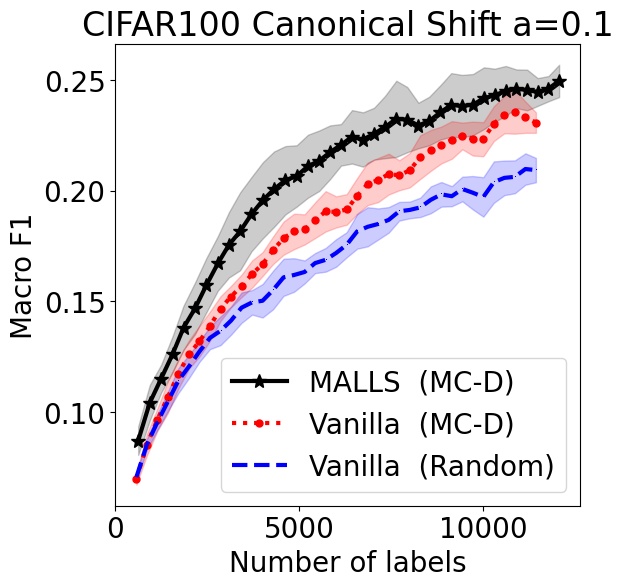}
       &
       \includegraphics[height=4cm]{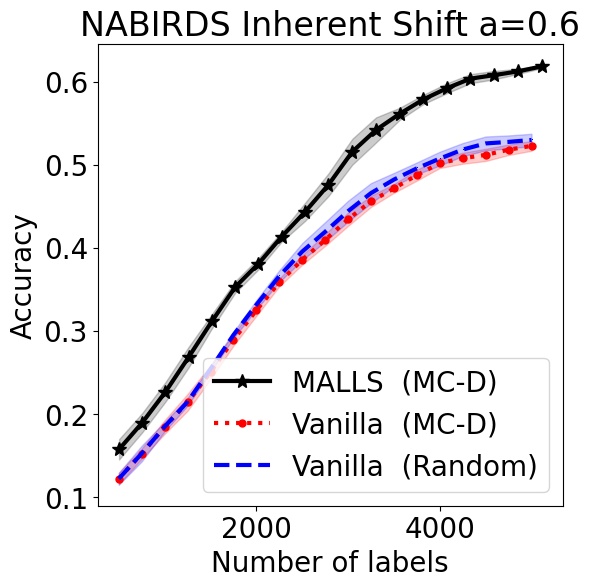} &
       \includegraphics[height=4cm]{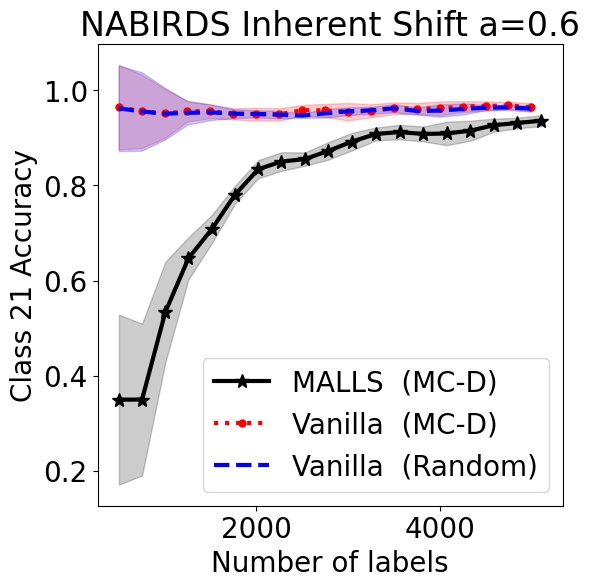}
       \\
       \tiny{(a)} &\tiny{(b)} & \tiny{(c)} & \tiny{(d)}
    \end{tabular}
   
    \caption{ Average performance and 95\% confidence intervals of 10 runs on \text{CIFAR100}, and 4 runs on NABirds.
    Plots (a)-(c) demonstrate \alls{} consistently improves accuracy and macro F1 scores.
    Plot (d) depicts the learning dynamics of \alls{} and verifies a suppression of the over-represented class (Class 21) during learning.
    }
    \label{fig:big}
\end{figure*}

 \begin{figure*}[t]
    \centering
     \setlength{\tabcolsep}{-0.2pt}
    \begin{tabular}{cccc}
      \includegraphics[height=4.1cm]{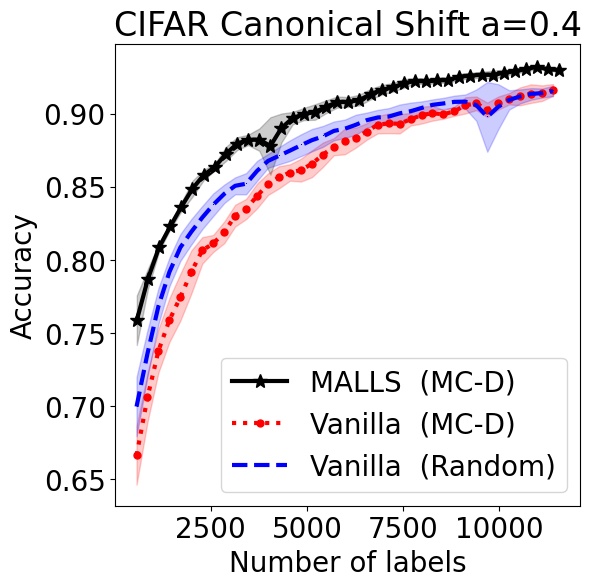}
      &
      \includegraphics[height=4.1cm]{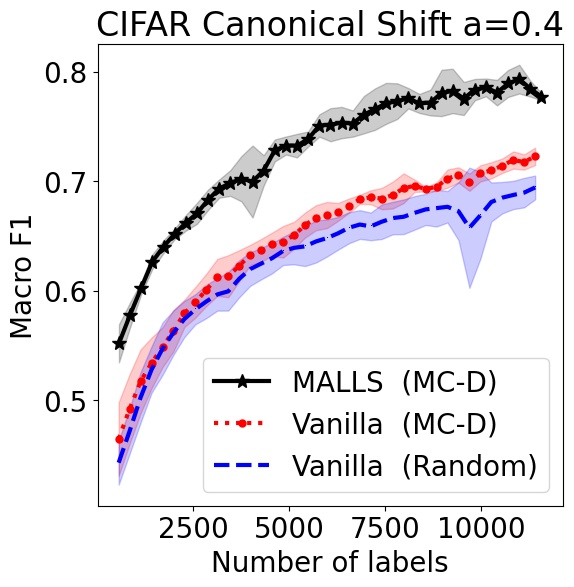}
     % & \includegraphics[height=4cm]{}
       &
       \includegraphics[height=4.1cm]{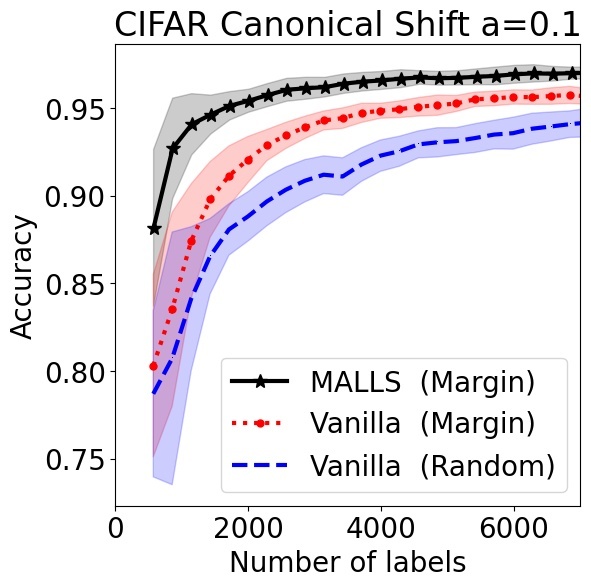}
      & \includegraphics[height=4.1cm]{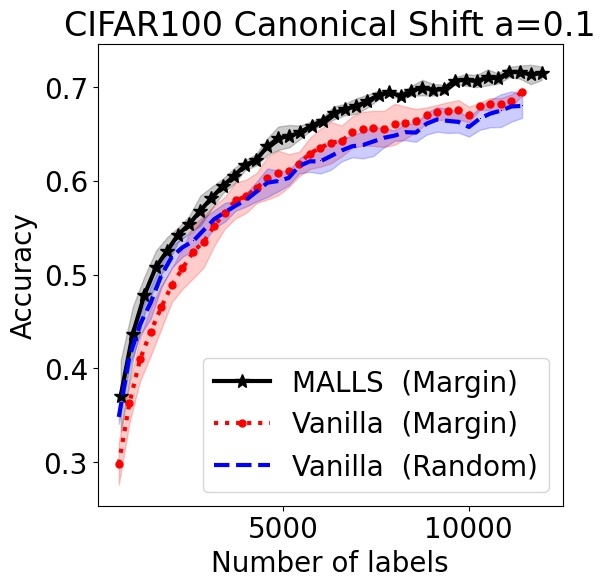}
      \\
      \tiny{(a)} &\tiny{(c)} & \tiny{(c)}  & \tiny{(d)}
    \end{tabular}
   
    \caption{ Average performance and 95\% confidence intervals of 10 runs on \text{CIFAR10} and CIFAR100.
    \alls{} consistently improves accuracy, macro F1, and weighted F1 scores.
    }
    \label{fig:cifar}
\end{figure*}

\textbf{Theoretical Analysis}
\label{sec:theory} 
We now analyze label complexity and generalization bounds for Algorithm \ref{alg:alls}.
In the canonical label shift setting, label shift naturally disappears asymptotically as the warm start dataset is diluted.
For the remainder of this section, we instead work in the more challenging \textit{general label shift} setting.
As the presence of warm start data is not particularly interesting in our analysis, we set the warm start budget $m = 0$ for reading convenience and defer the case where $m > 0$ to the Appendix for interested readers.
We also defer the case where the quantity of unlabeled test data is bounded to the Appendix.

The derivation of theoretical guarantees for \alls{} builds off our Theorem 1 and existing results from IWAL-CAL. The proof consists of two primary steps.
First, new deviation bounds are derived for IWAL-CAL to compensate for the additional variance introduced by subsampling and importance weighting.
Second, triangle inequalities plug in results from Section 3 on the bias-variance tradeoff.
The resulting deviation bound (see Appendix) yields the following guarantees for \alls{}.

\begin{theorem}
\label{thm:c_0}
With probability $> 1 - \delta$, for all $n \geq 1$,
\begin{small}
\begin{align}
   \err_Q(h_n) \leq  & \mathcal{O} \left( (1 + \frac{1}{\sigma_{\min}}) \norm{\rst}_{\infty} \err(h_0, \rmt)  \right. \notag \nonumber \\ & \left.
    +  \err_Q(h^*) +
    \sqrt{\frac{2 C_0 \log n}{n - 1}}
    + \frac{2 C_0 \log n}{n - 1} \right),
\end{align}%
\end{small}%
where $\err_Q$ denotes hypothesis error in the target domain, $n$ denotes observed datapoints including those not labeled or subsampled, and the constant $C_0$ is,
\begin{small}
\begin{align}
C_0 \in \mathcal{O} & \left(
    \frac{2}{\lambda \sigma_{\min}} \left(
        \norm{\thetamt}^2_2
        \log \brck{\frac{k}{\delta}} + \log \brck{\frac{1}{\delta}}
    \right)
\right. \notag \nonumber \\ &
\left.
    + \log \brck{\frac{|H|}{\delta}} (1 + \norm{\thetast}^2_2)
\right).
\end{align}
\end{small}
% $C_0 \in \mathcal{O} \left(
    %  \log \brck{ \frac{|H|}{\delta} } (1 + \norm{\theta_{u \rightarrow t}}_2^2 ) +  \frac{2 }{\sigma_{\min}} \left(
    %     \norm{\theta_{m \rightarrow t}}_2
    %     \sqrt{
    %         \frac{\log \brck{\frac{n k}{\delta}}}{n}
    %     }
    %     \right. \right.
    %     +  \notag \nonumber \\ & \left. \left.
    %     \sqrt{
    %         \frac{\log \brck{\frac{ n}{\delta}}}{n}
    %     }\right)
    %     \right)$.
% \small{
% \begin{align}
%     C_0 \in \mathcal{O}
%     & \left(
%      \log \brck{ \frac{|H|}{\delta} } (1 + \norm{\theta_{u \rightarrow t}}_2^2 ) \right.
%     \nonumber \\
%   &  \left. + \frac{\log \brck{ \frac{k}{\delta} }}{\sigma_{\min}^2}   d_{\infty}(\Pte || \Pmed) \norm{{\theta}_{m \rightarrow t}}_2^2 \brck{ \err_W(h^*_{\text{ulb}}) + 1}
%     \right)
% \end{align}
% }
\end{theorem}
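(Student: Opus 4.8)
The plan is to lift the generalization analysis of IWAL-CAL \cite{beygelzimer_agnostic_2010} to the compound reweighting used by \alls{}---rejection-sampling subsampling from $\Ps$ to $\Pmed$, followed by the estimated importance weights $\hat r$ carrying $\Pmed$ to $\Pt$---and then to substitute Theorem \ref{thm:1} to absorb the extra error this compound reweighting injects. Write $\widehat{\err}(h,\hat r)$ for the \alls{} empirical risk actually minimized (points of $S$ subsampled to $\Pmed$, weighted by $\hat r(y_t)$, inverse-weighted by the IWAL query probabilities $P_t$), $\widehat{\err}(h,r)$ for the idealized version with the true weights $r=\rmt$ and true (rather than proxy) labels in the subsampling step, and $\err_Q$ for the target risk; the output is $h_n=\argmin_{h\in H}\widehat{\err}(h,\hat r)$ and $h^*=\argmin_{h\in H}\err_Q(h)$. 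Starting from $\err_Q(h_n)-\err_Q(h^*)\le[\err_Q(h_n)-\widehat{\err}(h_n,\hat r)]+[\widehat{\err}(h_n,\hat r)-\widehat{\err}(h^*,\hat r)]+[\widehat{\err}(h^*,\hat r)-\err_Q(h^*)]$ and discarding the middle bracket by optimality of $h_n$, the theorem reduces to a uniform deviation bound $\sup_{h\in H}|\widehat{\err}(h,\hat r)-\err_Q(h)|$, which I split with the triangle inequality into the \emph{active-sampling deviation} $\sup_h|\widehat{\err}(h,r)-\err_Q(h)|$ and the \emph{reweighting price} $\sup_h|\widehat{\err}(h,\hat r)-\widehat{\err}(h,r)|$. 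Since the excerpt fixes the warm-start budget to $0$ and assumes unlimited unlabeled test data, no extra bookkeeping for those is needed.

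\textbf{Step 1 (active-sampling deviation).} With the true weights in place, subsampling composed with $\rmt$ equals, in expectation, reweighting $\Ps$ to $\Pt$ by $\rst$ (the Section 3 identity that subsampling equals importance weighting with proxy labels, composed with the exact $\rmt$). Hence each round's contribution to $\widehat{\err}(h,r)$ is a bounded martingale difference whose increments are $\mathcal O(\norm{\rst}_\infty/P_t)$ and whose conditional second moment is $\expc{\Ps}[\rst(y)^2]/P_t=(1+\norm{\thetast}_2^2)/P_t$, with $P_t\ge P_{\min}$ supplied by IWAL-CAL. Feeding these range and variance proxies into the Freedman/Bernstein martingale bound and the union bound over $H$ of \cite{beygelzimer_agnostic_2010} reproduces their guarantee with the constant inflated by $1+\norm{\thetast}_2^2$ and by $\norm{\rst}_\infty$; this is exactly the $\log(|H|/\delta)(1+\norm{\thetast}_2^2)$ contribution to $C_0$. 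The self-referential argument that gives the bound \emph{simultaneously for all $n\ge1$}---assume the deviation bound through round $t$, deduce the lower bound on $P_t$, extend to round $t+1$---carries over verbatim, as it uses only the \emph{shape}, not the constant, of the deviation bound.

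\textbf{Step 2 (reweighting price).} The quantity $\widehat{\err}(h,\hat r)-\widehat{\err}(h,r)$ is, up to the $1/P_t$ reweighting, precisely the error $\Delta$ of Theorem \ref{thm:1} on the labelled sample with RLLS run on the $\lambda n$ subsampled holdout points; invoking Theorem \ref{thm:1} with $n\mapsto\lambda n$ yields the two $1/\sqrt{\lambda n}$ terms $\norm{\thetamt}_2\sqrt{\log(\lambda nk/\delta)/(\lambda n)}$ and $\sqrt{\log(\lambda n/\delta)/(\lambda n)}$, which are the square roots of $\tfrac{1}{\lambda\sigma_{\min}}(\norm{\thetamt}_2^2\log(k/\delta)+\log(1/\delta))$ divided by $n$ and hence fold into the $\sqrt{C_0\log n/(n-1)}+C_0\log n/(n-1)$ shape (the $\log n$ is slack); the $1/\sigma_{\min}$ and $1/P_t\le1/P_{\min}$ factors are absorbed into the $\mathcal O$ constant. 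The one term of Theorem \ref{thm:1} that does not decay, the proxy-label bias $\norm{\thetasm}_\infty\,\err(h_0,\rmt)$, survives outside the square root; a crude bound relating $\norm{\thetasm}_\infty$---and the $1/\sigma_{\min}$ already multiplying it in Theorem \ref{thm:1}---to the overall shift magnitude $\norm{\rst}_\infty$ turns it into the leading term $\mathcal O((1+\tfrac1{\sigma_{\min}})\norm{\rst}_\infty\,\err(h_0,\rmt))$. Summing Steps 1 and 2 back through the skeleton inequality and a union bound over the $O(1)$ high-probability events (RLLS estimation of $\hat r$; the martingale deviation)---costing only a constant factor in $\delta$---gives the stated bound.

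\textbf{Main obstacle.} The delicate part is Step 1: re-deriving the IWAL-CAL deviation lemma for the compound estimator. The probabilities $P_t$ are themselves functions of the estimated, importance-weighted empirical risks on $S$, so the martingale increments are coupled to the event $\{P_t\ge P_{\min}\}$ one needs to control them, forcing the same inductive bootstrap as in \cite{beygelzimer_agnostic_2010} but with every Bernstein step carrying the $\norm{\rst}_\infty$ and $1+\norm{\thetast}_2^2$ factors; one must also use that $\hat r$ is estimated on a holdout disjoint from (hence independent of) the active stream, so the RLLS event and the martingale event combine by a plain union bound. Landing the $\sigma_{\min}$ powers, the $\lambda$ dependence, and the precise placement of $\norm{\rst}_\infty$ versus $\norm{\thetasm}_\infty$ on the stated $C_0$ is the bookkeeping-heavy core of the argument.
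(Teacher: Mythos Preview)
Your overall strategy matches the paper's: establish a uniform deviation bound for the compound (subsample $+$ reweight $+$ IWAL) estimator, then substitute it for the original deviation bound in \cite{beygelzimer_agnostic_2010}'s Theorem~2. The paper says exactly this: ``Theorem~2 follows by replacing the deviation bound in \cite{beygelzimer_agnostic_2010}'s Theorem~2 with our Theorem~\ref{thm:dev}.''

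The decomposition, however, is different. The paper never introduces an idealized estimator with true-label subsampling. It keeps the \emph{realized} subsampling indicator $\tilde Q_i$ (based on proxy labels $h_0(x_i)$) fixed throughout and splits the deviation into four pieces: $\Delta_1$ is the drift $\expc{\Pt}[f]-\expc{\Ps}[W]$ caused by proxy-label subsampling, bounded by Lemma~\ref{lemma:drift}; $\Delta_2$ is the finite-sample concentration of the true-weight estimator, handled by the Cortes--Mohri R\'enyi-divergence bound \cite{cortes_learning_2010}, which is where the $d_2(\Pte\|\Ps)\sim 1+\norm{\thetast}_2^2$ contribution to $C_0$ originates; $\Delta_3$ is the weight-estimation gap, bounded by Theorem~\ref{thm:1}; and $\Delta_4$ is the martingale over \emph{both} coins $Q_i$ and $\tilde Q_i$ with the \emph{estimated} weights $\hat r$, where a Cauchy--Schwarz step gives $M=\tfrac{1}{n}\sum_i\hat r_i\le 1+\norm{\hat\theta}_2$. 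You instead fold the $1+\norm{\thetast}_2^2$ variance directly into the Bernstein step of the martingale and do not invoke \cite{cortes_learning_2010}; that is a legitimate alternative to the paper's $\Delta_2$.

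The genuine gap is in your Step~2. By defining $\widehat{\err}(h,r)$ with ``true (rather than proxy) labels in the subsampling step,'' you change the realized $\tilde Q_i$'s; but the IWAL probabilities $P_t$, the query coins $Q_i$, and hence the entire labelled set $S$, are functions of the \emph{actual} subsampled stream. So $\widehat{\err}(h,\hat r)$ and your idealized $\widehat{\err}(h,r)$ live on different trajectories, and their difference is not a reweighting of a common sample---in particular it is not ``precisely the error $\Delta$ of Theorem~\ref{thm:1}'' up to a $1/P_t$ factor, since Theorem~\ref{thm:1}'s $\Delta$ compares two weightings of the \emph{same} $N$ points. The paper avoids this coupling problem by never swapping $\tilde Q$: the martingale $\Delta_4$ is run on the actual estimator, and the proxy-label price is paid once as the additive bias $\Delta_1$. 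Your argument is repairable---run Step~1's martingale over both $Q_i$ and the actual $\tilde Q_i$, accept that it is then biased for $\err_Q$, and add the drift from Lemma~\ref{lemma:drift} as a separate term---but as written the two-way split does not close.
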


Our generalization bound differs from the original IWAL-CAL bound in two key aspects.
(1) The use of subsampling introduces bias related to the performance of the blackbox hypothesis: $\frac{1}{\sigma_{\min}} \err(h_0, \rmt)$.
(2) In the original IWAL-CAL algorithm $C_0 \in \mathcal{O} (\log\left(|H| / \delta \right)$.
However label shift inevitably introduces, to the constant $C_0$, a dependence on the number of label classes $k$ and label shift magnitudes $\norm{\thetast}_2^2$ and $\norm{\thetamt}_2^2$.
% an inevitable dependence on label shift magnitude in the constant $C_0$.
% Label shift introduces a dependence on $k$, the number of label classes, and label shift magnitudes $\norm{\thetast}_2^2$ and $\norm{\thetamt}_2^2$.
When the subsampling error is high, Theorem 2 shows importance weighting can be used alone to preserve a consistency guarantee even under \textit{general label shift}.

\begin{theorem}
\label{thm:smplcomplexity}
With high probability\footnote{Where $\delta < 2^{(-2e-1)/\norm{\rsm}_{\infty}}$.} at least $1 - \delta$, the number of labels queried is at most:
\begin{small}
\begin{align}
    & \mathcal{O} \left( 1 + \log \brck{\frac{1}{\delta}} +  \Theta \sqrt{C_0 \frac{n}{\norm{\rsm}_{\infty}} \log n} + \Theta C_0 \log^3 n + \lambda n \right. \nonumber \\
   & \left.
    + \Theta \cdot (n - 1) \cdot \brck{
        \frac{\err_Q(h^*)}{\norm{\rsm}_{\infty}} + (1 + \frac{1}{\sigma_{\min}}) \err(h_0, \rmt)
    } \right),
\end{align}%
\end{small}%
where 
$\Theta$
denotes the disagreement coefficient \cite{balcan_agnostic_2009}.
\end{theorem}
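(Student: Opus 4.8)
The plan is to reduce Theorem~\ref{thm:smplcomplexity} to the label-complexity guarantee of the IWAL-CAL subroutine \cite{beygelzimer_agnostic_2010}, applied to the \emph{subsampled} stream, and then to charge the excess error of the competitor hypothesis to the generalization analysis behind Theorem~\ref{thm:c_0}. First I would split the labels requested by Algorithm~\ref{alg:alls} into the $\lambda n$ labels consumed by the RLLS holdout used to estimate $\rmt$ -- these contribute the additive $\lambda n$ term verbatim -- and the labels requested inside the sampling loop; it then suffices to bound the latter.

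Second, I would quantify how subsampling shortens the horizon seen by IWAL-CAL. Since the filter $\Pss$ that induces $\Pmed$ from $\Pun$ must take values in $[0,1]$, the natural normalization is $\Pss(y) = \rsm(y)/\norm{\rsm}_{\infty}$, under which each of the $n$ observed datapoints is kept independently with probability $\expc{y\sim\Pun}[\Pss(y)] = 1/\norm{\rsm}_{\infty}$. A multiplicative Chernoff bound gives that the number $\tilde n$ of surviving datapoints is $\mathcal{O}\brck{n/\norm{\rsm}_{\infty}}$ with probability at least $1 - \delta/3$, and the hypothesis $\delta < 2^{(-2e-1)/\norm{\rsm}_{\infty}}$ is exactly what the concentration step requires so that this tail is controlled uniformly over $n$ (it forces $\log(1/\delta)$ to dominate $1/\norm{\rsm}_{\infty}$ up to a constant). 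Conditioning on this event, I would invoke the IWAL-CAL label-complexity theorem for a run of length $\tilde n$: with probability $\ge 1 - \delta/3$ the number of queries is $\mathcal{O}\brck{1 + \Theta\,\rho\,\tilde n + \Theta\sqrt{C_0\,\tilde n \log \tilde n} + \Theta C_0 \log^3 \tilde n}$, where $\Theta$ is the disagreement coefficient of $\mathcal H$ under $\Pmed$, $C_0$ is the deviation-bound constant, and $\rho$ is the error of $h^*$ as measured by the subsampled-and-reweighted loss estimates the algorithm actually uses. The inflated $C_0$ -- the one carrying the dependence on $k$, $\norm{\thetamt}^2_2$ and $\norm{\thetast}^2_2$ -- is precisely the deviation constant already established for Theorem~\ref{thm:c_0}, so it is quoted rather than re-derived. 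Substituting $\tilde n = \mathcal{O}(n/\norm{\rsm}_{\infty})$ and $\log\tilde n \le \log n$ produces the $\Theta\sqrt{C_0 (n/\norm{\rsm}_{\infty})\log n}$ and $\Theta C_0 \log^3 n$ terms, and the $1 + \log(1/\delta)$ term comes from the union bound (with failure probabilities rescaled) over the at most three high-probability events.

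Third, I would bound $\rho$. In the IWAL-CAL analysis the query bound is governed by the radius to which the version space stabilizes, which in turn is controlled by the error of the competitor under the algorithm's own loss estimator; by the triangle inequality and the deviation bound behind Theorem~\ref{thm:c_0} (equivalently Theorem~\ref{thm:1}), this is at most $\err_Q(h^*)$ plus the irreducible subsampling/importance-weighting bias $\mathcal{O}\brck{(1+\tfrac{1}{\sigma_{\min}})\,\err(h_0,\rmt)}$, the remaining $\sqrt{\log n/n}$ fluctuations being dominated by terms already present. Carrying the $\err_Q(h^*)$ part through the length-$\tilde n$ run yields $\Theta(n-1)\,\err_Q(h^*)/\norm{\rsm}_{\infty}$, while bounding the bias contribution by its value over the full stream (using $\norm{\rsm}_{\infty} \ge 1$) yields the $\Theta(n-1)(1+\tfrac{1}{\sigma_{\min}})\err(h_0,\rmt)$ term; collecting the pieces gives the claimed bound.

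The main obstacle is not any single estimate but making the three sources of randomness -- the subsampling filter, the IWAL-CAL query randomness, and the RLLS estimate on the $\lambda n$ holdout -- compose cleanly: the IWAL-CAL label-complexity theorem is stated for a fixed horizon, so it must be applied conditionally on the random value of $\tilde n$ and then uniformized over $n$, which is what ties the admissible range of $\delta$ to $\norm{\rsm}_{\infty}$. A secondary subtlety is that Algorithm~\ref{alg:alls} realizes $\Pmed$ through \emph{proxy} labels $\phi(x)$ while the analysis -- and in particular the disagreement coefficient $\Theta$ -- is phrased in terms of the true-label medial distribution; reconciling the two requires the same $\err(h_0,\rmt)$-type slack that already appears in the bound, so it costs nothing extra but must be stated with care.
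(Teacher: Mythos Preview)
Your proposal is correct and follows essentially the same approach as the paper: split off the $\lambda n$ holdout labels, apply an upper-tail Chernoff bound to show the subsampled stream has length $\mathcal{O}(n/\norm{\rsm}_{\infty})$ (this is exactly the lemma the paper proves, with the same $\delta < 2^{(-2e-1)/\norm{\rsm}_{\infty}}$ hypothesis), and then invoke IWAL-CAL's label-complexity theorem with the modified deviation bound (the paper's Theorem~\ref{thm:dev}) in place of the original. Your treatment is in fact more explicit than the paper's---which is quite terse---particularly in spelling out how the noise-rate term $\rho$ picks up the $(1+1/\sigma_{\min})\err(h_0,\rmt)$ bias and in flagging the conditioning issue for composing the three sources of randomness.
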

Subsampling effectively increases the noise rate of the underlying problem.
This increases the linear noise rate term $O(n)$ inevitable in agnostic active learning labeling complexities.
However, subsampling also reduces sample complexity by a factor of $\frac{1}{\norm{\rsm}_{\infty}}$.
Importance weighting introduces a new linear label complexity term $O(\lambda n)$.
This is used to collect a holdout set for label shift estimation.
Thus, when the blackbox hypothesis is bad and strong importance weighting is necessary, the sample complexity improvements of active learning are lost.
However, given a good blackbox hypothesis, the medial distribution can be set closer to the target (small $\norm{\thetamt}$) and $\lambda$ can be set small so \alls{} retains the sample complexity gains of active learning.
% this is hopeless because we need to rely on IW
% and thus we incur the linear label complexity again
% Besides the changes to $C_0$ noted in our discussion of the generalization bound, we note two differences with the sample complexity given in traditional IWAL-CAL.
% First, we introduce two additional linear terms into the sample complexity: one corresponding to the bias of subsampling (again proportional to noise rate of the subsampling problem) and one corresponding to the accumulation of holdout set $H_t$ (proportional to $\lambda$).
% These accompany a linear term proportional to the noise rate of the original learning problem, which is also present in the original IWAL-CAL bounds and unavoidable in agnostic active learning.

\textbf{Experiments}
We empirically validate \alls{} with experiments on synthetic label shift problems on the MNIST and CIFAR benchmark datasets.
These experiments employ a bootstrap approximation of IWAL-CAL recommended in \cite{beygelzimer_importance_2009} using a version space of 8 Resnet-18 models.
The blackbox hypothesis is obtained by training a standalone model on the warm start data split.
Random sampling and vanilla active learning (IWAL-CAL) are compared against \alls{} for two choices of medial distributions:
\begin{enumerate}[leftmargin=*,noitemsep,topsep=0pt]
    \item A ``square root'' medial distribution where $\rsm = \rmt = \sqrt{\rst}$. This is a bare optimization of the error tradeoff in Theorem 1.
    \item A ``uniform'' medial distribution motivated by Figure \ref{fig:imbalancesettings} and intuition of \textit{imbalanced sources/targets}.
\end{enumerate}
The results, shown in Figure \ref{fig:alls} demonstrate significant sample efficiency gains due to \alls{}, even when vanilla IWAL no longer beats random sampling.
Despite its simplicity, the performance of the uniform medial distribution is indistinguishable from the theoretically motivated ``square root'' medial distribution.

\section{\batchedalls{}}
\label{sec:exps}
 
We present a variant of \alls{} for practitioners which integrates best practices for scaling label shift estimation.
This variant, depicted in Algorithm \ref{alg:prac}, is a framework for batched active learning that supports any blackbox uncertainty sampling algorithm.

%  \begin{figure*}[t]
%     \centering
%      \setlength{\tabcolsep}{-0.2pt}
%     \begin{tabular}{ccc}
%       \includegraphics[height=4cm]{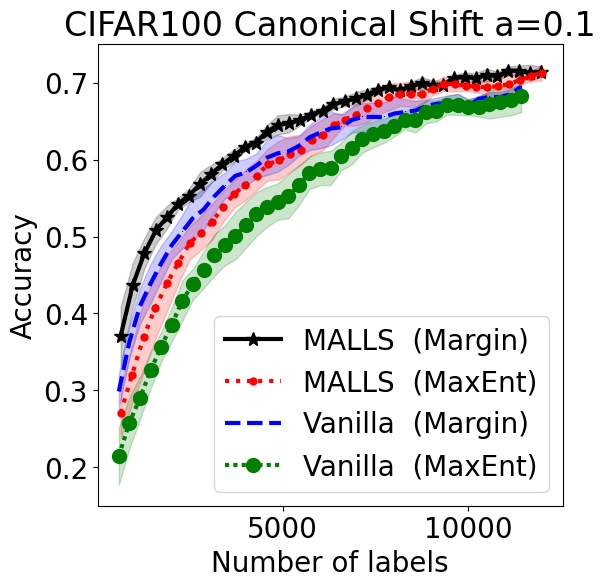}
%       &
%       \includegraphics[height=4cm]{figs/margin_cifar100.jpg}
%       &
%       \includegraphics[height=4cm]{figs/margin_cifar.jpg}
%       \\
%       \tiny{(a)} &\tiny{(b)} & \tiny{(c)}
%     \end{tabular}
   
%     \caption{ Average performance and 95\% confidence intervals of 10 runs on \text{CIFAR10} and \text{CIFAR100}.
%     The performance gains realized by \alls{} generalizes to popular choices for uncertainty sampling algorithms.
%     }
%     \label{fig:unc}
% \end{figure*}

 \begin{figure*}[bthp]
        \centering
         \setlength{\tabcolsep}{-0.2pt}
        \begin{tabular}{cccc}
           \includegraphics[height=4cm]{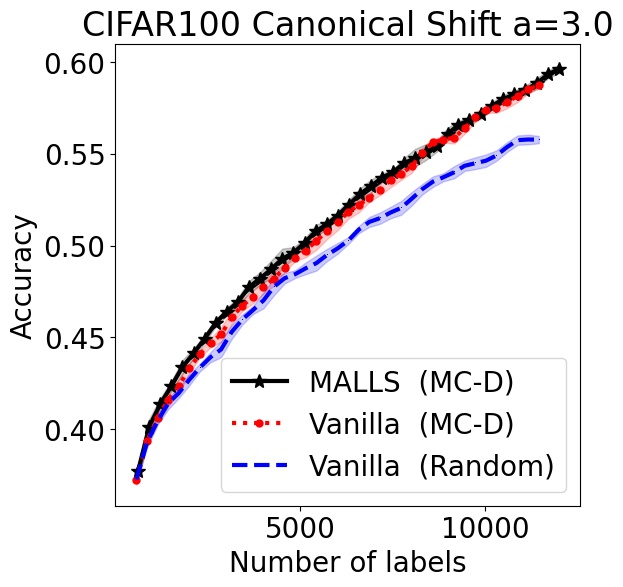}&
           \includegraphics[height=4cm]{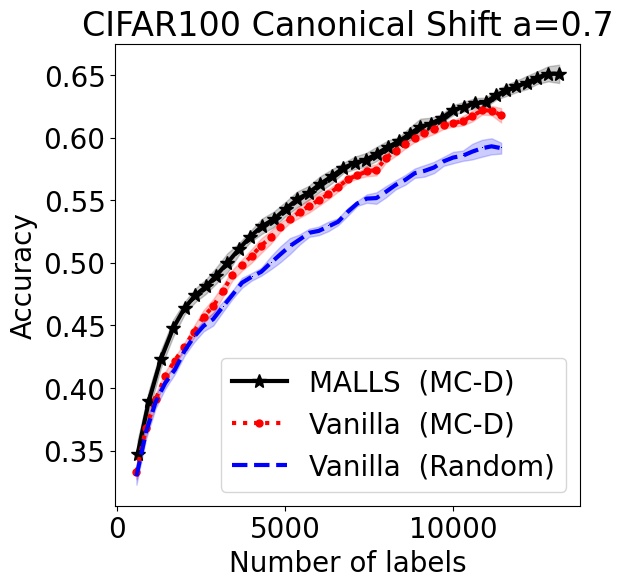}&
           \includegraphics[height=4cm]{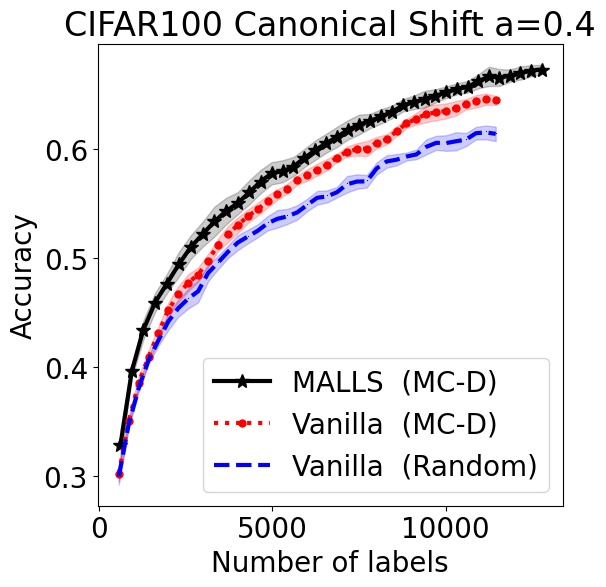}&
           \includegraphics[height=4cm]{figs/BasicBALD_cifar100_dirichlet_mix_warm_0,4_alpha_0,1_1000Accuracy.jpg}
       \end{tabular}
         \caption{ Average performance and 95\% confidence intervals on 10 runs of experiments on CIFAR100 in the \textit{canonical label shift} setting (defined in Preliminaries). In order of increasing label shift magnitude: (a), (b), (c), (d). MALLS performance gains scale by label shift magnitude. }
         \label{fig:alphas}
    \end{figure*}
    
\textbf{Best Practices}
\batchedalls{} incorporates five important techniques for scaling the real world practice of label shift correction. % relating to the broader ALLS algorithm and two heuristics for scalable label shift estimation.
\begin{enumerate}[leftmargin=*,noitemsep,topsep=0pt]
    \item Forgo use of independent holdout sets and instead learn importance weights $r$ on the main dataset $S$. % While holdout sets are necessary to preserve statistical independence for theoretical purposes, there is no significant empirical motivation to do so \cite{azizzadenesheli_regularized_2019}.
    \item Motivated by Theorem 1, \batchedalls{} uses the current active learning predictor for subsampling.
    \item Approximate subsampling % for a batched setting. \batchedalls{} replaces the subsampling procedure used in Algorithm \ref{alg:alls}
    with a generalization of class-balanced sampling that is compatible with batch-mode active learning  \cite{aggarwal_active_2020}. 
    \item Apply importance weights during inference time. \batchedalls{} replaces the importance weighting of empirical loss estimates with posterior regularization, a practice closely related to the expectation-maximization algorithm in \cite{saerens_adjusting_2002}. % This reduces variance while preserving the use of the label shift information to correct uncertainty estimation.
    \item Use hypotheses learned with importance weights as blackbox predictors to learn better importance weights. % This is an iterative process which steers the blackbox hypothesis's confusion matrix away from singularity at the cost of statistical contamination.
    We term this iterative reweighting.
\end{enumerate}

\begin{algorithm}[H]
\begin{algorithmic}
\begin{small}
\null
    \STATE  {\bfseries Input:} Warm start set, unlabeled pool $\Dun$, test set,
                               number of batches $T$, medial distribution $\Pmed$,
                               uncertainty quantifier $\pi$, batch size $B$; hypothesis class $\mathcal{H}$\\
    \STATE  {\bfseries Initialize} the labeled dataset $S_0 \leftarrow $ the warm start set;\\
    \STATE {\bfseries Initialize} hypothesis $\phi$ by training on warm start set; \\
    \STATE  {\bfseries For} $t \in 1, \dots, T$ {
        \STATE \quad Find importance weights $r_t \leftarrow \text{RLLS}(S_t, \phi, \Pmed)$; \\
        \STATE \quad Train hypothesis $\phi$ on $S$ weighted by $r_t$; \\
        \STATE \quad  {\bfseries For} $y \in Y$ {
            \STATE \quad \quad Number of datapoints to collect $k \coloneqqq B \times \Pmed(y)$
            \STATE \quad \quad Find top-$k$ most uncertain datapoints of label $y$:\\
            \STATE \quad \quad \quad \quad $\D_y \coloneqqq \text{top-}k(\pi, \{x \in \Dun \setminus S_{t-1} \mid \phi(x) = y\})$\\
            \STATE \quad \quad Label and append the top-$k$ datapoints, $\D_y$, to $S_t$
        }
    }
    \STATE  {\bfseries Output:} $h_T = \text{argmin}\{\text{err}(h, S_T, r_T) : h \in \mathcal{H}\}$

\caption{\batchedalls{}}
\label{alg:prac}
\end{small}
\end{algorithmic}
\end{algorithm}

 \begin{figure*}[bthp]
        \centering
        \setlength{\tabcolsep}{-0.5pt}
        \begin{tabular}{cccc}
           \includegraphics[height=4cm]{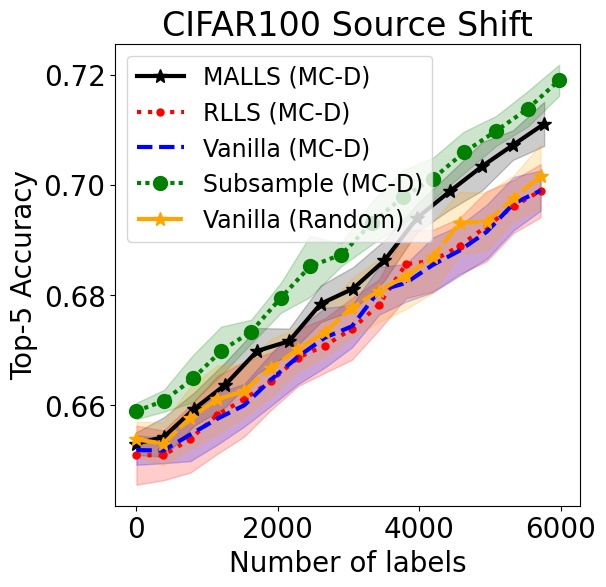}
           &
           \includegraphics[height=4cm]{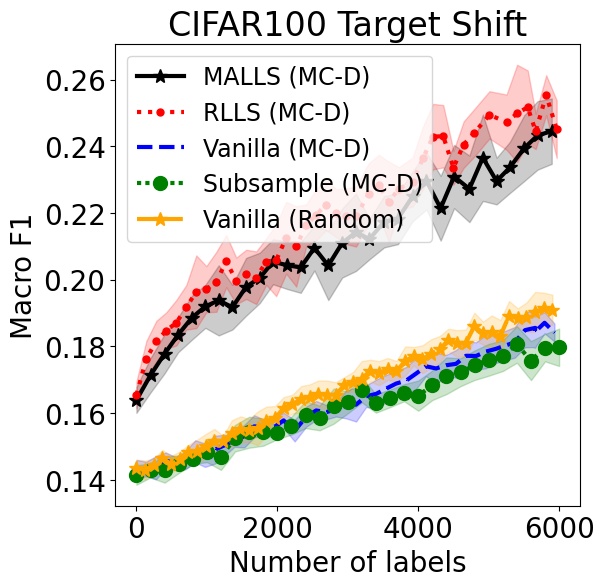}&
      \includegraphics[height=4cm]{figs/multi2_cifar100_dirichlet_mix_warm_0,4_alpha_0,1_1000Accuracy.jpg}
      &
       \includegraphics[height=4cm]{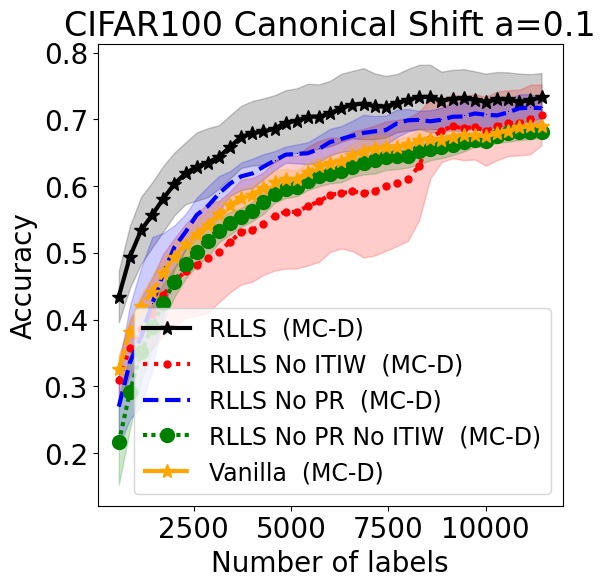}%&
      \\
            \tiny{(a)} & \tiny{(b)} &\tiny{(c)}   &     \tiny{(d)}  
         
       \end{tabular}
       
         \caption{Average and 95\% confidence intervals on 10 runs of experiments on CIFAR100. Figures (a)(b) depict \textit{general label shift} problems, (c)(d) depict \textit{canonical label shift}  (defined in Preliminaries). (a) Top-5 accuracy under \textit{imbalanced source}, subsampling outperforms importance weighting; (b) Macro F1 under \textit{imbalanced target}, importance weighting outperforms subsampling; (c) \batchedalls{} provides performance gains for multiple popular uncertainty sampling methods; (d) \batchedalls{}'s best practices provides significant gains on performance.
        }
        \label{fig:ablation}
     
    \end{figure*}

\textbf{Experiments}
We demonstrate the \batchedalls{} framework on the ornithology dataset NABirds \cite{van_horn_building_2015} and the benchmark datasets CIFAR10 \& CIFAR100 \cite{krizhevsky_learning_2009}.
Our experiments show \alls{} improves active learning performance under a diverse range of label shift scenarios.

\textbf{Methods}
We evaluate our \batchedalls{} framework on several uncertainty sampling algorithms: (1) Monte Carlo dropout (MC-D) %, where uncertainty is given by disagreement between forward passes due to dropout;
\cite{gal_dropout_2016}; (2) maximum entropy sampling (MaxEnt)%
% , given by the entropy of predictive distributions
; and (3) maximum margin (Margin). 
%: given by the gap in logits of the most and second-most likely classes.
We compare against random sampling and active learning without \alls{} (marked \textit{Vanilla}).
In ablation studies, we also compare against only importance weighting or subsampling.
As in Section \ref{sec:alls}, the blackbox hypothesis is obtained by training a model on the warm start data split.

\textbf{Primary Results}
We present our primary results in Figures \ref{fig:big}-\ref{fig:cifar}.
These experiments apply \alls{} to the batch-mode pool-based active learning of Resnet18 models.
The label shift in the NABirds dataset arises from a naturally occurring class imbalance where a dominant class constitutes a near majority of all data \cite{aggarwal_active_2020}.
We adopt this imbalance and assume a uniform test label distribution. %; this is an \textit{imbalanced source} problem.
We artificially induce \textit{canonical label shift} in the CIFAR10 and CIFAR100 experiments by applying \cite{lipton_detecting_2018}'s \textit{Dirichlet Shift} procedure to the unlabeled $\Dun$ and test $\Dte$ datasets. % each of the source and target data splits.

In all experiments, \alls{} significantly improves both accuracy and macro F1 scores.
In synthetic shift experiments, \alls{} reduces sample complexity by up to half an order of magnitude.

\textbf{Learning Dynamics of MALLS}
Figure \ref{fig:big}(d) details the learning evolution of \alls{} by depicting a dominant class's accuracy over training time.
The class's accuracy initially declines due to the class's low importance weights, but recovers as the label shift is corrected and the dominant class's importance weight grows.

\textbf{Uncertainty Measures}
Figure \ref{fig:cifar}(c)(d) and \ref{fig:ablation}(c) demonstrates the performance improvements from using \batchedalls{} generalize to several popular uncertainty sampling algorithms.
Importantly, the gains realized by using \batchedalls{} is largely independent of the choice of uncertainty sampling.

% Figure \ref{fig:ablation}(a-b,d) depicts two collections of ablation studies for \alls{}.

\textbf{Imbalanced Source v.s. Imbalanced Target}
Figures \ref{fig:ablation}(a)(b) depicts synthetic \textit{general label shift} problems under \textit{imbalanced source} and \textit{imbalanced target} settings on CIFAR100.
We compare \alls{} against the use of subsampling or importance weighting alone to investigate the trade-off implied by theory.
% To investigate the trade-off between subsampling and importance weighting suggested by theory, we induce synthetic \textit{imbalanced source} and \textit{imbalanced target} scenarios on CIFAR100 in an ablation study depicted in Figure \ref{fig:ablation}.
% % To compare the strengths of these strategies, we compare \alls{} against the use of subsampling or importance weighting alone.
% We again induce synthetic shifts using \cite{lipton_detecting_2018}'s \textit{Dirichlet Shift} procedure.
While Figure \ref{fig:ablation}(a) demonstrates that subsampling accounts for \alls{}'s performance gains under \textit{imbalanced source},  Figure \ref{fig:ablation}(b) demonstrates that importance weighting accounts for \alls{}'s performance gains under \textit{imbalanced target}. % leads to gains under \textit{imbalanced target}.
%Figure \ref{fig:ablation}(c) demonstrates that, when properly balanced under \alls{}, the joint usage outperforms the individual use of either technique.
This corroborates our theoretical analysis.

% \begin{figure}[t]
%     \centering
%      \setlength{\tabcolsep}{-0.2pt}
%     \begin{tabular}{cc}
%   \end{tabular}
%      \caption{ Average performance and 95\% confidence intervals on 10 runs of experiments on CIFAR100 in a \textit{canonical label shift} setting. (a) Accuracy using MC-D; (b) Macro F1 using MC-D. 
%      Posterior regularization lowers variance (versus importance weighting) and especially improves early-stage performance.
%      Iterative reweighting similarly introduces consistent performance gains.
%      Combining the two provides additional gains in macro F1 scores.
%      }
%      \label{fig:heuristics}
% \end{figure}

\textbf{Label Shift Magnitude}
These experiments evaluate MALLS on different magnitudes of label shift, where label shift is induced according to Dirichlet distributions for varying choices of $\alpha$.
Note that shift magnitude is inversely correlated with $\alpha$---smaller $\alpha$ denotes a larger shift.
Figure \ref{fig:alphas} demonstrates that the performance gains introduced by RLLS scale with the magnitude of the label shift.
The results also confirm that the effectiveness of active learning drops under strong label shift.
Plot (a) confirms that even when label shift is negligible, MALLS does not perform significantly worse than vanilla active learning.

\textbf{Best Practices}
Figures \ref{fig:ablation}(d) compares performance when \batchedalls{}'s heuristics of posterior regularization (PR) and iterative reweighting (ITIW) are not used.
Posterior regularization lowers variance (versus importance weighting) and especially improves early-stage performance. Iterative reweighting similarly introduces consistent performance gains. Combining them provides additional gains.

\section{Conclusion}
In this paper, we propose an algorithm for active learning under label shift, \alls{}, with strong label complexity and generalization bounds.
We also introduce a framework, \batchedalls{}, for practitioners to address label shift in real world uncertainty sampling applications.
% Importance
In many applications that require manually labeling of data, like natural language processing and computer vision, an extension of the techniques we explore in \alls{} may help mitigate bias in the data collection process.
% Future work
Many problems of theoretical importance---such as cost-sensitive, multi-domain, and Neyman-Pearson settings---share a fundamental connection with the label shift problem.
We believe \alls{} can be extended to provide novel results in these settings as well.

% Data distribution bias in training and testing has a huge impact on model behaviors in machine learning \cite{yang_towards_2020}.
% Our work generally tackles this problem by incorporating active data collection to correct distribution shift.

\subsubsection*{Acknowledgements}
Anqi Liu is supported by the PIMCO Postdoctoral Fellowship. Prof. Anandkumar is supported by Bren endowed Chair, faculty awards from Microsoft, Google, and Adobe, Beyond Limits, and LwLL grants.  This work is also supported by funding from Raytheon and NASA TRISH.  

%\clearpage
\bibliography{references}

\begin{thebibliography}{}

\bibitem[Aggarwal et~al., 2020]{aggarwal_active_2020}
Aggarwal, U., Popescu, A., and Hudelot, C. (2020).
\newblock Active {Learning} for {Imbalanced} {Datasets}.
\newblock pages 1428--1437.

\bibitem[Azizzadenesheli et~al., 2019]{azizzadenesheli_regularized_2019}
Azizzadenesheli, K., Liu, A., Yang, F., and Anandkumar, A. (2019).
\newblock Regularized {Learning} for {Domain} {Adaptation} under {Label}
  {Shifts}.
\newblock {\em arXiv:1903.09734 [cs, stat]}.
\newblock arXiv: 1903.09734.

\bibitem[Balcan et~al., 2009]{balcan_agnostic_2009}
Balcan, M.-F., Beygelzimer, A., and Langford, J. (2009).
\newblock Agnostic active learning.
\newblock {\em Journal of Computer and System Sciences}, 75(1):78--89.

\bibitem[Ben-David et~al., 2010]{ben-david_theory_2010}
Ben-David, S., Blitzer, J., Crammer, K., Kulesza, A., Pereira, F., and Vaughan,
  J.~W. (2010).
\newblock A theory of learning from different domains.
\newblock {\em Machine Learning}, 79(1-2):151--175.

\bibitem[Ben-David et~al., 2007]{ben-david_analysis_2007}
Ben-David, S., Blitzer, J., Crammer, K., and Pereira, F. (2007).
\newblock Analysis of {Representations} for {Domain} {Adaptation}.
\newblock In Schölkopf, B., Platt, J.~C., and Hoffman, T., editors, {\em
  Advances in {Neural} {Information} {Processing} {Systems} 19}, pages
  137--144. MIT Press.

\bibitem[Beygelzimer et~al., 2009]{beygelzimer_importance_2009}
Beygelzimer, A., Dasgupta, S., and Langford, J. (2009).
\newblock Importance {Weighted} {Active} {Learning}.
\newblock {\em arXiv:0812.4952 [cs]}.
\newblock arXiv: 0812.4952.

\bibitem[Beygelzimer et~al., 2010]{beygelzimer_agnostic_2010}
Beygelzimer, A., Hsu, D., Langford, J., and Zhang, T. (2010).
\newblock Agnostic {Active} {Learning} {Without} {Constraints}.
\newblock {\em arXiv:1006.2588 [cs]}.
\newblock arXiv: 1006.2588.

\bibitem[Chan and Ng, 2007]{chan_domain_2007}
Chan, Y.~S. and Ng, H.~T. (2007).
\newblock Domain {Adaptation} with {Active} {Learning} for {Word} {Sense}
  {Disambiguation}.
\newblock In {\em Proceedings of the 45th {Annual} {Meeting} of the
  {Association} of {Computational} {Linguistics}}, pages 49--56, Prague, Czech
  Republic. Association for Computational Linguistics.

\bibitem[Chattopadhyay et~al., 2013]{chattopadhyay_joint_2013}
Chattopadhyay, R., Fan, W., Davidson, I., Panchanathan, S., and Ye, J. (2013).
\newblock Joint transfer and batch-mode active learning.
\newblock In {\em 30th {International} {Conference} on {Machine} {Learning},
  {ICML} 2013}, pages 1290--1298. International Machine Learning Society
  (IMLS).

\bibitem[Choudhury and Srinivasa, 2020]{choudhury_bayesian_2020}
Choudhury, S. and Srinivasa, S.~S. (2020).
\newblock A {Bayesian} {Active} {Learning} {Approach} to {Adaptive} {Motion}
  {Planning}.
\newblock In {\em Robotics {Research}}, pages 33--40. Springer.

\bibitem[Cortes et~al., 2010]{cortes_learning_2010}
Cortes, C., Mansour, Y., and Mohri, M. (2010).
\newblock Learning {Bounds} for {Importance} {Weighting}.
\newblock In Lafferty, J.~D., Williams, C. K.~I., Shawe-Taylor, J., Zemel,
  R.~S., and Culotta, A., editors, {\em Advances in {Neural} {Information}
  {Processing} {Systems} 23}, pages 442--450. Curran Associates, Inc.

\bibitem[Cortes and Mohri, 2014]{cortes_domain_2014}
Cortes, C. and Mohri, M. (2014).
\newblock Domain adaptation and sample bias correction theory and algorithm for
  regression.
\newblock {\em Theoretical Computer Science}, 519:103--126.
\newblock Publisher: Elsevier.

\bibitem[Deng et~al., 2018]{deng_active_2018}
Deng, C., Liu, X., Li, C., and Tao, D. (2018).
\newblock Active multi-kernel domain adaptation for hyperspectral image
  classification.
\newblock {\em Pattern Recognition}, 77:306--315.
\newblock Publisher: Elsevier.

\bibitem[Gal and Ghahramani, 2016]{gal_dropout_2016}
Gal, Y. and Ghahramani, Z. (2016).
\newblock Dropout as a {Bayesian} {Approximation}: {Representing} {Model}
  {Uncertainty} in {Deep} {Learning}.
\newblock {\em arXiv:1506.02142 [cs, stat]}.
\newblock arXiv: 1506.02142.

\bibitem[Garg et~al., 2020]{garg_unified_2020}
Garg, S., Wu, Y., Balakrishnan, S., and Lipton, Z.~C. (2020).
\newblock A {Unified} {View} of {Label} {Shift} {Estimation}.
\newblock {\em arXiv:2003.07554 [cs, stat]}.
\newblock arXiv: 2003.07554.

\bibitem[Gretton et~al., 2009]{gretton_covariate_2009}
Gretton, A., Smola, A., Huang, J., Schmittfull, M., Borgwardt, K., Schölkopf,
  B., Candela, J., Sugiyama, M., Schwaighofer, A., and Lawrence, N. (2009).
\newblock Covariate {Shift} by {Kernel} {Mean} {Matching}.
\newblock {\em Dataset Shift in Machine Learning, 131-160 (2009)}.

\bibitem[Hanneke, 2007]{hanneke_bound_2007}
Hanneke, S. (2007).
\newblock A bound on the label complexity of agnostic active learning.
\newblock In {\em Proceedings of the 24th international conference on {Machine}
  learning}, {ICML} '07, pages 353--360, Corvalis, Oregon, USA. Association for
  Computing Machinery.

\bibitem[Hanneke, 2011]{hanneke_activized_2011}
Hanneke, S. (2011).
\newblock Activized {Learning}: {Transforming} {Passive} to {Active} with
  {Improved} {Label} {Complexity}.
\newblock {\em arXiv:1108.1766 [cs, math, stat]}.
\newblock arXiv: 1108.1766.

\bibitem[Hanneke, 2014]{hanneke_theory_2014}
Hanneke, S. (2014).
\newblock Theory of {Disagreement}-{Based} {Active} {Learning}.
\newblock {\em Foundations and Trends® in Machine Learning}, 7(2-3):131--309.
\newblock Publisher: Now Publishers, Inc.

\bibitem[Huang and Chen, 2016]{huang_transfer_2016}
Huang, S.-J. and Chen, S. (2016).
\newblock Transfer learning with active queries from source domain.
\newblock In {\em Proceedings of the {Twenty}-{Fifth} {International} {Joint}
  {Conference} on {Artificial} {Intelligence}}, {IJCAI}'16, pages 1592--1598,
  New York, New York, USA. AAAI Press.

\bibitem[Krishnamurthy et~al., 2019]{krishnamurthy_active_2019}
Krishnamurthy, A., Agarwal, A., Huang, T.-K., Daume~III, H., and Langford, J.
  (2019).
\newblock Active {Learning} for {Cost}-{Sensitive} {Classification}.
\newblock {\em arXiv:1703.01014 [cs, stat]}.
\newblock arXiv: 1703.01014.

\bibitem[Krizhevsky, 2009]{krizhevsky_learning_2009}
Krizhevsky, A. (2009).
\newblock Learning {Multiple} {Layers} of {Features} from {Tiny} {Images}.

\bibitem[Lin et~al., 2018]{lin_active_2018}
Lin, C.~H., Mausam, M., and Weld, D.~S. (2018).
\newblock Active {Learning} with {Unbalanced} {Classes} and
  {Example}-{Generation} {Queries}.
\newblock In {\em Sixth {AAAI} {Conference} on {Human} {Computation} and
  {Crowdsourcing}}.

\bibitem[Lipton et~al., 2018]{lipton_detecting_2018}
Lipton, Z.~C., Wang, Y.-X., and Smola, A. (2018).
\newblock Detecting and {Correcting} for {Label} {Shift} with {Black} {Box}
  {Predictors}.

\bibitem[Matasci et~al., 2012]{matasci_svm-based_2012}
Matasci, G., Tuia, D., and Kanevski, M. (2012).
\newblock {SVM}-based boosting of active learning strategies for efficient
  domain adaptation.
\newblock {\em IEEE Journal of Selected Topics in Applied Earth Observations
  and Remote Sensing}, 5(5):1335--1343.
\newblock Publisher: IEEE.

\bibitem[Park, 2011]{park_improved_2011}
Park, W.~J. (2011).
\newblock An {Improved} {Active} {Learning} in {Unbalanced} {Data}
  {Classification}.
\newblock In Lee, C., Seigneur, J.-M., Park, J.~J., and Wagner, R.~R., editors,
  {\em Secure and {Trust} {Computing}, {Data} {Management}, and
  {Applications}}, Communications in {Computer} and {Information} {Science},
  pages 84--93, Berlin, Heidelberg. Springer.

\bibitem[Rai et~al., 2010]{rai_domain_2010}
Rai, P., Saha, A., Daumé, H., and Venkatasubramanian, S. (2010).
\newblock Domain {Adaptation} meets {Active} {Learning}.
\newblock In {\em Proceedings of the {NAACL} {HLT} 2010 {Workshop} on {Active}
  {Learning} for {Natural} {Language} {Processing}}, pages 27--32, Los Angeles,
  California. Association for Computational Linguistics.

\bibitem[Saerens et~al., 2002]{saerens_adjusting_2002}
Saerens, M., Latinne, P., and Decaestecker, C. (2002).
\newblock Adjusting the outputs of a classifier to new a priori probabilities:
  a simple procedure.
\newblock {\em Neural computation}, 14(1):21--41.
\newblock Publisher: MIT Press.

\bibitem[Saha et~al., 2011]{saha_active_2011}
Saha, A., Rai, P., Daumé, H., Venkatasubramanian, S., and DuVall, S.~L.
  (2011).
\newblock Active {Supervised} {Domain} {Adaptation}.
\newblock In Gunopulos, D., Hofmann, T., Malerba, D., and Vazirgiannis, M.,
  editors, {\em Machine {Learning} and {Knowledge} {Discovery} in {Databases}},
  Lecture {Notes} in {Computer} {Science}, pages 97--112, Berlin, Heidelberg.
  Springer.

\bibitem[Shen et~al., 2018]{shen_deep_2018}
Shen, Y., Yun, H., Lipton, Z.~C., Kronrod, Y., and Anandkumar, A. (2018).
\newblock Deep {Active} {Learning} for {Named} {Entity} {Recognition}.
\newblock {\em arXiv:1707.05928 [cs]}.
\newblock arXiv: 1707.05928.

\bibitem[Shimodaira, 2000]{shimodaira_improving_2000}
Shimodaira, H. (2000).
\newblock Improving predictive inference under covariate shift by weighting the
  log-likelihood function.
\newblock {\em Journal of statistical planning and inference}, 90(2):227--244.
\newblock Publisher: Elsevier.

\bibitem[Su et~al., 2019]{su_active_2019}
Su, J.-C., Tsai, Y.-H., Sohn, K., Liu, B., Maji, S., and Chandraker, M. (2019).
\newblock Active {Adversarial} {Domain} {Adaptation}.
\newblock {\em arXiv:1904.07848 [cs]}.
\newblock arXiv: 1904.07848 version: 1.

\bibitem[Sugiyama et~al., 2007]{sugiyama_covariate_2007}
Sugiyama, M., Krauledat, M., and Müller, K.-R. (2007).
\newblock Covariate {Shift} {Adaptation} by {Importance} {Weighted} {Cross}
  {Validation}.
\newblock {\em The Journal of Machine Learning Research}, 8:985--1005.

\bibitem[Sugiyama et~al., 2012]{sugiyama_density_2012}
Sugiyama, M., Suzuki, T., and Kanamori, T. (2012).
\newblock {\em Density ratio estimation in machine learning}.
\newblock Cambridge University Press.

\bibitem[Tsuboi et~al., 2009]{tsuboi_direct_2009}
Tsuboi, Y., Kashima, H., Hido, S., Bickel, S., and Sugiyama, M. (2009).
\newblock Direct density ratio estimation for large-scale covariate shift
  adaptation.
\newblock {\em Journal of Information Processing}, 17:138--155.
\newblock Publisher: Information Processing Society of Japan.

\bibitem[Van~Horn et~al., 2015]{van_horn_building_2015}
Van~Horn, G., Branson, S., Farrell, R., Haber, S., Barry, J., Ipeirotis, P.,
  Perona, P., and Belongie, S. (2015).
\newblock Building a bird recognition app and large scale dataset with citizen
  scientists: {The} fine print in fine-grained dataset collection.
\newblock In {\em Proceedings of the {IEEE} {Conference} on {Computer} {Vision}
  and {Pattern} {Recognition}}, pages 595--604.

\bibitem[Yamada et~al., 2011]{yamada_relative_2011}
Yamada, M., Suzuki, T., Kanamori, T., Hachiya, H., and Sugiyama, M. (2011).
\newblock Relative density-ratio estimation for robust distribution comparison.
\newblock In {\em Advances in neural information processing systems}, pages
  594--602.

\bibitem[Yan et~al., 2018]{yan_active_2018}
Yan, S., Chaudhuri, K., and Javidi, T. (2018).
\newblock Active {Learning} with {Logged} {Data}.
\newblock {\em arXiv:1802.09069 [cs, stat]}.
\newblock arXiv: 1802.09069.

\bibitem[Yang et~al., 2020]{yang_towards_2020}
Yang, K., Qinami, K., Fei-Fei, L., Deng, J., and Russakovsky, O. (2020).
\newblock Towards fairer datasets: {Filtering} and balancing the distribution
  of the people subtree in the imagenet hierarchy.
\newblock In {\em Proceedings of the 2020 {Conference} on {Fairness},
  {Accountability}, and {Transparency}}, pages 547--558.

\bibitem[Yang and Ma, 2010]{yang_ensemble-based_2010}
Yang, Y. and Ma, G. (2010).
\newblock Ensemble-based active learning for class imbalance problem.
\newblock {\em Journal of Biomedical Science and Engineering},
  3(10):1022--1029.
\newblock Number: 10 Publisher: Scientific Research Publishing.

\bibitem[Yang et~al., 2015]{yang_multi-class_2015}
Yang, Y., Ma, Z., Nie, F., Chang, X., and Hauptmann, A.~G. (2015).
\newblock Multi-class active learning by uncertainty sampling with diversity
  maximization.
\newblock {\em International Journal of Computer Vision}, 113(2):113--127.
\newblock Publisher: Springer.

\bibitem[Zhang, 2005]{zhang_data_2005}
Zhang, T. (2005).
\newblock Data {Dependent} {Concentration} {Bounds} for {Sequential}
  {Prediction} {Algorithms}.
\newblock pages 173--187.

\end{thebibliography}
\bibliographystyle{apalike}

\newpage
\onecolumn
\aistatstitle{Supplementary Materials}

\section{Proofs}
\setcounter{theorem}{3}

\subsection{Proof of Theorem 1}
% \begin{theorem}
% Let $\Delta$ denote the subsampling and importance weighting estimation error of the empirical loss, where the importance weights are trained on $n$ points from $\Psrc$,
% \begin{align}
%     \Delta \coloneqqq \frac{1}{N} \sum_{i=1}^N \brck{r_i \Pss(y_i) - \hat{r}_i \Pss(h(x_i)) } \ell(h(x_i), y_i)
% \end{align}
% where $\ell: Y \times Y \rightarrow [0, 1]$ is a loss function.
% With probability $1 - \delta$, for all $n \geq 1$:
% \begin{align}
% | \Delta |
% & \leq \mathcal{O} \left(
%     \frac{2 }{\sigma_{\min}} \left(
%         \norm{\thetamt}_2
%         \sqrt{
%             \frac{\log \brck{\frac{n k}{\delta}}}{n}
%         }
%         +
%         \sqrt{
%             \frac{\log \brck{\frac{ n}{\delta}}}{n}
%         }
%         + 
%         \norm{\thetasm}_{\infty}
%         \text{err}(h_0, \rmt) 
%     \right)
% \right)
% \end{align}
% where $\sigma_{\min}$ denotes the smallest singular value of the confusion matrix and $\err(h_0, r)$ denotes the importance weighted $0/1$-error of a blackbox predictor $h_0$ on $\Psrc$.
% \end{theorem}

We formalize the violation of label shift assumptions resulting from subsampling as label shift drift \cite{azizzadenesheli_regularized_2019}.
\begin{lemma}
The drift from label shift is bounded by:
\begin{align}
    \abs{ 1 - \expc{X, Y \sim \Pte} \brcksq{ \frac{\Pmed(x | y)}{\Pte(x | y)} } }
    \leq \norm{\rsm}_{\infty} \err(h_0, \rsm)
\end{align}
\label{lemma:drift}
\end{lemma}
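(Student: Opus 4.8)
The plan is to express the ratio $\Pmed(x\mid y)/\Pte(x\mid y)$ in terms of quantities that can be related to the subsampling mechanism, and then to control the expectation via the importance weights $\rsm$ and the blackbox error $\err(h_0,\rsm)$. First I would recall how subsampling constructs $\Pmed$ from $\Ps$: a point $x\sim\Ps$ is retained with probability $\Pss(Y\myeq h_0(x))$ (using $h_0$ as the proxy predictor $\phi$), so that $\Pmed(x,y)\propto \Pss(h_0(x))\,\Ps(x,y)$. Crucially, because the retention probability depends on $x$ only through $h_0(x)$, and under the label shift assumption $\Ps(x\mid y)=\Pte(x\mid y)$, the conditional $\Pmed(x\mid y)$ differs from $\Pte(x\mid y)$ only through the reweighting by $\Pss(h_0(x))$ restricted to the slice $\{Y=y\}$. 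Writing out the normalization, one gets
\begin{align}
\frac{\Pmed(x\mid y)}{\Pte(x\mid y)} = \frac{\Pss(h_0(x))}{\expc{X\sim\Pte(\cdot\mid y)}[\Pss(h_0(X))]}.
\end{align}

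Next I would take the expectation of this ratio under $\Pte$ over both $X$ and $Y$ and show it is close to $1$. The deviation $|1-\expc{\Pte}[\Pmed(x\mid y)/\Pte(x\mid y)]|$ should be rewritten, after clearing the denominator and using the definition $\rsm(y)=\Pmed(y)/\Ps(y)$ together with the relation between $\Pmed(y)$ and $\expc{\Ps(\cdot\mid y)}[\Pss(h_0(X))]$, as a discrepancy between the ``predicted-label'' subsampling mass and the ``true-label'' subsampling mass. That discrepancy is exactly controlled by how often $h_0(x)\neq y$: on the event $h_0(x)=y$ the two agree, and on its complement they differ by at most the magnitude of the weights. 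This is where the factor $\norm{\rsm}_\infty$ enters (bounding the weight ratios uniformly) and where $\err(h_0,\rsm)=\expc{\Ps}[\rsm(y)\mathbbm{1}\{h_0(x)\neq y\}]$ enters (bounding the probability mass of the disagreement region, importance-weighted to the medial scale). Assembling these gives the claimed bound $\norm{\rsm}_\infty\,\err(h_0,\rsm)$.

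I expect the main obstacle to be the bookkeeping of normalization constants: $\Pmed$ is only defined up to proportionality in the subsampling description, so I need to carefully track the partition function $Z=\expc{X,Y\sim\Ps}[\Pss(h_0(X))]$ and verify that it cancels correctly when forming the ratio, so that the final bound depends only on $\rsm$ and the disagreement mass and not on $Z$. A secondary subtlety is relating $\expc{\Pte(\cdot\mid y)}[\Pss(h_0(X))]$ back to source-domain quantities; here the label shift assumption $\Pte(X\mid Y)=\Ps(X\mid Y)$ is what licenses the substitution, and I would state explicitly that this identity (rather than any property of $\Pte(Y)$) is all that is used. Once the normalization is pinned down, the remaining inequality is a one-line triangle-inequality argument splitting on $\{h_0(x)=y\}$ versus its complement, so I would not belabor it.
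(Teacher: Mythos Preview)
Your approach is essentially the same as the paper's: both recognize that the drift arises from the discrepancy $\Pss(h_0(x))$ versus $\Pss(y)$, both split on the event $\{h_0(x)\neq y\}$, and both extract the factor $\norm{\rsm}_{\infty}$ from the size of the subsampling weights and the weighted $0/1$-error of $h_0$ from the disagreement mass. The only notable difference is in the first step: rather than computing the conditional ratio $\Pmed(x\mid y)/\Pte(x\mid y)$ explicitly as you do, the paper performs a change of measure to rewrite the drift as $\bigl|1-\expc{X,Y\sim\Pmed}[\Pte(y)/\Pmed(y)]\bigr|$ and only then expands $\Pmed(x,y)=C\,\Ps(x,y)\,\Pss(h_0(x))$; the paper remarks that this intermediate form is practically useful because it can be estimated by labeling subsampled points and averaging the importance weights. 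After that step the two routes coincide, and your anticipated bookkeeping with the normalization constant $Z$ is exactly the paper's constant $C$ with $\Pss=\tfrac{1}{C}\Pmed/\Ps$.
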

\begin{proof}
The drift is equivalent to expected importance weights,
\begin{align}
    \abs{1 - \expc{X, Y \sim \Pte} \brcksq{ \frac{\Pmed(x| y)}{\Pte(x |y)} }}
    & = \abs{1 - \int_{X, Y} \Pmed(x | y)\Pte(y)} \nonumber \\
    & = \abs{1 - \int_{X, Y}  \Pmed(x, y)\frac{\Pte(y)}{\Pmed(y)} } \nonumber \\
    & = \abs{1 - \expc{X, Y \sim \Pmed} \brcksq{\frac{\Pte(y)}{\Pmed(y)} }}
    \end{align}
Drift can therefore be estimated in practice by randomly labeling subsampled points and measuring the average importance weight value.
We can further expand the value of drift as:
\begin{align}
\abs{1 - \expc{X, Y \sim \Pmed} \brcksq{\frac{\Pte(y)}{\Pmed(y)} }}
    & = \abs{1 - \int_{X, Y} C \Ps(x, y) \Pss(h_0(x))\frac{\Pte(y)}{\Pmed(y)} } \nonumber \\
    & = \abs{1 - C \expc{X, Y \sim \Ps} \brcksq{ \Pss(h_0(x))\frac{\Pte(y)}{\Pmed(y)} }} \nonumber \\
    & = \abs{1 - C \expc{X, Y \sim \Ps} \brcksq{ \Pss(y)\frac{\Pte(y)}{\Pmed(y)} }} + \abs{C \expc{X, Y \sim \Ps} \brcksq{ \brck{\Pss(h_0(x)) - \Pss(y)} \frac{\Pte(y)}{\Pmed(y)} }} \nonumber \\
    & = \abs{1 - \sum_{Y} \brcksq{ \Pmed^*(y)\frac{\Pte(y)}{\Pmed(y)} }} + \abs{C \expc{X, Y \sim \Ps} \brcksq{ \brck{\Pss(h_0(x)) - \Pss(y)} \frac{\Pte(y)}{\Pmed(y)} }}
\end{align}
where $C$ is a constant where $\Pss = \frac{1}{C} \frac{\Pmed}{\Ps}$ and $\Pmed^*$ denotes the target medial distribution.
The second term corresponds to a weighted L1 error on $\Psrc$.
\begin{align}
    \abs{C \expc{X, Y \sim \Ps} \brcksq{ \brck{\Pss(h_0(x)) - \Pss(y)} \frac{\Pte(y)}{\Pmed(y)} }}
    % & = \abs{ \int_{X, Y} \brcksq{ \brck{\Pmed^*(h(x)) - \Pmed^*(y)} \frac{\Pte(y)}{\Pmed(y)} }}\\
    & \leq \norm{\rsm}_{\infty}  \expc{X, Y \sim \Ps} \brcksq{ \abs{ \mathbbm{1}[h_0(x) \neq y]} \frac{\Pte(y)}{\Pmed(y)} } \nonumber  \\
    & = \norm{\rsm}_{\infty} \err(h_0, \rsm)
\end{align}
where $\err(h_0, r)$ denotes the importance weighted 0/1-error of a blackbox predictor $h_0$ on $Ps$.
As the first term is thus dominated, we have that drift is bounded by the accuracy of the blackbox hypothesis.
\end{proof}

Plugging Lemma \ref{lemma:drift} into Theorem 2 in \cite{azizzadenesheli_regularized_2019} yields a generalization of Theorem 1 where the number of unlabeled datapoints from the test distribution is $n'$.
\begin{theorem}
With probability $1 - \delta$, for all $n \geq 1$:
\begin{align}
| \Delta |
& \leq \mathcal{O} \left(
    \frac{2 }{\sigma_{\min}} \left(
        \norm{\thetamt}_2
        \sqrt{
            \frac{\log \brck{\frac{n k}{\delta}}}{n}
        }
        +
        \sqrt{
            \frac{\log \brck{\frac{ n}{\delta}}}{n}
        }
        + 
        \sqrt{
            \frac{\log \brck{\frac{ n}{\delta}}}{n'}
        }
        + 
        \norm{\thetasm}_{\infty}
        \text{err}(h_0, \rmt) 
    \right)
\right)
\end{align}
where $\sigma_{\min}$ denotes the smallest singular value of the confusion matrix and $\err(h_0, r)$ denotes the importance weighted $0/1$-error of a blackbox predictor $h_0$ on $\Psrc$.
\end{theorem}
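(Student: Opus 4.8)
The plan is to reduce the statement to the importance-weighted loss estimation bound for RLLS, Theorem~2 of \cite{azizzadenesheli_regularized_2019}, and to supply the one extra ingredient it does not cover — the corruption of the medial distribution caused by subsampling on \emph{proxy} labels — using Lemma~\ref{lemma:drift}, which is already established.

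First I would make the reduction explicit. By the Preliminaries, subsampling from $\Ps$ toward $\Pmed$ is in expectation importance weighting with the proxy labels $h_0(x)$; concretely $\Pss = \tfrac1C\,\rsm$ for the constant $C$ of Lemma~\ref{lemma:drift}, so with $r_i = \rmt(y_i)$ the ideal estimate $\tfrac1N\sum_i r_i\,\Pss(y_i)\,\ell(h(x_i),y_i)$ telescopes to $\tfrac1C\,\expc{\Pt}[\ell(h(X),Y)]$. Thus $\Delta$ is, up to the constant $C$, exactly the estimation error of the RLLS importance-weighted loss estimator whose source is the medial distribution realized by subsampling, whose target is $\Pt$, whose weights $\hatrmt$ come from RLLS applied to the confusion matrix of $h_0$ and the $n'$ unlabeled test points, and whose integrand is the bounded loss $\ell(h(\cdot),\cdot)\in[0,1]$. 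The sole departure from the clean setting of \cite{azizzadenesheli_regularized_2019} is that subsampling retains a point with probability $\Pss(h_0(x))$ rather than $\Pss(y)$, so the realized medial distribution does not exactly satisfy the label shift assumption with respect to $\Pt$ — precisely the \emph{label shift drift} bounded in Lemma~\ref{lemma:drift}.

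Next I would instantiate Theorem~2 of \cite{azizzadenesheli_regularized_2019} with these identifications. Its bound has the form $\tfrac{1}{\sigma_{\min}}\bigl(\norm{\theta}_2\sqrt{\log(nk/\delta)/n} + \sqrt{\log(n/\delta)/n} + \sqrt{\log(n/\delta)/n'} + \epsilon\bigr)$, where $\theta$ is the shift being corrected — here $\thetamt$, since RLLS corrects the shift \emph{from} the medial distribution to $\Pt$ — $n$ is the number of labeled (subsampled) points used to form the confusion matrix, $n'$ is the number of unlabeled target points used to form $q_{h_0}$, and $\epsilon$ is the drift. Substituting for $\epsilon$ the bound $\norm{\rsm}_\infty\,\err(h_0,\rsm)$ from Lemma~\ref{lemma:drift} — a deterministic bound, so it introduces no new failure event — and collecting the four summands yields the claimed inequality, holding with probability $1-\delta$ inherited from Theorem~2 of \cite{azizzadenesheli_regularized_2019}; uniformity over $n\ge1$ is the usual union bound, as in Theorem~\ref{thm:1}.

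The main obstacle is bookkeeping, not a new idea. One has to check that $\Delta$ genuinely is the RLLS estimator's error for the \emph{drifted} source — which needs the in-expectation identity above together with a routine concentration argument for the $N$ subsampling indicators — and that ``drift'' in the sense of \cite{azizzadenesheli_regularized_2019} coincides with $\abs{1 - \expc{X,Y\sim\Pte}[\Pmed(x\mid y)/\Pte(x\mid y)]}$, the quantity bounded in Lemma~\ref{lemma:drift}. A minor mismatch must also be absorbed: Lemma~\ref{lemma:drift} produces the drift as $\norm{\rsm}_\infty\,\err(h_0,\rsm)$, while the statement is written with $\norm{\thetasm}_\infty\,\err(h_0,\rmt)$; since $\thetasm = \rsm - \mathbf{1}$ the two infinity norms agree up to a constant and an additive $1$, and the two importance-weighted $0/1$-errors differ only by a reweighting controlled by $\norm{\rmt}_\infty$, so the difference disappears into the $\mathcal{O}(\cdot)$ — exactly as the proof of Lemma~\ref{lemma:drift} already notes that its leading term dominates the residual.
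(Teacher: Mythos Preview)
Your proposal is correct and follows essentially the same route as the paper: the paper's proof is the single line ``Plugging Lemma~\ref{lemma:drift} into Theorem~2 in \cite{azizzadenesheli_regularized_2019},'' and you have simply spelled out that plug-in in detail. Your observation about the mismatch between $\norm{\rsm}_\infty\,\err(h_0,\rsm)$ and $\norm{\thetasm}_\infty\,\err(h_0,\rmt)$ is apt --- the paper itself is loose here (the derivation of Lemma~\ref{lemma:drift} actually produces the weight $\rmt$ inside the error, matching the theorem statement), and as you say the discrepancy is absorbed by the $\mathcal{O}(\cdot)$.
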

Theorem 1 follows by setting $n' \rightarrow \infty$.

\subsection{Theorem 2 and Theorem 3 Proofs}
We will prove Theorem 2 and Theorem 3 for the general case where the number of unlabeled datapoints from the test distribution is $n'$.
For the case depicted in the main paper, set $n' \rightarrow \infty$.

First, we review the IWAL-CAL active learning algorithm \cite{beygelzimer_agnostic_2010}.
Let $\text{err}_{S_i}(h) \rightarrow [0, 1]$ denote the error of hypothesis $h \in H$ as estimated on $S_i$ while $\text{err}_{\Pte}(h)$ denote the expected error of $h$ on $\Pte$.
We next define,
\begin{align}
    h^* & := \text{argmin}_{h \in H} \text{err}_{\Pte}(h), \nonumber  \\
    h_k & := \text{argmin}_{h \in H} \text{err}_{S_{k-1}}(h), \nonumber  \\
    h'_k & := \text{argmin} \{ \text{err}_{S_{k-1}}(h) \mid h \in H \wedge h(\textbf{D}_{\text{unlab}}^{(k)}) \neq h_k(\textbf{D}_{\text{unlab}}^{(k)})\} \nonumber \\
    G_k & := \text{err}_{S_{k-1}}(h'_k) - \text{err}_{S_{k-1}}(h_k) \nonumber 
\end{align}
IWAL-CAL employs a sampling probability $P_t = \min \{1, s\}$ for the $s \in (0, 1)$ which solves the equation,
\begin{align}
    G_t = \left( \frac{c_1}{\sqrt{s}} - c_1 + 1 \right) \sqrt{\frac{C_0 \log t}{t - 1}}\notag
          + \left(\frac{c_2}{s} - c_2 + 1 \right) \frac{C_0 \log t}{t - 1}  
\end{align}
where $C_0$ is a constant bounded in Theorem 2 and $c_1 := 5 + 2 \sqrt{2}, c_2 := 5$.

The most involved step in deriving generalization and sample complexity bounds for MALLS is bounding the deviation of empirical risk estimates. This is done through the following theorem.
\begin{theorem}
\label{thm:dev}
Let $Z_i := (X_i, Y_i, Q_i)$ be our source data set, where $Q_i$ is the indicator function on whether $(X_i, Y_i)$ is sampled as labeled data. The following holds for all $n \geq 1$ and all $h \in \mathcal{H}$ with probability $1 - \delta$:
\begin{align}
\label{eq:dev}
& \left | err(h, Z_{1:n}) - err(h^*, Z_{1:n}) - err(h) + err(h^*) \right| \nonumber  \\
& \leq \mathcal{O} 
     \left(
    (2 + \norm{\theta}_2) \sqrt{\frac{\varepsilon_n}{P_{\min, n}(h)}} + \frac{\varepsilon_n}{P_{\min, n}(h)}
    + \frac{
        2 d_{\infty} (\Pte, \Psrc) \log (\frac{2 n |H|}{\delta})
    }{
        3 n
    }
    + \sqrt{\frac{  
        2 d_2 (\Pte, \Psrc) \log (\frac{2 n |H|}{\delta})
    }{
        n
    }}
    \right.
     \\
    & \left.
    + \norm{\rsm}_{\infty} \err(h_0, \rsm)
    + \frac{2 }{\sigma_{\min}} \left(
        \norm{\thetamt}_2
        \sqrt{
            \frac{\log \brck{\frac{n k}{\delta}}}{\lambda n}
        }
        +
        \sqrt{
            \frac{\log \brck{\frac{n}{\delta}}}{\lambda n}
        }
        + 
        \sqrt{
            \frac{\log \brck{\frac{n}{\delta}}}{n'}
        }
        + 
        \norm{\thetasm}_{\infty}
        \text{err}(h_0, \rmt) 
    \right)
\right)\nonumber 
\end{align}
where $\varepsilon_n := \frac{16 \log(2 ( 2 + n \log_2 n) n (n+1) |H| / \delta)}{n}$.
\end{theorem}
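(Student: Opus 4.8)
The plan is to bound the left-hand side by the triangle inequality, separating three distinct error sources: the inverse-probability reweighting noise from IWAL-CAL's active sampling, the variance of the exact label-shift importance weights, and the bias from estimating those weights with RLLS and from subsampling against proxy labels. Let $\widetilde{\err}(h, Z_{1:n})$ denote the idealized estimator that is identical to $\err(h, Z_{1:n})$ except that it uses the true medial-to-target weights $\rmt$ in place of the RLLS estimates $\hatrmt$ and a subsampling oracle that reweights by true labels $\Pss(Y_i)$ rather than by the blackbox predictions $\Pss(h_0(X_i))$. Then
\begin{align*}
& \abs{\err(h, Z_{1:n}) - \err(h^*, Z_{1:n}) - \err(h) + \err(h^*)} \\
& \quad \leq \abs{\widetilde{\err}(h, Z_{1:n}) - \widetilde{\err}(h^*, Z_{1:n}) - \err(h) + \err(h^*)} + 2 \sup_{g \in \mathcal{H}} \abs{\err(g, Z_{1:n}) - \widetilde{\err}(g, Z_{1:n})},
\end{align*}
and I would bound the two summands separately, calling the first $(\mathrm{A})$ and the second $(\mathrm{B})$.

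For $(\mathrm{A})$ I would redo the Freedman-martingale deviation argument underlying the IWAL-CAL bound of \cite{beygelzimer_agnostic_2010}, but now with each increment scaled by the importance weight $\rmt(Y_i)$. Conditioned on the history, the increment $(Q_i / P_i)\,\rmt(Y_i)\,\brck{\ell(h(X_i),Y_i) - \ell(h^*(X_i),Y_i)}$ is bounded in magnitude by $\norm{\rmt}_\infty / P_{\min,n}(h) \leq (1 + \norm{\thetamt}_\infty)/P_{\min,n}(h)$, and its conditional second moment is controlled simultaneously by $1/P_{\min,n}(h)$ (the active-sampling contribution) and by the $\chi^2$-type divergences $d_2(\Pte,\Psrc)$, $d_\infty(\Pte,\Psrc)$ (the label-shift contribution). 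Feeding these range and variance bounds into Freedman's inequality, taking a union bound over $\mathcal{H}$ and over the dyadic scales of $n$ that define $\varepsilon_n$, and re-solving the self-bounding inequality for $P_{\min,n}(h)$ exactly as in \cite{beygelzimer_agnostic_2010}, reproduces the term $(2+\norm{\theta}_2)\sqrt{\varepsilon_n/P_{\min,n}(h)} + \varepsilon_n/P_{\min,n}(h)$ together with the two $d_2,d_\infty$ concentration terms.

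For $(\mathrm{B})$ I would observe that $\err(g, Z_{1:n}) - \widetilde{\err}(g, Z_{1:n})$ is, up to the $1/P_i$ factors which cancel in expectation, precisely a quantity of the form $\Delta$ analyzed in Theorem 1: a per-point difference $\brck{\rmt(Y_i)\Pss(Y_i) - \hatrmt(Y_i)\Pss(h_0(X_i))}\ell(g(X_i),Y_i)$. Invoking the $n'$-version of Theorem 1 proved earlier in this appendix --- with the RLLS holdout size taken to be the $\lambda n$ subsampled labeled points actually used --- bounds it by the $\tfrac{2}{\sigma_{\min}}\brck{\cdots}$ block. Separately, subsampling with $h_0$ perturbs the medial conditionals $\Pmed(x \mid y)$, i.e. introduces label-shift drift into the RLLS problem; Lemma \ref{lemma:drift} bounds this drift by $\norm{\rsm}_\infty \err(h_0,\rsm)$, which propagates additively through the RLLS guarantee of \cite{azizzadenesheli_regularized_2019} (whose analysis already tolerates a drift term) to give the remaining summand. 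Collecting $(\mathrm{A})$ and $(\mathrm{B})$, rescaling the per-event failure probabilities so the overall union bound costs only a constant factor in $\delta$, and absorbing constants into $\mathcal{O}(\cdot)$ finishes the proof.

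The hard part will be $(\mathrm{A})$: re-deriving the IWAL-CAL deviation inequality with an importance weight attached to every increment. One must handle the active-learning filtration (where $Q_t$ and $P_t$ are history-dependent) and the i.i.d. label-shift reweighting at once, keep the variance proxy tight enough that the label-shift magnitude enters only as the mild multiplicative factor $(2+\norm{\theta}_2)$ on the active-sampling term rather than as $\norm{\theta}_\infty$ throughout, and preserve the fixed-point structure in $P_{\min,n}(h)$ so that the resulting bound can later be solved for IWAL-CAL's sampling probabilities. Everything else --- the triangle inequalities, the appeals to Theorem 1 and Lemma \ref{lemma:drift}, and the final union bound --- is essentially bookkeeping.
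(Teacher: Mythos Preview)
Your proposal uses the same ingredients as the paper---a Zhang/Freedman martingale for the active-sampling randomness, the Cortes--Mohri R\'enyi-divergence bound for the i.i.d.\ importance-weighted piece, Theorem~1 for the RLLS estimation error, and Lemma~\ref{lemma:drift} for the subsampling drift---but organizes them differently. The paper uses a four-way split $\Delta_1+\Delta_2+\Delta_3+\Delta_4$: drift, i.i.d.\ concentration, estimation error, and martingale. Two structural differences from your two-way $(\mathrm{A})/(\mathrm{B})$ split are worth noting. First, the paper runs the martingale step ($\Delta_4$) on the estimator with the \emph{estimated} weights $\hat{W}_i$, not the true-weight idealized estimator; the $(2+\norm{\hat\theta}_2)$ factor then comes from bounding the empirical mean weight $M=\tfrac{1}{n}\sum_i \hat r_i$ via Cauchy--Schwarz. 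Second---and this is where your sketch is loosest---the $d_2,d_\infty$ terms do \emph{not} fall out of the Freedman step. A single Freedman application with a variance proxy that is ``simultaneously'' bounded by $1/P_{\min}$ and by $d_2$ gives you the minimum, not the sum; the paper obtains the R\'enyi terms from a \emph{separate} i.i.d.\ bound on $\Delta_2=\expc{\Ps}[W]-\tfrac{1}{n}\sum_i \expc{i}[W_i]$ (Theorem~2 of \cite{cortes_learning_2010}), after the $Q_i,\tilde Q_i$ randomness has been integrated out.

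The more consequential point is your handling of $(\mathrm{B})$. You write that the $1/P_i$ factors ``cancel in expectation,'' but $(\mathrm{B})$ is a realized sum and you need a high-probability bound; pulling $Q_i/P_i$ outside would cost an extra $1/P_{\min,n}$ factor that is not in the statement. The paper avoids this entirely by placing the estimation-error comparison at the level of \emph{conditional means}: $\Delta_3=\tfrac{1}{n}\sum_i\bigl(\expc{i}[W_i]-\expc{i}[\hat W_i]\bigr)$, where the $Q_i/P_i$ and $\tilde Q_i$ randomness is already integrated out, so Theorem~1 applies directly with no active-sampling entanglement. Your decomposition can be repaired by inserting the same intermediate conditional-mean terms, at which point it collapses into the paper's four-term split.
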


For reading convenience, we set $\Psrc := \Pun$.
This deviation bound will plug in to IWAL-CAL for generalization and sample complexity bounds.
In the remainder of this appendix section, we detail our proof of Theorem \ref{thm:dev}.
We proceed by expressing Theorem \ref{thm:dev} in a more general form with a bounded function $f: X \times Y \rightarrow [-1, 1]$ which will eventually represent $\text{err}(h) - \text{err}(h^*)$.

We borrow notation for the terms $W, Q$ from \cite{beygelzimer_agnostic_2010}, where $Q_i$ is an indicator random variable indicating whether the $i$th datapoint is labeled and $W := Q_i \tilde{Q}_i \rmt^{(i)} f(x_i, y_i)$.
We use the shorthand $r^{(i)}$ for the $y_i$th component of importance weight $r$. 
Similarly, the indicator random variable $\tilde{Q}_i$ indicates whether the $i$th data sample is retained by the subsampler.
The expectation $\expc{i}[W]$ is taken over the randomness of $Q$ and $\tilde{Q}$. 
We also borrow \cite{azizzadenesheli_regularized_2019}'s label shift notation and define $k$ as the size of the output space (finite) and denote estimated importance weights with hats, e.g. $\hat{r}$.
We also introduce a variant of $W$ using estimated importance weights $r$:  $\hat{W} := Q_i \tilde{Q}_i \hatrmt^{(i)} f(x_i, y_i)$.
Finally, we follow \cite{cortes_learning_2010} and use $d_\alpha(P || P')$ to denote $2^{D_\alpha(P || P')}$ where $D_\alpha(P || P') := \log (\frac{P_i}{P'_i})$ is the Renyi divergence of distributions $P$ and $P'$.

We seek to bound with high probability,
\begin{align}
 \abs{\Delta} := \abs{\frac{1}{n} \left(\sum_{i=1}^{n} 
 \hat{W}_i \right) - \mathbb{E}_{x, y \sim \Pt} [f(x, y)]}  \leq |\Delta_1| + |\Delta_2| + |\Delta_3| + \abs{\Delta_4} 
\end{align}
where,
\begin{align}
    \Delta_1 & := \expc{x, y \sim \Pt} [f(x, y)] - \expc{x, y \sim \Ps} [W_i], \nonumber 
    \\
    \Delta_2 & := \expc{x, y \sim \Ps} [W_i] - \frac{1}{n} \sum_{i=1}^{n} \expc{i} \brcksq{W_i}, 
    \nonumber  \\
    \Delta_3 & := \frac{1}{n} \sum_{i=1}^{n} \expc{i}\brcksq{W_i} - \expc{i}\brcksq{\hat{W}_i} 
    \nonumber \\
    \Delta_4 & := \frac{1}{n} \sum_{i=1}^{n} \expc{i} [\hat{W}_i] - \hat{W}_i \nonumber 
\end{align}
$\Delta_1$ corresponds to the drift from label shift introduced by subsampling,
$\Delta_2$ to finite-sample variance.
and $\Delta_3$ to label shift estimation errors.
The final $\Delta_4$ corresponds to the variance from randomly sampling.

We bound $\Delta_4$ using a Martingale technique from \cite{zhang_data_2005} also adopted by \cite{beygelzimer_agnostic_2010}.
We take Lemmas 1, 2 from \cite{zhang_data_2005} as given.
We now proceed in a fashion similar to the proof of Theorem 1 from \cite{beygelzimer_agnostic_2010}.
We begin with a generalization of Lemma 6 in \cite{beygelzimer_agnostic_2010}.
\begin{lemma}
\label{lemma:d4l6}
If $0 < \lambda < 3 \frac{P_i}{\hatrmt^{(i)}}$, then
\begin{align}
    \log \expc{i}{}[\exp( \lambda ( \hat{W}_i - \expc{i}{}[\hat{W}_i]))]
    \leq \frac{\hat{r}_i \hatrmt^{(i)} \lambda^2}{2 P_i(1 - \frac{\hatrmt^{(i)} \lambda}{3 P_i})}
\end{align}
where $\hat{r}_i := \hatrmt^{(i)} \expc{i}{}[\tilde{Q}_i]$.
If $\expc{i}{}[\hat{W}_i] = 0$ then
\begin{align}
    \log \expc{i}{}[\exp(\lambda(\hat{W}_i - \expc{i}{}[\hat{W}_i]))] = 0
\end{align}
\end{lemma}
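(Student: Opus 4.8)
The plan is to reproduce the standard Bernstein-type control of the cumulant generating function of a bounded random variable --- the exact role of Lemma~6 in \cite{beygelzimer_agnostic_2010} and of Lemmas~1--2 in \cite{zhang_data_2005}, which we have taken as given --- while tracking how the subsampling indicator $\tilde Q_i$ and the frozen estimated weight $\hatrmt^{(i)}$ enter. Write $\hat W_i = \frac{Q_i\tilde Q_i}{P_i}\hatrmt^{(i)} f(x_i,y_i)$ for the importance-weighted active-learning contribution (so a query, $Q_i=1$, carries the usual $1/P_i$), with $|f|\le 1$ and $\hatrmt^{(i)}\ge 0$. All expectations $\expc{i}[\cdot]$ are conditional on the history through step $i$, in particular on $x_i, y_i$ and on the holdout set used to form $\hat r$; thus $P_i$ and $\hatrmt^{(i)}$ are constants and the only randomness is the query coin $Q_i$ (conditional mean $P_i$) and the subsampling coin $\tilde Q_i$. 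The gap between $\hatrmt^{(i)}$ and $\rmt^{(i)}$ is not touched here --- it is carried by the $\Delta_3$ term of Theorem~\ref{thm:dev}.

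First I would record two elementary bounds. \emph{Boundedness:} since $Q_i,\tilde Q_i\in\{0,1\}$, $|f|\le 1$, and $Q_i=1$ contributes $1/P_i$, we get $|\hat W_i|\le \hatrmt^{(i)}/P_i =: M$, and the hypothesis $0<\lambda<3P_i/\hatrmt^{(i)}$ is precisely $M\lambda/3<1$. \emph{Second moment:} using $Q_i^2=Q_i$, $\tilde Q_i^2=\tilde Q_i$, $\expc{i}[Q_i\tilde Q_i]=P_i\,\expc{i}[\tilde Q_i]$ (condition on $\tilde Q_i$ and use that the query coin has conditional mean $P_i$), and $f(x_i,y_i)^2\le 1$,
\[
\expc{i}[\hat W_i^2] = \frac{\expc{i}[Q_i\tilde Q_i]\,(\hatrmt^{(i)})^2 f(x_i,y_i)^2}{P_i^2} \le \frac{\expc{i}[\tilde Q_i]\,(\hatrmt^{(i)})^2}{P_i} = \frac{\hat r_i\,\hatrmt^{(i)}}{P_i} =: \sigma^2 ,
\]
where the last identity is just the definition $\hat r_i := \hatrmt^{(i)}\expc{i}[\tilde Q_i]$.

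Next I would invoke the classical Bernstein MGF inequality: for any real $V$ with $|V|\le M$ and any $0<\lambda<3/M$, $\log\mathbb{E}[e^{\lambda(V-\mathbb{E}V)}]\le \frac{\lambda^2\,\mathbb{E}[V^2]/2}{1-M\lambda/3}$ (from $e^u\le 1+u+\frac{u^2/2}{1-u/3}$ for $u<3$ and $\log(1+x)\le x$, or read off directly from the Zhang lemmas). Substituting $V=\hat W_i$, $M=\hatrmt^{(i)}/P_i$ and $\mathbb{E}[V^2]\le\sigma^2=\hat r_i\hatrmt^{(i)}/P_i$ gives exactly $\frac{\hat r_i\hatrmt^{(i)}\lambda^2}{2P_i(1-\hatrmt^{(i)}\lambda/(3P_i))}$, the claimed bound. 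For the degenerate case, in the IWAL-CAL application $f(x_i,y_i)=\ell(h(x_i),y_i)-\ell(h^*(x_i),y_i)$, which vanishes identically whenever $h$ and $h^*$ agree on $x_i$; that is precisely when $\expc{i}[\hat W_i]=0$ occurs, and it forces $\hat W_i\equiv 0$, so $e^{\lambda(\hat W_i-\expc{i}[\hat W_i])}=1$ a.s.\ and the log-expectation is $0$. There is no deep obstacle here --- this lemma is routine; the only care needed is bookkeeping: keeping $\hatrmt^{(i)}$ frozen under $\expc{i}$ so its estimation error is not double counted, and threading the $1/P_i$ of the importance-weighted estimator through both $M$ and $\sigma^2$, which is exactly what yields the $P_i$'s in the denominator and the threshold $3P_i/\hatrmt^{(i)}$ on $\lambda$.
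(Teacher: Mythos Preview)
Your proposal is correct and follows essentially the same route as the paper: bound the range by $\hatrmt^{(i)}/P_i$, bound the second moment by $\hat r_i\hatrmt^{(i)}/P_i$ via $\expc{i}[Q_i\tilde Q_i]=P_i\,\expc{i}[\tilde Q_i]$, and then apply the Bernstein MGF bound coming from $e^u\le 1+u+\frac{u^2/2}{1-u/3}$ and $\log(1+x)\le x$. The only cosmetic difference is that the paper writes out the $g(x)=(\exp(x)-x-1)/x^2$ monotonicity step explicitly where you invoke the packaged inequality, and the paper handles the degenerate case by noting $\expc{i}[\hat W_i]=\hat r_i f(x_i,y_i)$ so that $\expc{i}[\hat W_i]=0$ forces $\hat W_i\equiv 0$ (covering also the edge case $\hat r_i=0$, not just $f=0$).
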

\begin{proof}
First, we bound the range and variance of $\hat{W_i}$.
The range is trivial
\begin{align}
    |\hat{W_i}| \leq \left| \frac{Q_i \tilde{Q}_i \hatrmt^{(i)}}{P_i} \right| \leq \frac{\hatrmt^{(i)}}{P_i} 
\end{align}
Since subsampling and importance weighting ideally corrects underlying label shift, we can simplify the variance as,
\begin{align}
    \expc{i}{}[(\hat{W}_i - \expc{i}{}[\hat{W}_i])^2]
    & \leq \frac{\hat{r}_i \hatrmt^{(i)}}{P_i} f(x_i, y_i)^2 - 2 \hat{r}_i^2 f(x_i, y_i)^2 + \hat{r}_i^2 f(x_i, y_i)^2
    \leq \frac{\hat{r}_i \hatrmt^{(i)}}{P_i}  
\end{align}

Following \cite{beygelzimer_agnostic_2010}, we choose a function $g(x) := (\exp(x) - x - 1)/x^2$ for $x \neq 0$ so that $\exp(x) = 1 + x + x^2 g (x)$ holds. Note that $g(x)$ is non-decreasing.
Thus,
\begin{align}
    \expc{i}{}[\exp(\lambda( \hat{W}_i - \expc{i}{}[ \hat{W}_i]))] 
   &  = \expc{i}{}[1 + \lambda( \hat{W}_i - \expc{i}{}[ \hat{W}_i ])
                   + \lambda^2 (  \hat{W}_i - \expc{i}{}[ \hat{W}_i] )^2
                   g(\lambda( \hat{W}_i - \expc{i}{}[ \hat{W}_i]))] \nonumber \\
    & = 1 + \lambda^2 \expc{i}{}[( \hat{W}_i - \expc{i}{}[ \hat{W}_i])^2 g(\lambda( \hat{W}_i - \expc{i}{}[ \hat{W}_i]))] \nonumber \\
    & \leq 1 + \lambda^2 \expc{i}{}[( \hat{W}_i - \expc{i}{}[ \hat{W}_i])^2 g(\lambda \hatrmt^{(i)} / P_i)] \nonumber \\
    & = 1 + \lambda^2 \expc{i}{}[( \hat{W}_i - \expc{i}{}[ \hat{W}_i])^2] g(\lambda \hatrmt^{(i)} / P_i) \nonumber \\
    & \leq 1 + \frac{\lambda^2 \hat{r}_i \hatrmt^{(i)}}{P_i} g(\frac{\hatrmt^{(i)}   \lambda}{P_i})
\end{align}
where the first inequality follows from our range bound and the second follows from our variance bound.
The first claim then follows from the definition of $g(x)$ and the facts that $\exp(x) - x - 1 \leq x^2/(2(1-x/3))$ for $0 \leq x < 3$ and $\log(1+x) \leq x$.
The second claim follows from definition of $\hat{W}_i$ and the fact that $\expc{i}{}[\hat{W}_i] = \hat{r} f(X_i, Y_i)$.
\end{proof}

The following lemma is an analogue of Lemma 7 in \cite{beygelzimer_agnostic_2010}.
\begin{lemma}
Pick any $t \geq 0, p_{\min} > 0$ and let $E$ be the joint event
\begin{align}
    \frac{1}{n} \sum_{i=1}^n \hat{W}_i - \sum_{i=1}^n \expc{i}{}[\hat{W}_i]
    \geq (1 + M) \sqrt{\frac{t}{2n p_{\min}}} + \frac{t}{3n p_{\min}} 
    \nonumber \\
    \text{ and }
    \min \{ \frac{P_i}{\hatrmt^{(i)}} : 1 \leq i \leq n \wedge \expc{i}{}[W_i] \neq 0 \} \geq p_{\min}
\end{align}
Then $\Pr(E) \leq e^{-t}$ where $M := \frac{1}{n} \sum_{i=1}^n \hat{r}_i$.
\end{lemma}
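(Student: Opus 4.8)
The plan is to follow the Bernstein-type exponential-moment recipe used for Lemma 6 in \cite{beygelzimer_agnostic_2010}, adapting it to our estimated-weight random variable $\hat W_i$ via the bound just established in Lemma \ref{lemma:d4l6}. The key observation is that the process $M_n := \sum_{i=1}^n (\hat W_i - \expc{i}{}[\hat W_i])$ is a martingale with respect to the natural filtration generated by the data and the subsampling/labeling randomness $(X_i, Y_i, Q_i, \tilde Q_i)$, since each summand has conditional mean zero by construction. The idea is to apply the standard super-martingale argument of Freedman/Zhang: exponentiate, use the conditional moment-generating-function control from Lemma \ref{lemma:d4l6}, and optimize over the free parameter $\lambda$.

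First I would fix $\lambda \in (0, 3 p_{\min})$ and define the super-martingale $L_n := \exp\!\big(\lambda M_n - \sum_{i=1}^n \psi_i(\lambda)\big)$, where $\psi_i(\lambda)$ is the per-step cumulant bound $\frac{\hat r_i \hatrmt^{(i)} \lambda^2}{2 P_i (1 - \hatrmt^{(i)}\lambda/(3P_i))}$ supplied by Lemma \ref{lemma:d4l6} (on the event that $P_i/\hatrmt^{(i)} \geq p_{\min}$, so the denominator is controlled uniformly by $\lambda < 3p_{\min}$). Conditioning step by step, $\expc{i-1}{}[L_n] \leq L_{n-1}$, so $\expc{}{}[L_n] \leq 1$. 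On the event where additionally the minimum-probability condition holds, we can replace each $\hat r_i \hatrmt^{(i)}/P_i$ by its crude upper bound: $\hat r_i \hatrmt^{(i)}/P_i \leq \hatrmt^{(i)}/P_i \cdot \hat r_i \leq (1/p_{\min}) \cdot \hat r_i$ and also $1 - \hatrmt^{(i)}\lambda/(3P_i) \geq 1 - \lambda/(3 p_{\min})$, so $\sum_i \psi_i(\lambda) \leq \frac{\lambda^2}{2 p_{\min}(1-\lambda/(3p_{\min}))} \sum_i \hat r_i = \frac{n M \lambda^2}{2 p_{\min}(1 - \lambda/(3p_{\min}))}$, using $M = \frac1n\sum_i \hat r_i$. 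Then Markov's inequality on $L_n$ gives, for the target deviation level $a := (1+M)\sqrt{t/(2np_{\min})} + t/(3np_{\min})$,
\begin{align}
\Pr\!\big(M_n \geq n a,\ \min_i P_i/\hatrmt^{(i)} \geq p_{\min}\big)
\leq \exp\!\Big(-\lambda n a + \tfrac{n M \lambda^2}{2 p_{\min}(1 - \lambda/(3p_{\min}))}\Big). \nonumber
\end{align}
Choosing $\lambda = \frac{1}{p_{\min}}\cdot\frac{\sqrt{2t/(np_{\min})}}{M + \sqrt{2t/(np_{\min})}/3}$ (the usual Bernstein choice, scaled by $1/p_{\min}$) and plugging in the definition of $a$ makes the exponent at most $-t$, exactly as in \cite{beygelzimer_agnostic_2010}; this yields $\Pr(E) \leq e^{-t}$.

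The main obstacle I anticipate is bookkeeping the $1/p_{\min}$ scaling carefully: unlike the vanilla IWAL-CAL argument where the relevant ratio is $1/P_i$, here it is $\hatrmt^{(i)}/P_i$, and one must make sure the event restricts this to be at most $1/p_{\min}$ \emph{uniformly over all} $i$ with $\expc{i}{}[W_i] \neq 0$ before the exponential moment bound can be applied step by step (the conditioning argument needs the bound to hold pathwise, not just in expectation). A secondary subtlety is that Lemma \ref{lemma:d4l6} requires $0 < \lambda < 3 P_i/\hatrmt^{(i)}$ per step, which is why we must take $\lambda < 3 p_{\min}$; one should check the Bernstein-optimal $\lambda$ indeed satisfies this, which it does since $\sqrt{2t/(np_{\min})}/(M + \sqrt{2t/(np_{\min})}/3) < 3$. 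Once these range conditions are verified, the rest is the routine Freedman super-martingale computation, and the constant $M$ appearing in the $(1+M)$ prefactor is precisely $\frac1n\sum_i \hat r_i$ as claimed.
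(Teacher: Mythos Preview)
Your overall strategy---apply the Freedman/Zhang super-martingale argument using the per-step cumulant control of Lemma~\ref{lemma:d4l6}---is the same route the paper takes. The gap is in your choice of $\lambda$: you set it to depend on $M=\frac{1}{n}\sum_i \hat r_i$, but $M$ is a function of the labels $y_1,\dots,y_n$ and hence random. A data-dependent $\lambda$ cannot be fed into the Markov/super-martingale step, since $\expc{}[L_n]\leq 1$ is only guaranteed for $\lambda$ fixed in advance (and the right-hand side of your displayed Markov bound then contains the random quantity $M$, so it is not a valid probability bound). Your scaling is also off: the expression you wrote satisfies $\lambda<3/p_{\min}$, not the required $\lambda<3p_{\min}$.

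The paper instead fixes the \emph{deterministic} value $\lambda = 3p_{\min}\,u/(1+u)$ with $u=\sqrt{2t/(9np_{\min})}$, depending only on $t,n,p_{\min}$. With this choice one checks directly that $t/(n\lambda)=\sqrt{t/(2np_{\min})}+t/(3np_{\min})$ and, on the event $\min_i P_i/\hatrmt^{(i)}\geq p_{\min}$, that $\frac{1}{n\lambda}\sum_i\psi_i(\lambda)\leq M\sqrt{t/(2np_{\min})}$. The $M$-term in this cumulant bound exactly cancels the $M$-term in the deviation threshold $(1+M)\sqrt{t/(2np_{\min})}+t/(3np_{\min})$, so $E$ is contained in the self-normalized event
\[
E'=\Bigl\{\tfrac{1}{n}\sum_i(\hat W_i-\expc{i}[\hat W_i])-\tfrac{1}{n\lambda}\sum_i\log\expc{i}\bigl[e^{\lambda(\hat W_i-\expc{i}[\hat W_i])}\bigr]\geq \tfrac{t}{n\lambda}\Bigr\},
\]
and $\Pr(E')\leq e^{-t}$ by Zhang's lemmas. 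No data-dependent optimization over $\lambda$ is needed; the randomness of $M$ is absorbed by the self-normalization rather than by the choice of $\lambda$.
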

\begin{proof}
We follow \cite{beygelzimer_agnostic_2010} and let
\begin{align}
    \lambda := 3 p_{\min} \frac{\sqrt{\frac{2t}{9 n p_{\min}}}} {1 + \sqrt{\frac{2t}{9 n p_{\min}}}}
\end{align}
Note that $0 < \lambda < 3p_{\min}$.
By Lemma \ref{lemma:d4l6}, we know that if $\min \{\frac{P_i}{\hatrmt^{(i)}} : 1 \leq i \leq n \wedge \expc{i}{}[\hat{W}_i] \neq 0 \} \geq p_{\min}$ then
\begin{align}
    \frac{1}{n \lambda} \sum_{i=1}^n \log \expc{i}{}[
        \exp( \lambda ( W_i - \expc{i}{}[W_i] ) )
    ]
    \leq \frac{1}{n} \sum_{i=1}^n \frac{\hat{r}_i \hatrmt^{(i)} \lambda}{2 P_i (1 - \frac{\hatrmt^{(i)} \lambda}{3 P_i}) } \leq M \sqrt{\frac{t}{2 n p_{\min}}}
\end{align}
and
\begin{align}
    \frac{t}{n \lambda}
    = \sqrt{\frac{t}{2 n p_{\min}}} + \frac{t}{3 n p_{\min}}
\end{align}
Let $E'$ be the event that
\begin{align}
    \frac{1}{n} \sum_{i=1}^n (\hat{W}_i - \expc{i}{}[\hat{W}_i])
    - \frac{1}{n \lambda} \sum_{i=1}^n \log \expc{i}{}[\exp( \lambda(\hat{W} - \expc{i}{}[\hat{W}]))]
    \geq \frac{t}{n \lambda}
\end{align}
and let $E''$ be the event $\min \{\frac{P_i}{\hatrmt^{(i)}} : 1 \leq i \leq n \wedge \expc{i}{}[\hat{W}_i] \neq 0\} \geq p_{\min}$. Together, the above two equations imply $E \subseteq E' \bigcap E''$.
By \cite{zhang_data_2005}'s lemmas 1 and 2, $\Pr(E) \leq \Pr(E' \bigcap E'') \leq Pr(E') \leq e^{-t}$.
\end{proof}

The following is an immediate consequence of the previous lemma.
\begin{lemma}
\label{lemma:midstocha}
Pick any $t \geq 0$ and $n \geq 1$.
Assume $1 \leq \frac{\hatrmt^{(i)}}{P_i} \leq r_{\max}$ for all $1 \leq i \leq n$, and let $R_n := \max \{ \frac{\hatrmt^{(i)}}{P_i} : 1 \leq i \leq n \wedge \expc{i}{}[\hat{W}] \neq 0 \} \bigcup \{1\} $.
We have
\begin{align}
    \Pr \left( \left | \frac{1}{n} \sum_{i=1}^n \hat{W}_i - \frac{1}{n} \sum_{i=1}^n \expc{i}{}[\hat{W}_i] \right | \geq (1 + M) \sqrt{\frac{R_n t}{2n}} + \frac{R_n t}{3n} \right) \leq 2(2 + \log_2 r_{\max}) e^{-t/2}
\end{align}
\end{lemma}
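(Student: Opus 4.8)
The plan is to bootstrap the fixed-$p_{\min}$ tail bound of the preceding lemma up to the data-dependent quantity $R_n$ by a dyadic ``stratification''/peeling union bound over the candidate values of $p_{\min}$, in the style of Lemma~8 of \cite{beygelzimer_agnostic_2010} and the martingale argument of \cite{zhang_data_2005}. The structural fact that makes this cheap is that the standing hypothesis $1 \le \hatrmt^{(i)}/P_i \le r_{\max}$ forces $p^{*}_{\min} := \min\bigl(\{ P_i/\hatrmt^{(i)} : 1 \le i \le n,\ \expc{i}[\hat{W}_i] \neq 0 \} \cup \{1\}\bigr) = 1/R_n$ to lie in $[1/r_{\max}, 1]$, so only the $O(\log_2 r_{\max})$ dyadic scales $2^{-j}$ with $0 \le j \le \lceil \log_2 r_{\max}\rceil$ are ever relevant.

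First I would, for every integer $j$ with $0 \le j \le \lceil \log_2 r_{\max}\rceil$ and every sign, instantiate the preceding lemma with $p_{\min} = 2^{-j}$ and with its confidence parameter set to $t/2$ (rather than $t$). This produces at most $2(\lceil \log_2 r_{\max}\rceil + 1) \le 2(2 + \log_2 r_{\max})$ ``bad'' events $E_{j}^{\pm}$, each of probability at most $e^{-t/2}$, where $E_j^{\pm}$ asserts simultaneously that the signed deviation $\pm\bigl(\tfrac1n\sum_i \hat{W}_i - \tfrac1n\sum_i \expc{i}[\hat{W}_i]\bigr)$ exceeds $(1+M)\sqrt{\tfrac{t/2}{2n\,2^{-j}}} + \tfrac{t/2}{3n\,2^{-j}}$ and that $\min\{P_i/\hatrmt^{(i)} : \expc{i}[\hat{W}_i] \neq 0\} \ge 2^{-j}$. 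A union bound then gives $\Pr\bigl(\bigcup_{j,\pm} E_j^{\pm}\bigr) \le 2(2 + \log_2 r_{\max})\,e^{-t/2}$; note $M = \tfrac1n\sum_i \hat{r}_i$ does not depend on $j$, so it is the same constant throughout and can be pulled outside the union bound.

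Next I would show that on the complement of $\bigcup_{j,\pm}E_j^{\pm}$ the claimed inequality holds. Given the realized value $R_n \in [1, r_{\max}]$, set $j^{\star} := \lceil \log_2 R_n \rceil$, so that $j^{\star} \in \{0,\dots,\lceil \log_2 r_{\max}\rceil\}$, $2^{-j^{\star}} \le 1/R_n = p^{*}_{\min}$, and $R_n \le 2^{j^{\star}} < 2R_n$. Because $2^{-j^{\star}} \le p^{*}_{\min} \le P_i/\hatrmt^{(i)}$ for every $i$ with $\expc{i}[\hat{W}_i]\neq 0$, the min-condition in $E_{j^{\star}}^{\pm}$ is satisfied for this realization; hence, since $E_{j^{\star}}^{+}$ and $E_{j^{\star}}^{-}$ both fail, the two signed deviations are each at most $(1+M)\sqrt{\tfrac{t/2}{2n\,2^{-j^{\star}}}} + \tfrac{t/2}{3n\,2^{-j^{\star}}}$. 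Substituting $2^{j^{\star}} < 2R_n$ into this threshold and cancelling the stray factors of $2$ gives $(1+M)\sqrt{\tfrac{R_n t}{2n}} + \tfrac{R_n t}{3n}$, which bounds $\bigl|\tfrac1n\sum_i \hat{W}_i - \tfrac1n\sum_i \expc{i}[\hat{W}_i]\bigr|$, as required.

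I expect the only genuine difficulty to be bookkeeping rather than anything conceptual: one must set up the dyadic grid so that the ``good'' index $j^{\star}$ picked out by the random $R_n$ is always among the finitely many indices union-bounded over, and must reconcile the factor-$2$ loss from $2^{j^{\star}} < 2R_n$ with the target constants — which is exactly why the preceding lemma is applied at confidence parameter $t/2$, so that this factor of $2$ and the halving of $t$ cancel and the statement comes out precisely as $(1+M)\sqrt{R_n t/(2n)} + R_n t/(3n)$ with tail $2(2+\log_2 r_{\max})\,e^{-t/2}$.
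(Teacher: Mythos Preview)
Your proposal is correct and is precisely the dyadic peeling argument of Lemma~8 in \cite{beygelzimer_agnostic_2010}, to which the paper's own proof simply defers. The bookkeeping you describe---applying the preceding lemma at confidence $t/2$, union-bounding over the $\lceil \log_2 r_{\max}\rceil + 1 \le 2 + \log_2 r_{\max}$ dyadic scales and both signs, then selecting $j^{\star} = \lceil \log_2 R_n\rceil$ so that $2^{j^{\star}} \le 2R_n$ cancels the halved $t$---matches that reference argument exactly.
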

\begin{proof}
This proof follows identically to \cite{beygelzimer_agnostic_2010}'s lemma 8.
\end{proof}

We can finally bound $\Delta_4$ by bounding the remaining free quantity $M$.
\begin{lemma}
\label{lemma:maind4}
With probability at least $1 - \delta$, the following holds over all $n \geq 1$ and $h \in H$:
\begin{align}
    \left| \Delta_4 \right|
    \leq (2 + \norm{\hat{\theta}}_2) \sqrt{\frac{\varepsilon_n}{P_{\min, n}(h)}} + \frac{\varepsilon_n}{P_{\min, n}(h)}
\end{align}
where $\varepsilon_n := \frac{16 \log(2 ( 2 + n \log_2 n) n (n+1) |H| / \delta)}{n}$
and $P_{\min, n}(h) = \min \{P_i:1 \leq i \leq n \wedge h(X_i) \neq h^*(X_i) \} \bigcup \{1\}$. 
\end{lemma}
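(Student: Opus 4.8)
The plan is to lift the single-$n$, single-$h$ tail bound of Lemma~\ref{lemma:midstocha} to an estimate that holds simultaneously over all sample sizes $n \ge 1$ and all hypotheses $h \in \mathcal{H}$, following the way the uniform deviation bound is extracted from the one-shot estimate in \cite{beygelzimer_agnostic_2010}. First I would fix $h$ and specialize the bounded function inside $\hat{W}_i$ to $f = \err(h) - \err(h^*)$, so that $\expc{i}[\hat{W}_i] = \hat{r}_i\, f(X_i, Y_i)$ is nonzero only on the disagreement set $\{i : h(X_i) \neq h^*(X_i)\}$. Consequently the data-dependent range $R_n$ of Lemma~\ref{lemma:midstocha} only sees disagreement indices; and because the importance-weighted IWAL-CAL sampling probabilities are normalized so that the weight $\hatrmt^{(i)}$ cancels out of the per-point range $\hatrmt^{(i)}/P_i$, this reduces to $R_n = 1/P_{\min,n}(h)$. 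The same sampling rule keeps each $P_i$ bounded below (on the order of $1/i$), which gives the crude a-priori bound $r_{\max} \lesssim n$ and hence caps the stratification prefactor of Lemma~\ref{lemma:midstocha} at $2(2 + \log_2 n)$.

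Next I would run the union bound. For each $n$, invoke Lemma~\ref{lemma:midstocha} with $t = t_n$ chosen so that $2(2 + n\log_2 n)\, n(n+1)\, |\mathcal{H}|\, e^{-t_n/2} = \delta$, i.e. $t_n = 2 \log\!\big( 2(2 + n\log_2 n)\, n(n+1)\, |\mathcal{H}| / \delta \big)$, which is exactly $t_n = \varepsilon_n n / 8$ for $\varepsilon_n$ as in the statement. With this choice the per-$n$, per-$h$ failure probability is at most $\delta / (n(n+1)|\mathcal{H}|)$, so summing over $n \ge 1$ (the series $\sum_n 1/(n(n+1))$ converges to $1$) and then over the $|\mathcal{H}|$ hypotheses keeps the total failure probability below $\delta$; on the complementary event the conclusion of Lemma~\ref{lemma:midstocha} holds for every pair $(n, h)$ at once.

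On that event, substituting $t_n = \varepsilon_n n / 8$ turns the bound of Lemma~\ref{lemma:midstocha} into $\frac{1}{4}(1+M)\sqrt{R_n \varepsilon_n} + \frac{1}{24} R_n \varepsilon_n$; using $R_n = 1/P_{\min,n}(h)$ this is at most $\frac{1}{4}(1+M)\sqrt{\varepsilon_n / P_{\min,n}(h)} + \frac{1}{24}\, \varepsilon_n / P_{\min,n}(h)$. It then remains to bound $M = \frac{1}{n}\sum_{i=1}^n \hat{r}_i$. Since $\hat{r}_i = \hatrmt^{(i)}\, \expc{i}[\tilde{Q}_i]$ and $\expc{i}[\tilde{Q}_i] = \Pss(y_i) \le 1$, writing $\hatrmt^{(i)} = 1 + \hat{\theta}(y_i)$ and grouping the $n$ summands by label $y$ (with multiplicities $n_y$) gives $M \le 1 + \sum_y (n_y/n)\, \hat{\theta}(y) \le 1 + \norm{\hat{\theta}}_2$, the last step being Cauchy--Schwarz together with $\sum_y (n_y/n)^2 \le (\sum_y n_y/n)^2 = 1$. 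Hence $1 + M \le 2 + \norm{\hat{\theta}}_2$, and combining the two terms (the constants $\frac14, \frac1{24} \le 1$ only weaken the inequality) yields exactly $|\Delta_4| \le (2 + \norm{\hat{\theta}}_2)\sqrt{\varepsilon_n / P_{\min,n}(h)} + \varepsilon_n / P_{\min,n}(h)$.

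The \emph{main obstacle} is the reduction in the first step: establishing that the data-dependent range $R_n$ really collapses to $1/P_{\min,n}(h)$. This needs the disagreement structure of $f = \err(h) - \err(h^*)$ (which restricts the relevant indices) to work together with the precise normalization of the importance-weighted sampling probabilities (which removes the weight magnitude from the range), and it also needs the a-priori control of the unknown $r_{\max}$ to cost only the harmless $\log_2 n$ factor rather than anything that would spoil the $\varepsilon_n$ rate. Once that bookkeeping is settled, what remains is the routine IWAL-CAL union-bound computation carried out above.
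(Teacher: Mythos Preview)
Your proposal follows essentially the same route as the paper: bound $M \le 1 + \norm{\hat{\theta}}_2$ via Cauchy--Schwarz after grouping by label, then invoke Lemma~\ref{lemma:midstocha} and take a union bound over $n$ and $h$. The paper's own proof is terse---it records only the Cauchy--Schwarz step and then says ``the rest of the claim follows by Lemma~\ref{lemma:midstocha} and a union bound over hypotheses and datapoints''---so you have in fact filled in more of the bookkeeping (the choice of $t_n$, the telescoping sum over $n$, and the reduction of $R_n$ to $1/P_{\min,n}(h)$) than the paper does; the ``main obstacle'' you flag is exactly the step the paper leaves implicit.
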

\begin{proof}
We define the $k$-sized vector $\tilde{\ell}(j) = \frac{1}{n} \sum_{i=1}^n \mathds{1}_{y_i = j} \hat{\theta}(j)$.
Here, $v(j)$ is an abuse of notation and denotes the $j$th element of a vector $v$.
Note that we can write $M$ by instead summing over labels, $M = \frac{1}{n} \sum_{i=1}^n \hat{\theta}_i = \sum_{j=1}^k \tilde{\ell}(j)$.
Applying the Cauchy-Schwarz inequality, we have that $\frac{1}{n} \sum_{i=1}^n \hat{\theta}_i \leq \frac{1}{n} \norm{\hat{\theta}}_2 \norm{\dot{\ell}}_2$ where $\dot{\ell}(j)$ is another $k$-sized vector where $\dot{\ell}(j) := \sum_{i=1}^n \mathds{1}_{y_i = j}$.
Since $\norm{\dot{\ell}}_2 \leq n$, we have that $M \leq 1 + \norm{\hat{\theta}}_2$.
The rest of the claim follows by lemma \ref{lemma:midstocha} and a union bound over hypotheses and datapoints.
\end{proof}

The term $\Delta_1$ is be bounded with Theorem 1.
We now bound $\Delta_2$.
This is a simple generalization bound of an importance weighted estimate of $f$.

\begin{lemma}
For any $\delta > 0$, with probability at least $1 - \delta$, then for all $n \geq 1$, $h \in H$:
\begin{align}
    \left | \Delta_2 \right | \leq
    \frac{
        2 d_{\infty} (\Pte, \Psrc) \log (\frac{2 n |H|}{\delta})
    }{
        3n
    }
    + \sqrt{\frac{  
        2 d_2 (\Pte, \Psrc) \log (\frac{2 n |H|}{\delta})
    }{
        n
    }}
\end{align}
\end{lemma}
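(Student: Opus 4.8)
The plan is to recognize $\Delta_2$ as the sampling deviation of an importance-weighted empirical mean and to control it with a Bernstein-type inequality in which the usual range and variance factors are replaced, respectively, by the R\'enyi-$\infty$ and R\'enyi-$2$ divergences of $\Pte$ against $\Psrc$, followed by a union bound over the hypothesis class.

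\textbf{Step 1 (identify the estimand).} First I would compute the inner conditional expectation $\expc{i}[W_i]$, where $W_i = Q_i \tilde Q_i \rmt^{(i)} f(x_i,y_i)/P_i$. Integrating over the labeling indicator $Q_i$ (with mean $P_i$) cancels the $1/P_i$, and integrating over the subsampler indicator $\tilde Q_i$ contributes a factor $\Pss(y_i)\propto \rsm(y_i)$; in the idealized regime where the subsampler draws exactly from $\Pmed$ (as justified in the class-balanced sampling discussion), this composes with $\rmt^{(i)}$ to give $\expc{i}[W_i] = \rst(y_i)\, f(x_i,y_i)$, the stray normalization constant tending to $1$ in the limit and being absorbed into the $\mathcal{O}(\cdot)$. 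Consequently $\tfrac1n\sum_i \expc{i}[W_i]$ is the empirical average of the i.i.d.\ variables $g(x_i,y_i):=\rst(y_i)f(x_i,y_i)$, $(x_i,y_i)\sim\Psrc$, while $\expc{x,y\sim\Psrc}[W_i]$ is its mean, so $\Delta_2$ is exactly the centered empirical mean of $g$ over the i.i.d.\ data draw. (The $Q,\tilde Q$ randomness has already been integrated on both sides, so it does not enter here; the $Q$-fluctuation is what $\Delta_4$ handles and the finite-sample subsampling drift is what $\Delta_1$/Theorem~1 handle.)

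\textbf{Step 2 (range and variance via R\'enyi divergences).} Since $|f|\le 1$ and $\rst(y)=\Pte(y)/\Psrc(y)$, the range obeys $\|g\|_\infty \le \norm{\rst}_\infty = d_\infty(\Pte,\Psrc)$. For the second moment, $\expc{\Psrc}[g^2]\le \expc{\Psrc}[\rst(y)^2] = \sum_y \Psrc(y)\,\Pte(y)^2/\Psrc(y)^2 = \expc{\Pte}[\rst(y)] = d_2(\Pte,\Psrc)$, the standard identity that the second moment of importance weights equals the R\'enyi-$2$ divergence (cf.\ \cite{cortes_learning_2010}). Plugging these into Bernstein's inequality for a fixed $h$ gives, with probability $1-\delta'$, $|\Delta_2| \le \sqrt{2 d_2(\Pte,\Psrc)\log(2/\delta')/n} + 2 d_\infty(\Pte,\Psrc)\log(2/\delta')/(3n)$.

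\textbf{Step 3 (uniformity) and the main obstacle.} Finally I would make this simultaneous in $n$ and $h$ by a standard union bound over the $|H|$ hypotheses (each determining $f = \ell(h(\cdot),\cdot)-\ell(h^*(\cdot),\cdot)$) and over sample sizes $n\ge 1$, which replaces $\log(1/\delta')$ by $\log(2n|H|/\delta)$ and yields the stated bound. The delicate point is Step~1: making precise that the subsampler is taken at its large-sample limit, so that $\expc{i}[\tilde Q_i\mid y_i]$ is exactly proportional to $\Pmed(y_i)/\Psrc(y_i)$ and the subsampling weight composes with $\rmt$ to collapse to $\rst$ in expectation, and cleanly separating this idealization (handled here) from the finite-sample subsampling drift (deferred to $\Delta_1$ via Lemma~\ref{lemma:drift}). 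Once that is pinned down, the rest is a textbook Bernstein-plus-union-bound computation.
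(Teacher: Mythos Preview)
Your proposal is correct and is essentially the paper's approach: the paper's entire proof is the one-liner ``This inequality is a direct application of Theorem~2 from \cite{cortes_learning_2010},'' and what you have written in Steps~2--3 is precisely an unpacking of that theorem (Bernstein with the range controlled by $d_\infty$ and the second moment by $d_2$, followed by a union bound over $h\in H$ and $n\ge 1$). Your Step~1 spells out why $\Delta_2$ is a centered i.i.d.\ average once $Q_i,\tilde Q_i$ are integrated out, which the paper leaves implicit; the idealization you flag (that $\expc{i}[\tilde Q_i]$ collapses to the exact source-to-medial ratio) is indeed what the paper assumes, with the discrepancy absorbed into $\Delta_1$ via Lemma~\ref{lemma:drift}.
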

\begin{proof}
This inequality is a direct application of Theorem 2 from \cite{cortes_learning_2010}.
\end{proof}

The following lemma bounds the remaining term $\Delta_1$.
\begin{lemma}
For all $n \geq 1, h \in H$:
\begin{align}
    \abs{\Delta_1} \leq \norm{\rsm}_{\infty} \err(h_0, \rsm)
\end{align}
\end{lemma}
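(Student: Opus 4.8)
The plan is to recognize $\Delta_1$ as exactly the label-shift \emph{drift} introduced by subsampling with proxy labels, and to reduce it to Lemma~\ref{lemma:drift}. First I would resolve the expectation $\expc{i}[\,\cdot\,]$ over the sampling randomness hidden in $W_i = Q_i\tilde Q_i\,\rmt^{(i)} f(x_i,y_i)$ (with the $1/P_i$ normalization implicit in the IWAL estimator, as in Lemma~\ref{lemma:d4l6}). Taking the expectation over the subsampling indicator $\tilde Q_i$ and the query indicator $Q_i$ — using $\expc{i}[\tilde Q_i\mid x_i] = \Pss(h_0(x_i))$ from the class-balanced sampling discussion (the subsampler filters on the proxy label $h_0(x_i)$) together with the IWAL normalization $\expc{i}[Q_i/P_i\mid\tilde Q_i=1]=1$ — one obtains
\begin{align*}
\expc{x,y\sim\Ps}\expc{i}[W_i] \;=\; \expc{x,y\sim\Ps}\!\left[\rsm(h_0(x))\,\rmt(y)\,f(x,y)\right],
\end{align*}
up to the constant $C$ relating $\Pss$ to $\rsm$ via $\Pss = \tfrac{1}{C}\,\Pmed/\Ps$, which the ``subsampling in the limit'' convention sets so that the displayed estimator is properly normalized.

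Next I would compare this against the \emph{idealized} subsampler that filters on the true label $y$ rather than $h_0(x)$. Because $\rsm(y)\,\rmt(y) = \rst(y) = \Pt(y)/\Ps(y)$ and label shift leaves the class-conditionals $\Ps(x\mid y) = \Pt(x\mid y)$ intact, the idealized estimator is exactly unbiased: $\expc{x,y\sim\Ps}[\rsm(y)\rmt(y)f(x,y)] = \expc{x,y\sim\Pt}[f(x,y)]$. Subtracting yields the clean identity
\begin{align*}
\Delta_1 \;=\; \expc{x,y\sim\Ps}\!\left[\big(\rsm(y) - \rsm(h_0(x))\big)\,\rmt(y)\,f(x,y)\right],
\end{align*}
which isolates $\Delta_1$ as precisely the error caused by $h_0$'s mispredictions.

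Finally I would bound the residual. Since $|f|\le 1$, and since $\rsm(\cdot)$ lies in $[0,\norm{\rsm}_\infty]$ and satisfies $\rsm(y) = \rsm(h_0(x))$ whenever $h_0(x) = y$, we have $|\rsm(y) - \rsm(h_0(x))| \le \norm{\rsm}_\infty\,\mathbbm{1}[h_0(x)\neq y]$ pointwise, so
\begin{align*}
\abs{\Delta_1} \;\le\; \norm{\rsm}_\infty\,\expc{x,y\sim\Ps}\!\left[\mathbbm{1}[h_0(x)\neq y]\,\rmt(y)\right] \;=\; \norm{\rsm}_\infty\,\err(h_0,\rsm),
\end{align*}
the last quantity being the importance-weighted $0/1$-error of $h_0$ on $\Ps$ as used in Lemma~\ref{lemma:drift}; this is exactly the drift bound established there. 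Uniformity over all $n\ge1$ and $h\in H$ is immediate since the bound is deterministic and depends on neither the sample nor $h$.

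The step I expect to be the main obstacle is the bookkeeping in the first two paragraphs: making the ``idealized subsampler is exactly $\expc{\Pt}[f]$'' claim rigorous requires tracking the normalizing constant $C$ and the limiting-subsampling convention, and aligning which expectations are over $Q$, over $\tilde Q$, and over the data so that the $1/P_i$ factor and the proxy-label weighting telescope correctly. Once that identity is exact, the final $0/1$-error bound mirrors the second-term estimate in the proof of Lemma~\ref{lemma:drift} and is routine.
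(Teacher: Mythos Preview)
Your proposal is correct and is essentially the paper's argument unpacked. The paper's own proof is a one-line appeal to Lemma~\ref{lemma:drift} together with Theorem~2 of \cite{azizzadenesheli_regularized_2019}: the latter identifies $\Delta_1$ with the label-shift drift term, and Lemma~\ref{lemma:drift} then bounds that drift by $\norm{\rsm}_\infty\,\err(h_0,\rsm)$. Your write-up reproduces the content of Lemma~\ref{lemma:drift}'s proof in place---the ``compare to the idealized true-label subsampler, then bound the residual by the weighted $0/1$ error of $h_0$'' step is exactly the second-term estimate there---so the substance is identical. Your flagged obstacle (tracking the normalizing constant $C$ and the $1/P_i$ factor so that the idealized estimator equals $\expc{\Pt}[f]$ exactly) is real and is precisely what the paper outsources to the cited RLLS result rather than writing out; once that bookkeeping is granted, the rest is as you describe.
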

\begin{proof}
This inequality follows from our Lemma \ref{lemma:drift} and \cite{azizzadenesheli_regularized_2019}'s Theorem 2.
\end{proof}

Theorem \ref{thm:dev} follows by applying a triangle inequality over $\Delta_1, \Delta_2, \Delta_3, \Delta_4$.
If a warm start of $m$ datapoints sampled from $\Pwarm$ is used, the deviation bound is instead:
\begin{align}
\label{eq:warmdev}
& \left | err(h, Z_{1:n}) - err(h^*, Z_{1:n}) - err(h) + err(h^*) \right| \nonumber  \\
& \leq \mathcal{O} 
     \left(
    (2 + \frac{n \norm{\thetaut}_2 + m \norm{\thetawt}_2}{n+m}) \sqrt{\frac{\varepsilon_n}{P_{\min, n}(h)}} + \frac{\varepsilon_n}{P_{\min, n}(h)}
    + \frac{
        2 d_{\infty} (\Pte, \Psrc) \log (\frac{2 n |H|}{\delta})
    }{
        3 ( n + m )
    }
    \right.
    \nonumber \\
    & \left.
    + \sqrt{\frac{  
        2 d_2 (\Pte, \Psrc) \log (\frac{2 n |H|}{\delta})
    }{
        n + m
    }}
    + \frac{n}{n+m} \norm{\rsm}_{\infty} \err(h_0, \rsm)
    \right.
    \nonumber \\
    & \left.
    + \frac{n}{\sigma_{\min}} \left(
        \norm{\thetamt}_2
        \sqrt{
            \frac{\log \brck{\frac{n k}{\delta}}}{\lambda n}
        }
        +
        \sqrt{
            \frac{\log \brck{\frac{n}{\delta}}}{\lambda n}
        }
        + 
        \sqrt{
            \frac{\log \brck{\frac{n}{\delta}}}{n'}
        }
        + 
        \norm{\thetasm}_{\infty}
        \text{err}(h_0, \rmt) 
    \right)
\right)\nonumber 
\end{align}
The only change is that variance and subsampling terms are scaled by $\frac{n}{n+m}$, both of which disappear in the limit where $n >> m$. 
For the remainder of this proof, we continue to set $m = 0$.

% We now extend our deviation bound to a generalized setting where a warm start dataset is available to the learner.
% We substitute
% $\Psrc = \frac{n \Pun + m \Pwarm}{n+m}$
% and
% $\Plab = \frac{n \Pmed + m \Pwarm}{n+m}$.
% We redefine $\tilde{\theta}$ as $\theta_{\text{lab} \rightarrow \text{tar}}$.
% Our bound on $\Delta_4$ from lemma \ref{lemma:maind4} holds as is (there is no active learning associated with the warm start datapoints).
% Our bound on $\Delta_3$ simply scales by a factor of $\frac{n}{n+m}$.

Theorem 2 follows by replacing the deviation bound in \cite{beygelzimer_agnostic_2010}'s Theorem 2 with our Theorem \ref{thm:dev}.
Theorem 3 similarly follows from \cite{beygelzimer_agnostic_2010}'s Theorem 3 but with two additions.
First, $\lambda n$ datapoints are sampled for label shift estimation.
Second, the number of datapoints which are either accepted or rejected by the active learning algorithm can be much smaller than the number of datapoints sampled from $\Ps$ due to subsampling.
We can determine this proportion with an upper-tail Chernoff bound.
\begin{lemma}
When $\epsilon < 2^{(-2e-1)/\norm{\rsm}_{\infty}}$, given $n$ datapoints from $\Ps$, subsampling will yield $\textbf{n}$ where,
\begin{align}
    \Pr\brck{\textbf{n} \geq \frac{n}{\norm{\rsm}_{\infty}} + \log_2 \brck{\frac{1}{\epsilon}}} \leq \epsilon
\end{align}
\end{lemma}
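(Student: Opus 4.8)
The plan is to recognize $\textbf{n}$ as a sum of independent Bernoulli indicators and apply a standard upper-tail Chernoff bound. Write $\textbf{n} = \sum_{i=1}^{n} \tilde{Q}_i$, where $\tilde{Q}_i \in \{0,1\}$ records whether the $i$-th datapoint $x_i$, drawn i.i.d.\ from $\Ps$, survives the subsampler; conditioned on $x_i$ the variable $\tilde{Q}_i$ is Bernoulli with parameter $\Pss(\phi(x_i))$, and the $\tilde{Q}_i$ are mutually independent. First I would identify the mean $\mu \coloneqqq \mathbb{E}[\textbf{n}] = n\,\mathbb{E}_{x \sim \Ps}[\Pss(\phi(x))]$. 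In the limiting regime used throughout the paper the filter is $\Pss = \tfrac{1}{C}\,\Pmed/\Ps$ with $C = \norm{\rsm}_{\infty}$ the smallest constant making $\Pss$ a valid acceptance probability; since the true-label expectation $\mathbb{E}_{y\sim\Ps}[\rsm(y)] = \sum_y \Pmed(y) = 1$, an accurate $\phi$ gives $\mu = n/\norm{\rsm}_{\infty}$, and the only correction in the non-idealized case is a proxy-misclassification term of precisely the kind bounded in Lemma~\ref{lemma:drift}, so one still has $\mu \le n/\norm{\rsm}_{\infty}$ up to that (small) term.

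I would then invoke the multiplicative Chernoff bound for sums of independent Bernoullis in its base-$2$ form: from $\mathbb{E}[e^{s\textbf{n}}] \le \exp(\mu(e^{s}-1))$ one obtains, after optimizing the exponent near $s = \ln 2$, the clean corollary $\Pr(\textbf{n} \ge R) \le 2^{-R}$ for every $R \ge 2e\,\mu$. Taking $R \coloneqqq \frac{n}{\norm{\rsm}_{\infty}} + \log_2\!\brck{\frac{1}{\epsilon}}$, this yields $\Pr(\textbf{n} \ge R) \le 2^{-R} = 2^{-n/\norm{\rsm}_{\infty}}\,\epsilon \le \epsilon$, which is exactly the asserted inequality — provided the side condition $R \ge 2e\,\mu$ is met. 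This is precisely where the hypothesis $\epsilon < 2^{(-2e-1)/\norm{\rsm}_{\infty}}$ enters: it forces $\log_2(1/\epsilon)$ to be large enough relative to $1/\norm{\rsm}_{\infty}$ (hence, via $\mu \le n/\norm{\rsm}_{\infty}$, relative to $\mu$) that $\frac{n}{\norm{\rsm}_{\infty}} + \log_2(1/\epsilon) \ge 2e\,\mu$, so the corollary applies.

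The main obstacle is the constant bookkeeping in that last verification: pinning down the exact coefficient $2e+1$ requires being careful about how the multiplicative Chernoff slack (the $e^{\mu}$ factor incurred by taking $s = \ln 2$, equivalently the $2e$ threshold in the corollary) is traded off against the additive $\log_2(1/\epsilon)$, and about keeping $n/\norm{\rsm}_{\infty}$ a legitimate upper bound for $\mu$ once the proxy-label error is reinstated through Lemma~\ref{lemma:drift}. A minor additional point: if the unlabeled pool is finite and the subsampler draws without replacement, the $\tilde{Q}_i$ are only negatively associated rather than independent, but the upper-tail Chernoff estimate still holds in that case (hypergeometric tail bounds), so the argument is unaffected.
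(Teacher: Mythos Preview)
Your approach --- writing $\textbf{n}$ as a sum of independent Bernoulli acceptance indicators, bounding the per-trial mean by $1/\norm{\rsm}_{\infty}$, and invoking an upper-tail multiplicative Chernoff bound --- is exactly what the paper does. The paper's own proof is in fact more terse than yours: it only computes $\mu = \mathbb{E}_{y\sim\Ps}[\Pss(y)] = C \le 1/\norm{\rsm}_{\infty}$ (using the true label $y$, i.e.\ the idealized subsampler, without the proxy-label correction you discuss) and leaves the Chernoff step entirely implicit.

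There is, however, a genuine gap in your verification of the side condition. You claim that the hypothesis $\epsilon < 2^{(-2e-1)/\norm{\rsm}_{\infty}}$, which gives $\log_2(1/\epsilon) > (2e+1)/\norm{\rsm}_{\infty}$, together with $\mu \le n/\norm{\rsm}_{\infty}$, forces $R = n/\norm{\rsm}_{\infty} + \log_2(1/\epsilon) \ge 2e\mu$. But this implication runs the wrong way: the inequality $\mu \le n/\norm{\rsm}_{\infty}$ permits $\mu$ to be as large as $n/\norm{\rsm}_{\infty}$, in which case $R \ge 2e\mu$ becomes $\log_2(1/\epsilon) \ge (2e-1)\,n/\norm{\rsm}_{\infty}$ --- a requirement that scales linearly in $n$, whereas the stated hypothesis is $n$-independent. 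For $n \ge 2$ the hypothesis as written does not trigger the $\Pr(\textbf{n}\ge R)\le 2^{-R}$ corollary. The paper's proof does not address this point either, so the defect may well lie in the lemma statement itself (a missing factor of $n$ in the exponent of the hypothesis); but as a proof of the statement exactly as given, your side-condition check does not go through, and the sentence ``hence, via $\mu \le n/\norm{\rsm}_{\infty}$, relative to $\mu$'' is where the logic breaks.
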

\begin{proof}
The number of subsampled datapoints is sum of independent Bernoulli trials with mean $\mu$,
\begin{align}
    \mu = \expc{y \sim \Ps} \brcksq{\Pss(y)} =  \expc{y \sim \Ps} \brcksq{C \frac{\Pmed(y)}{\Ps(y)}}
    = \expc{y \sim \Pmed} \brcksq{C}
    = C
\end{align}
where $C$ is a constant such that $C \frac{\Pmed(y)}{\Ps(y)} \leq 1$ for all labels $y$.
Thus, $\mu = C \leq 1 / \norm{\rsm}_{\infty}$.
\end{proof}

\newpage
\section{Supplementary Experiments}
\setcounter{figure}{7}

\subsection{NABirds Regional Species Experiment}
We conduct an additional experiment on the NABirds dataset using the grandchild level of the class label hierarchy, which results in 228 classes in total.
These classes correspond to individual species and present a significantly larger output space than considered in Figure 6.
For realism, we retain the original training distribution in the dataset as the source distribution; sampling I.I.D. from the original split in the experiment.
To simulate a scenario where a bird species classifier is adapted to a new region with new bird frequencies, we induce an imbalance in the target distribution to render certain birds more common than others.
Table \ref{tab:nabirds} demonstrates the average accuracy of our framework at different label budgets.
We observe consistent gains in accuracy at different label budgets.

\begin{table}[h]
    \centering
    \begin{center}
     \begin{tabular}{||c c c c ||} 
     \hline
     Strategy & Acc (854 Labels) & Acc (1708) & Acc (3416) \\ [0.5ex] 
     \hline\hline
     MALLS (MC-D) & \textbf{0.51} &  \textbf{0.53} &  \textbf{0.56} \\
     \hline
     Vanilla (MC-D) & 0.46 & 0.48 & 0.50 \\
     \hline
     Random & 0.38 & 0.40 & 0.42 \\
     \hline
    \end{tabular}
    \end{center}
    \caption{NABirds (species) Experiment Average Accuracy}
    \label{tab:nabirds}
\end{table}

\subsection{Change in distribution}
To further analyze the learning behavior of MALLS, we can analyze the label distribution of datapoints selected by the active learner.
In Figure \ref{fig:dists}, MC-Dropout, Max-Margin and Max-Entropy strategies are evaluated on CIFAR100 under \textit{canonical label shift}.
By analyzing the uniformity bias and the rate of convergence to the target distribution, we can observe that MALLS exhibits a unique sampling bias which cannot be explained away as simply a class-balancing bias.
This indicates that MALLS may be successful in recovering information from distorted uncertainty estimates.
 \begin{figure}[bthp]
        \centering
         \setlength{\tabcolsep}{-0.2pt}
        \begin{tabular}{ccc}
           \includegraphics[height=4cm]{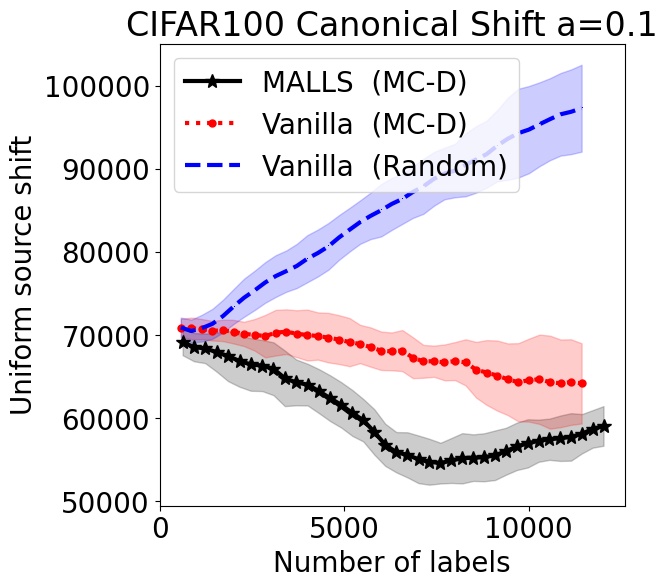}&
           \includegraphics[height=4cm]{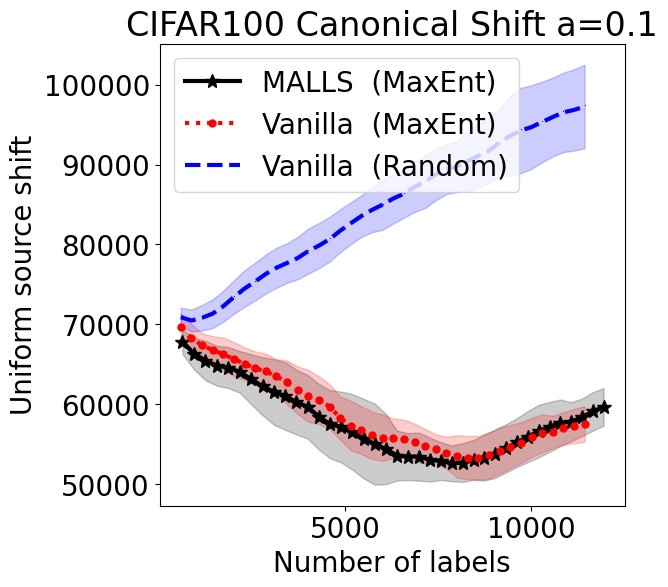}&
           \includegraphics[height=4cm]{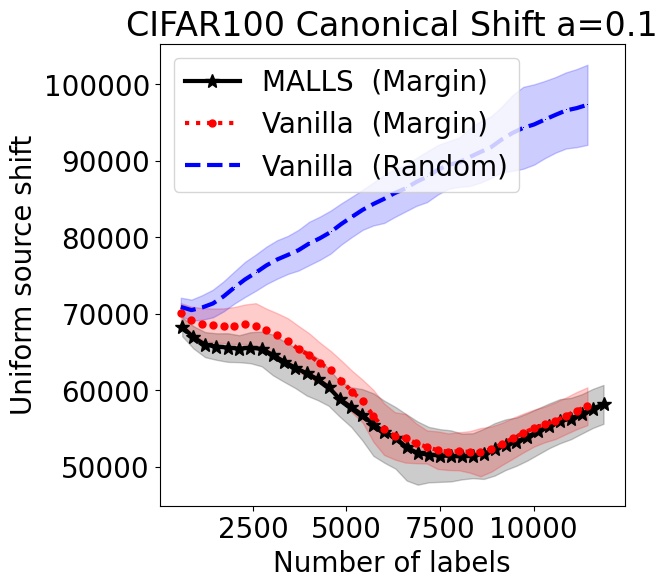} \\
           \includegraphics[height=4cm]{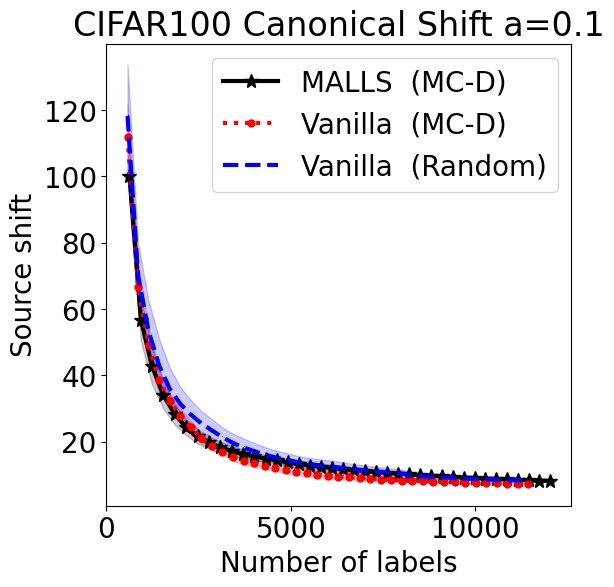}&
           \includegraphics[height=4cm]{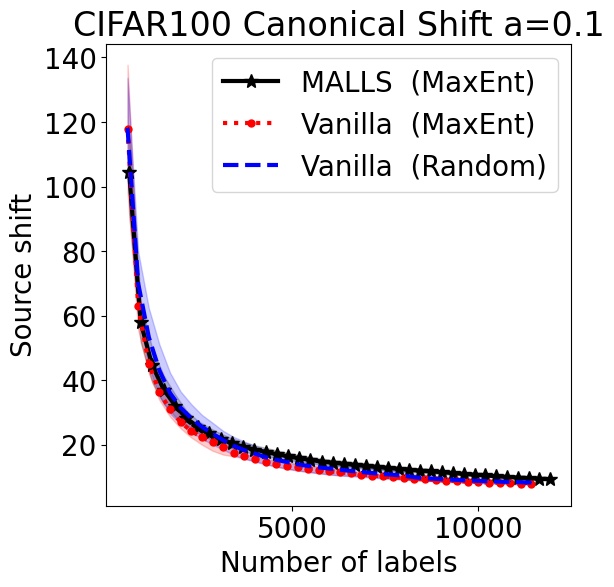}&
           \includegraphics[height=4cm]{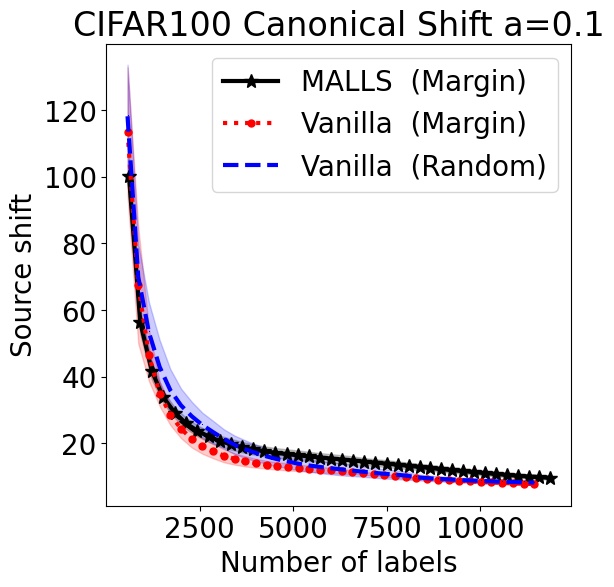}
       \end{tabular}
         \caption{ Average L2 distance between labeled class distribution and uniform/target distribution with 95\% confidence intervals on 10 runs of experiments on CIFAR100 in the \textit{canonical label shift} setting.
         MALLS (denoted by ALLS) converges to the target label distribution slower than vanilla active learning but with a similar uniform sampling bias.
         This suggests MALLS leverages a sampling bias different from that of vanilla active learning or naive class-balanced sampling.
         }
         \label{fig:dists}
    \end{figure}

\section{Experiment Details}
We list our detailed experimental settings and hyperparameters which are necessary for reproducing our results.
Across all experiments, we use a stochastic gradient descent (SGD) optimizer with base learning rate $0.1$, finetune learning rate $0.02$, momentum rate $0.9$ and weight decay $5\mathrm{e}{-4}$.
We also share the same batch size of $128$ and RLLS \cite{azizzadenesheli_regularized_2019} regularization constant of $2\mathrm{e}{-6}$ across all experiments.
As suggested in our analysis, we employ a uniform medial distribution to achieve a balance between distance to the target and distance to the source distributions.
For computational efficiency, all experiments are conducted with minibatch-mode active learning.
In other words, rather than retraining models upon each additional label, multiple labels are queried simultaneously.
Table \ref{tab:stats} lists the specific hyperparameters for each experiment, categorized by dataset.
Table \ref{tab:settings} lists the specific parameters of simulated label shifts (if any) created for individual experiments.
Figure numbers reference figures in the main paper and appendix.
``Dir'' is short for Dirichlet distribution, ``Inh'' is short for inherent distribution, and ``Uni'' is short for uniform distribution.

\begin{table}[ht]
    \centering
    \begin{center}
     \begin{tabular}{||c c c c c c ||} 
     \hline
     Dataset & Model & \# Datapoints & Epochs (init/fine) & \# Batches & \# Classes \\ [0.5ex] 
     \hline\hline
     NABirds1 & Resnet-34 & 30,000 & 60/10 & 20 & 21\\
     \hline
     NABirds2 & Resnet-34 & 30,000 & 60/10 & 20 & 228 \\
     \hline
     CIFAR & Resnet-18 & 40,000 & 80/10 & 40 & 10 \\
     \hline
     CIFAR100 & Resnet-18 & 40,000 & 80/10 & 40 & 100 \\
     \hline
    \end{tabular}
    \end{center}
    \caption{Dataset-wide statistics and parameters}
    \label{tab:stats}
\end{table}

\begin{table}[ht]
    \centering
    \begin{center}
     \begin{tabular}{||c c c c c c c ||} 
     \hline
     Figure & Dataset & Warm Ratio & Source Dist & Target Dist & Canonical? & Dirichlet $\alpha$ \\ [0.5ex] 
     \hline
     \hline
     \ref{fig:alls}(a) & MNIST & 0.1 & Dir & Dir & Yes & 0.1 \\
     \hline
     \ref{fig:alls}(b) & CIFAR & 0.4 & Dir & Dir & Yes & 0.4 \\
     \hline
     \ref{fig:big}(a-b) & CIFAR100 & 0.4 & Dir & Dir & Yes & 0.1 \\
     \hline
     \ref{fig:big}(c-d) & NABirds1 & 1.0 & Inh & Inh & No & N/A \\
     \hline
     \ref{fig:cifar}(a-b) & CIFAR & 0.3 & Dir & Dir & Yes & 0.7 \\
     \hline
     \ref{fig:cifar}(c) & CIFAR & 0.3 & Dir & Dir & Yes & 0.7 \\
     \hline
     \ref{fig:cifar}(d) & CIFAR100 & 0.4 & Dir & Dir & Yes & 0.1 \\
     \hline
     \ref{fig:alphas}(a) & CIFAR100 & 0.4 & Dir & Dir & Yes & 3.0 \\
     \hline
     \ref{fig:alphas}(b) & CIFAR100 & 0.4 & Dir & Dir & Yes & 0.7 \\
     \hline
     \ref{fig:alphas}(c) & CIFAR100 & 0.4 & Dir & Dir & Yes & 0.4 \\
     \hline
     \ref{fig:alphas}(d) & CIFAR100 & 0.4 & Dir & Dir & Yes & 0.1 \\
     \hline
     \ref{fig:ablation}(a) & CIFAR100 & 0.4 & Dir & Uni & No & 1.0 \\
     \hline
     \ref{fig:ablation}(b) & CIFAR100 & 0.3 & Uni & Dir & No & 0.1 \\
     \hline
     \ref{fig:ablation}(c-d) & CIFAR100 & 0.4 & Dir & Dir & Yes & 0.1 \\
     \hline
     T\ref{tab:nabirds}(g-i) & NABirds1 & 1.0 & N/A & Dir & No & 0.1 \\
     \hline
     \ref{fig:dists} & CIFAR100 & 0.4 & Dir & Dir & Yes & 0.1 \\
     \hline
    \end{tabular}
    \end{center}
    \caption{Label Shift Setting Parameters (in order of paper)}
    \label{tab:settings}
\end{table}

\end{document}